\documentclass{article}

\usepackage[preprint]{neurips_2026}
\usepackage{amsmath}
\usepackage{amssymb}
\usepackage{mathtools}
\usepackage{amsthm}
\usepackage{microtype}
\usepackage{graphicx}
\usepackage{subcaption}
\usepackage{booktabs} % for professional tables

\usepackage{mathrsfs}
\usepackage{amssymb}
\usepackage{bm}
\usepackage{amsmath}
\usepackage{dsfont}
\usepackage{booktabs}
\usepackage{epstopdf}
\usepackage{multirow}
\usepackage{comment}
\usepackage{endnotes}
\usepackage{hyperref}

\usepackage{bibentry}
\usepackage[normalem]{ulem}
\usepackage{graphicx}
\usepackage{enumerate}
\usepackage{enumitem}
\usepackage{subcaption}
\usepackage{etoc}
\usepackage{xcolor,xspace}
\definecolor{mydarkred}{rgb}{0.6,0,0}
\definecolor{myblue}{HTML}{268BD2}
\usepackage{inconsolata}
\usepackage[table]{xcolor}
\usepackage{hyperref}
\hypersetup{
  colorlinks=true,
  linkcolor=mydarkred,
  citecolor=myblue
}
\usepackage{multirow}
\usepackage[capitalize,noabbrev]{cleveref}
\newcommand{\ourmodel}{{\texttt{\textbf{MemoType}}}\xspace}
\newcommand{\ourdata}{{\texttt{\textbf{TriMEM}}}\xspace}
\theoremstyle{plain}
\newtheorem{theorem}{Theorem}[section]

\theoremstyle{definition}

\newtheorem{assumption}[theorem]{Assumption}
\theoremstyle{remark}
\newtheorem{remark}[theorem]{Remark}

\usepackage[utf8]{inputenc} % allow utf-8 input
\usepackage[T1]{fontenc}    % use 8-bit T1 fonts
\usepackage{hyperref}       % hyperlinks
\usepackage{url}            % simple URL typesetting
\usepackage{booktabs}       % professional-quality tables
\usepackage{amsfonts}       % blackboard math symbols
\usepackage{nicefrac}       % compact symbols for 1/2, etc.
\usepackage{microtype}      % microtypography
\usepackage{xcolor}         % colors
\usepackage{graphicx}
\usepackage{hyperref}
\usepackage{wrapfig}
\usepackage{thmtools}
\usepackage{thm-restate}

\usepackage{tcolorbox}
\usepackage{xcolor}
\usepackage{fontawesome5}
\usepackage{enumitem}
\tcbuselibrary{skins, breakable}
 
\definecolor{promptbg}{HTML}{F5F7FA}
\definecolor{promptframe}{HTML}{3B6FA0}
\definecolor{prompttitle}{HTML}{FFFFFF}
 
\newtcolorbox{promptbox}[1]{
    enhanced,
    breakable,
    colback=promptbg,
    colframe=promptframe,
    colbacktitle=promptframe,
    coltitle=prompttitle,
    fonttitle=\bfseries\sffamily,
    title={\faRobot\quad #1},
    arc=3mm,
    boxrule=0.8pt,
    left=10pt, right=10pt, top=8pt, bottom=8pt,
    toptitle=4pt, bottomtitle=4pt,
    shadow={1mm}{-1mm}{0mm}{black!15},
    attach boxed title to top left={yshift=-2mm, xshift=5mm},
    boxed title style={sharp corners, size=small}
}
\newcommand{\phl}[1]{\textcolor{promptframe}{#1}}
 
\newtcolorbox{promptcode}{
    enhanced,
    colback=white,
    colframe=promptframe!30,
    boxrule=0.4pt,
    arc=1.5mm,
    left=8pt, right=8pt, top=4pt, bottom=4pt
}
 
\newenvironment{promptrules}{
    \begin{enumerate}[leftmargin=1.5em, itemsep=2pt, label=\textcolor{promptframe}{\arabic*.}]
}{
    \end{enumerate}
}

\title{Memory Type Varies: Empowering LLM Agents for Long-Term Memory with Diverse Strategies}

\author{%
  Yi Wen$^{1}$, Derong Xu$^{1}$, Pengyue Jia$^{1}$, Yichao Wang$^{2}$ , Yingyi Zhang$^{1}$, Maolin Wang$^{1}$, \\
  \textbf{Junyi Li$^{1}$, Wenlin Zhang$^{1}$, Xiaopeng Li$^{1}$, Yong Liu$^{2}$, Xiangyu Zhao$^{1}$} \\
  $^{1}$City University of Hong Kong,  $^{2}$Huawei Noah's Ark Lab \\
  \texttt{yiwen23-c@my.cityu.edu.hk}, 
  \texttt{wangyichao5@huawei.com}, 
  \texttt{xianzhao@cityu.edu.hk}
}

\begin{document}

\maketitle

\begin{abstract}
 The memory capabilities of Large Language Models (LLMs) have garnered increasing attention recently. Despite great success achieved, existing retrieval-based memory approaches typically overlook the differences between memories and employ a unified strategy to process all memories, leading to suboptimal performance. Thus, an intuitive question arises: can we categorize memory into different types and select appropriate strategies? However, given the topic-rich, scenario-complex, and boundary-blurred nature of memory scenarios, achieving precise classification of memories is not easy. To address this challenge, we propose a memory multi-class dataset in this paper, termed \ourdata, which provides precise annotations for memory types across diverse scenarios. Building upon this foundation, we propose a novel memory framework, named \ourmodel, which can adaptively recognize each memory and query type with the learned router model. With the memory and query routing, MemoType can retrieve the memory with corresponding query types and design tailored retrieval strategies, thereby enhancing the retrieval performance. Moreover, we theoretically prove that any single retrieval strategy is subject to a fundamental upper bound on its expected query-document alignment in multi-class corpora, leading to systematic performance degradation. Extensive experiments on three datasets demonstrate that MemoType consistently outperforms existing methods, achieving up to 16.18\% improvement in Recall@1\footnote{ \url{https://github.com/wenyiwy99/MemoType}.}.
\end{abstract}

\section{Introduction}

    Large Language Models (LLMs) have demonstrated remarkable abilities in natural language understanding and multi-turn dialogue fields, garnering increasing attention. However, with the growing frequency of User-LLM interactions, the limited context window length of LLMs fails to meet the demand for scalable conversational interactions~\citep{tim}. When presented with retrospective queries, LLMs cannot effectively utilize conversation memories to achieve personalized responses, resulting in the ``Forgetting'' problem. To address this issue, Retrieval-Augmented Generation (RAG)-based memory techniques have emerged as a practical solution. Instead of concatenating all conversations, RAG-based approaches store conversations in an external bank for selective retrieval, which enables the LLM to recall critical memories and achieve personalized responses.

    RAG-based memory strategies~\citep{a-mem, memochat} can be broadly classified into unstructured and structured types, based on whether to construct additional memory associative information. Unstructured memory augmentation approaches typically store dialogue fragments or summaries, retrieved through similarity matching with the given query. An early and influential effort is the MemoryBank~\citep{memorybank}, which maintains a user-specific memory and retrieves facts with relevance scoring to produce appropriate responses. On this basis, SeCom~\citep{secom} employs adaptive memory segmentation techniques to achieve superior performance. Structural memory augmentation strategies organize memories into meaningful relationships, thereby achieving stronger sense-making and richer associations. For instance, HippoRAG~\citep{hipporag} builds a graph and uses personalized pageRank to surface associative passages. Structural memory augmentation strategies deepen memory connections while inevitably introducing additional overhead.

    Despite great success, the above methods still suffer certain drawbacks. In the retrieval phase, most approaches overlook the structural and content differences between memories and employ a unified strategy to process all memories, leading to suboptimal performance. For instance, event-related memories often contain fragmented information such as persons, time, and location, preventing regular embedding-based similarity from achieving precise retrieval. Conversely, when memories pertain only to basic concepts, introducing details like time or persons may introduce noise into the retrieval process. Thus, an intuitive question arises: 
\definecolor{cvprblue}{rgb}{0.21,0.49,0.74}
\begin{tcolorbox}[colframe=black!50, colback=cvprblue!8, boxrule=1.5pt, arc=2mm, top=4pt, bottom=4pt, left=4pt, right=4pt,  boxsep=1pt]
\raisebox{-0.2\baselineskip}{\includegraphics[height=1\baselineskip]{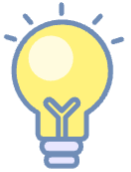}}
\textit{Can we categorize memory into different types and select tailored strategies for each type?}
\end{tcolorbox}

However, addressing the aforementioned issues presents two major obstacles: i) \textit{Existing memory systems lack a unified classification framework}. Developing a comprehensive scheme that adequately captures the diversity of real conversation scenarios remains a significant challenge.\footnote{In this paper, the memory we discuss primarily refers to dialogue memory, which mainly encompasses two categories: user-to-user conversations and user-to-assistant conversations. This differs from the experience memory in agent fields. We will not repeat this explanation in subsequent sections.}. ii) \textit{Precise classification is inherently difficult given the topic-rich, scenario-complex, and boundary-blurred nature of memory scenarios}. Although some attempts have been conducted, most research~\citep{mms,mirix} relies on human-crafted categories or adopts basic prompt-based methods to determine memory types, making it difficult to adapt to complex memory scenarios to achieve accurate classification, thus failing to acquire optimal results. 

\begin{figure*}[t]
    \centering
    \begin{subfigure}[t]{0.45\linewidth}
        \centering
        \includegraphics[width=\linewidth]{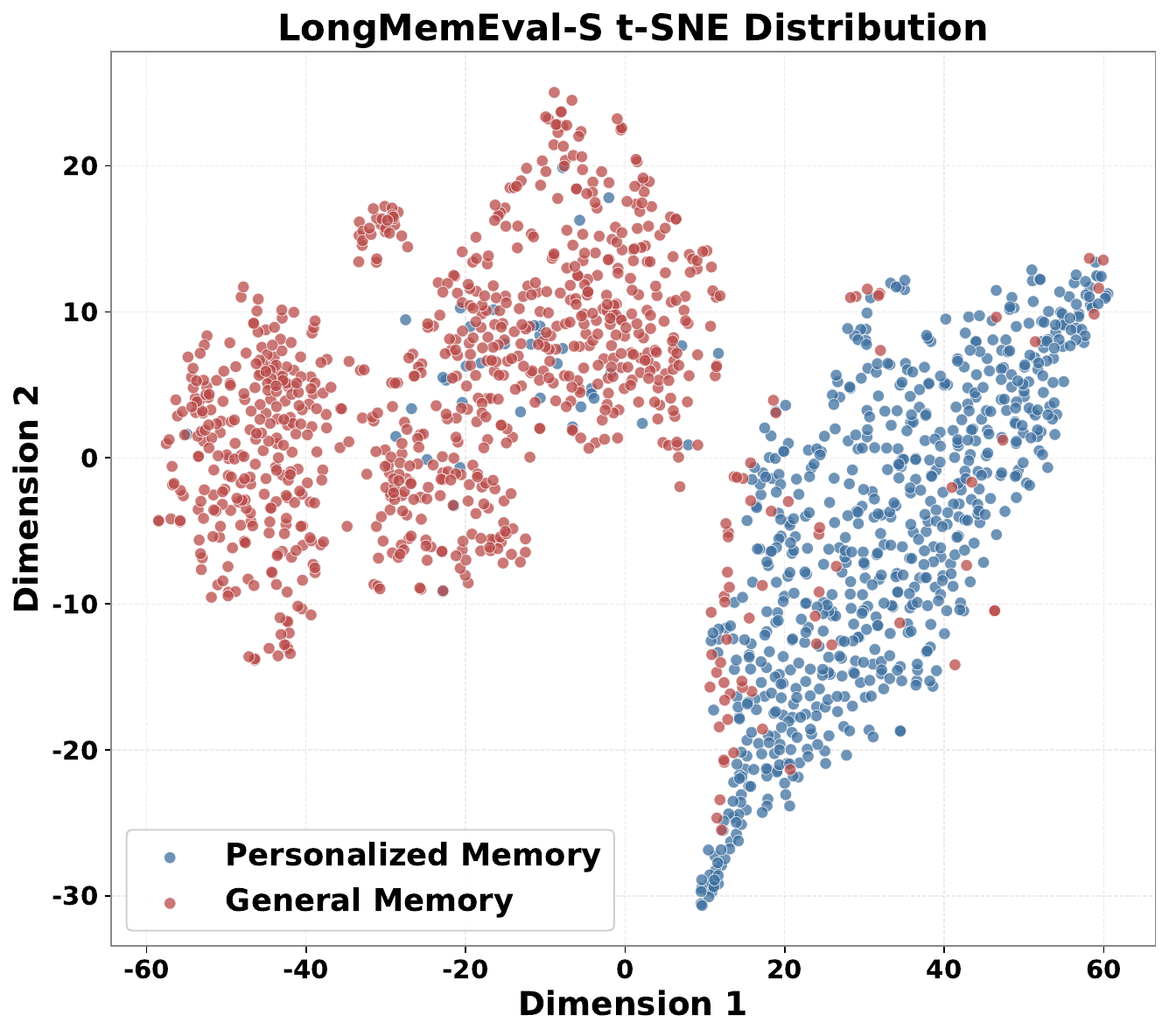}
        \caption{The t-SNE distribution of LongMemEval-S.}
        \label{fig:longmem_s_tsne}
    \end{subfigure}
    \begin{subfigure}[t]{0.5\linewidth}
        \centering
        \setlength{\abovecaptionskip}{5.5pt}
        \includegraphics[width=\linewidth]{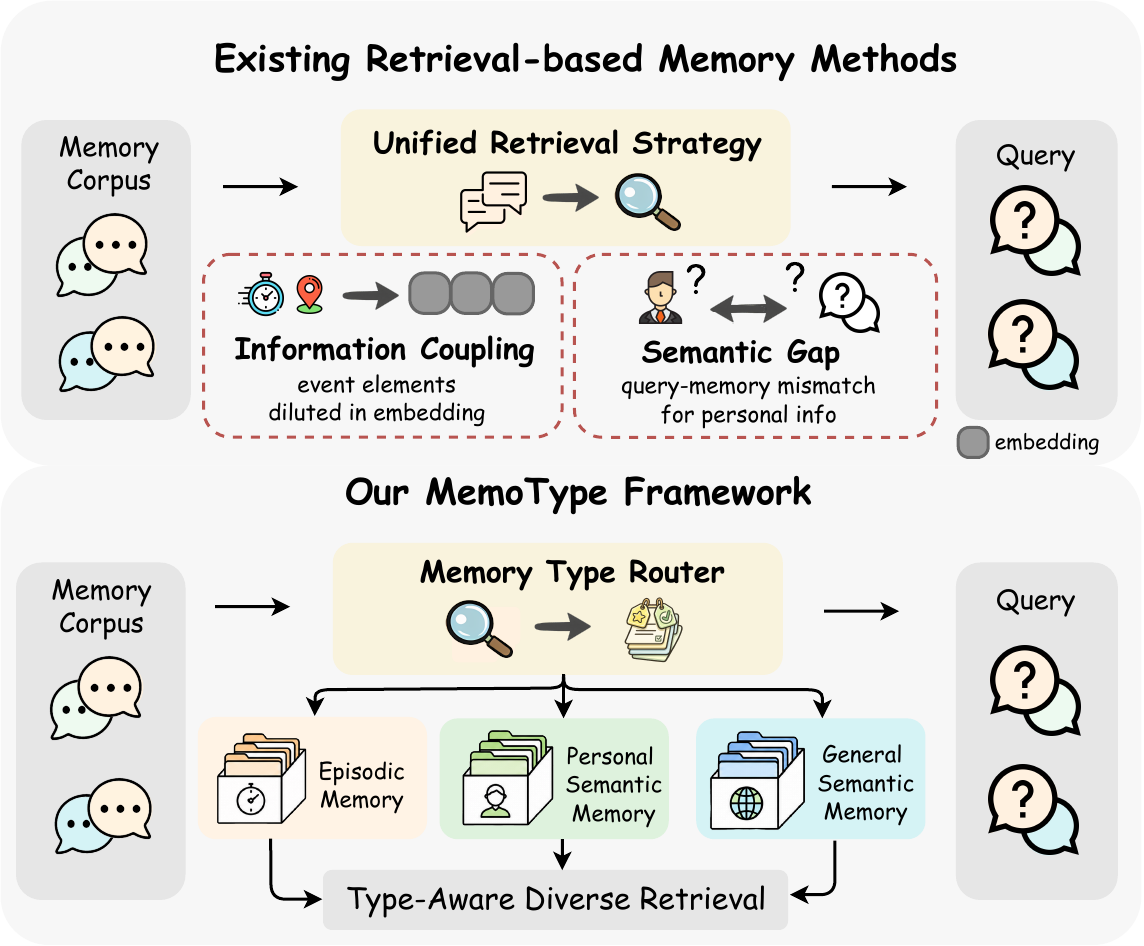}
        \caption{Existing unified retrieval vs. MemoType framework.   }
        \label{fig:motivation}
    \end{subfigure}
    \caption{Evidence and Motivation: Semantic Clustering of memory and the \ourmodel framework.}
    \vspace{-1.5 em}
\end{figure*}

% \begin{wrapfigure}{r}{0.5\textwidth}

% \centering
%     \includegraphics[width=\linewidth]{fig/LongMemEval-S_TSNE.pdf}
%     \caption{\small The t-SNE Distribution of LongMemEval-S Dataset.}%
%     \label{longmem_s_tsne}
%     \vspace{-5pt}
% \end{wrapfigure}
To tackle these challenges, we first draw on Tulving’s distinction theory~\citep{episodic_semantic} in cognitive psychology~\citep{cognitive_psychology}, which distinguishes memory into episodic and semantic components. Episodic memory records events tied to specific time, place, and participants, whereas semantic memory captures facts and stable knowledge. However, this coarse dichotomy is insufficient for real-world dialogue. Event-centric discussions are relatively rare. For instance, in the LongMemEval-S dataset, episodic memory accounts for only 12.35\% of instances, making semantic memory overwhelmingly dominant and poorly resolved. This observation motivates a finer-grained semantic classification. Surprisingly, the t-SNE visualizations of LongMemEval, as shown in Figure \ref{fig:longmem_s_tsne}, reveal a pronounced semantic separation within semantic memory: the personal content and general knowledge form two distinct clusters with a clear semantic gap. Guided by this evidence, we introduce a well-defined categorization principle consisting of Episodic Memory, Personal Semantic Memory, and General Semantic Memory.

Inspired by the above classification principle, we construct a memory multi-class dataset, termed \ourdata, a memory multi-class dataset comprising 6,000 samples with fine-grained type annotations across diverse topics and scenarios. With the help of TriMEM, we propose a novel memory framework, named \ourmodel. As shown in Figure~\ref{fig:motivation}, MemoType can adaptively recognize each memory and query type with the learned router model. With the memory and query routing, MemoType can retrieve the memory with corresponding query types and design tailored retrieval strategies, thereby enhancing the retrieval in both effectiveness and efficiency. Besides, we design a memory pruning module that adaptively prunes retrieved memories based on queries, reducing irrelevant information. Moreover, we provide a theoretical analysis showing that any single retrieval strategy is subject to a fundamental upper bound on its expected query-document alignment in multi-class corpora, leading to systematic performance degradation.  Our contributions are as follows:
\vspace{-0.5 em}
\begin{itemize} [leftmargin=*]
\item Inspired by Tulving’s distinction theory, we propose a novel categorization principle and construct \ourdata, a memory multi-class dataset with accurate annotations spanning diverse topics, providing a solid foundation for memory-type classification in memory field. 
\vspace{-0.5 em}
\item We propose \ourmodel, a novel memory framework that adaptively recognizes memory and query types via a learned router, applies tailored retrieval strategies for each type, and further prunes irrelevant information from retrieved memories through a memory pruning module.
\vspace{-0.5 em}
\item We theoretically prove that any single retrieval strategy has a fundamental upper bound on its expected query-doc alignment in multi-class corpora, leading to systematic performance degradation.
\end{itemize}
\vspace{-0.5 em}

\vspace{-0.5em}
\section{MemoType}
\vspace{-0.5em}

\subsection{PRELIMINARY}
    Denote $\mathcal{M} = \{ \mathcal{S}_i \}_{i=1}^S$ as the stored conversation history between two participants, where the dialogue may occur between users or between a user and an assistant. $S$ indicates the number of sessions, $\mathcal{S}_i = \{ d_l \}_{l=1}^{S_i}$ denotes the $i$-th session, comprising $S_i$ sequential dialogue. Denote the retrieval function as $f_R$ and the response generation function as $f_{G}$. The research framework process can be described as follows: (1) \textit{Memory Construction}: Build a memory repository $\mathcal{B}$ from $\mathcal{M}$. (2) \textit{Memory Retrieval}: Given a user query $q$ and a retrieval repository $B$, retrieve $n$ memory turns $\{t_i \}_{i=1}^n \subseteq \mathcal{B}$ relevant to $q$ using the function $f_R(q, \mathcal{B}, n)$. (3) \textit{Response Generation}: Employ the retrieved memory turns $\{ t_i\}_{i=1}^n$ and user query $q$ to generate the output $r$ using $f_{G}(q, \{ t_i \}_{i=1}^n)$.

\vspace{-0.5em}

\subsection{TriMEM Dataset}
As mentioned in the previous sections, different types of memories require tailored strategies to optimize performance. A natural idea is to categorize memories into distinct types and apply appropriate strategies accordingly. However, due to the topic-rich, scenario-complex, and boundary-blurred nature of memory scenarios, achieving precise classification is a challenging task. While some preliminary~\citep{mirix, mms} efforts have been made in memory categorization, most existing approaches rely on basic prompt-based methods to determine memory types, while we verify that prompt-based methods fail to adapt to complex memory scenarios to achieve accurate classification in Table~\ref{trimem_class} in the Appendix~\ref{app:trimem_evaluation}. 

Existing memory classification efforts remain limited. The most relevant work, PerLTQA~\citep{perltqa}, provides an initial taxonomy but exhibits key shortcomings that limit its practical applicability: it assigns only a single label per memory, overlooking the multi-topic nature of real conversations; its semantic memory category covers only user-related content while neglecting general world knowledge; and it uniformly treats all dialogues as episodic memory, conflating the dialogue form with its content type. These limitations motivate the need for a more comprehensive and fine-grained classification benchmark grounded in real-world dialogue scenarios.

To this end, we introduce \ourdata, a memory multi-class dataset of 6,000 samples drawing on Tulving's distinction theory~\citep{cognitive_psychology}. 
TriMEM provides fine-grained type annotations: \textit{Episodic Memory} for event-specific recollections, \textit{Personal Semantic Memory} for user-related knowledge, and \textit{General Semantic Memory} for factual world knowledge, across both user-user and user-assistant scenarios. The detailed classification criteria are provided in Table~\ref{class_reason}, and the dataset statistics are presented in Figure~\ref{statistic} in the Appendix. Due to the space limit, further details about \ourdata construction and evaluation can be found in Appendix~\ref{app:dateset}.
\vspace{-0.5em}

\begin{figure*}
    \centering
\includegraphics[width=1\linewidth]{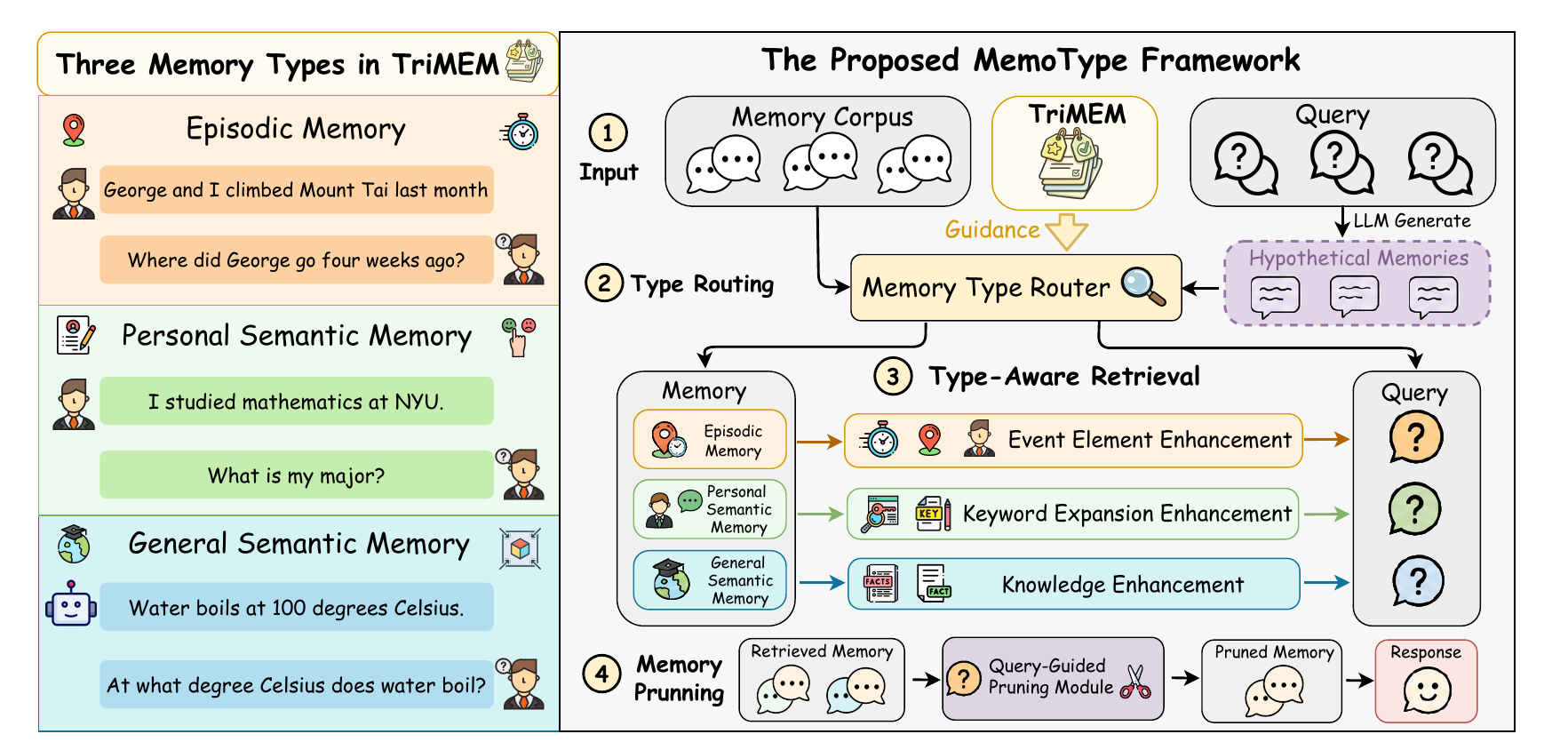}
    \caption{Three Memory Types in \ourdata (Left) and the proposed \ourmodel Framework (Right).}
    \label{framework}
    \vspace{-1em}
\end{figure*}

\subsection{Type Router}
\label{sec:router}
\vspace{-0.2em}

\textbf{Memory Type Router.} With the assistance of the \ourdata dataset, we fine-tune a lightweight Qwen3-1.7B model with LoRA as the memory type router, formulated as a multi-label classification task to accommodate the multi-topic nature of real-world dialogues. Compared to prompt-based classification with large-scale LLMs, our fine-tuned router achieves superior accuracy (Table ~\ref{trimem_class} in Appendix~\ref{app:trimem_evaluation}). Notably, as shown in Table~\ref{table:bert_ablation} in Appendix~\ref{app:router_architecture}, our experiments show that even lightweight encoders such as BERT can achieve competitive routing performance, indicating that the dataset's quality, rather than model scale, is the key enabler for accurate memory classification.
% \begin{equation}
% \mathcal{L} = - \frac{1}{N} \sum_{i=1}^{N} \sum_{c=1}^{C} \left[ y_{i,c} \cdot \log \sigma(z_{i,c}) + (1 - y_{i,c}) \cdot \log (1 - \sigma(z_{i,c})) \right],
% \end{equation}
% \begin{equation}
% \begin{split}
% \mathcal{L} = - \frac{1}{N} \sum_{i=1}^{N} \sum_{c=1}^{C} \Big[ & y_{i,c} \cdot \log \sigma(z_{i,c})  + (1 - y_{i,c}) \cdot \log (1 - \sigma(z_{i,c})) \Big],
% \end{split}
% \end{equation}
% where  $N$ is the number of samples, $C$ is the classes number. $y_{i,c}$ represents the ground truth label for sample \(i\) and class \(c\), where \(y_{i,c} \in \{0, 1\}\).  \(\sigma(\cdot)\) is the sigmoid activation function.

\textbf{Query Type Router.} 
 Following memory classification, a challenge emerges in achieving accurate query routing. Direct query classification is unreliable due to the semantic asymmetry between queries and memories. Queries are typically short and underspecified (e.g., "What is my major?"), while memories contain richer information, making it difficult to classify queries directly into types.

To bridge this gap, we propose to transform queries into the memory space. 
Specifically, given a query $q$, we generate $K$ \textit{hypothetical memories}~\citep{hyde} that the query might retrieve:
\begin{equation}
\{\tilde{m}_i\}_{i=1}^{K} = \mathcal{G}_{\text{LLM}}(q; \mathcal{P}), \quad \tilde{m}_i \in \mathcal{M},
\label{eq:hyp_gen}
\end{equation}
where $\mathcal{G}_{\text{LLM}}$ denotes the hypothetical memory generation function, 
$\mathcal{P}$ is the generation prompt, 
and $\mathcal{M}$ represents the memory space. The generated hypothetical memories $\{\tilde{m}_i\}_{i=1}^{K}$ are subsequently classified by the pre-trained memory router $f_{\phi}$, 
and the query type is determined as the most frequent category among the $K$ predictions.

\textbf{Type-Aware Memory Filtering.} 
With the memory router and query type router established, MemoType enables \textit{type-aware retrieval} by constraining the search space based on the predicted types of hypothetical memories.
Specifically, let $\mathcal{M} = \bigcup_{c \in \mathcal{C}} \mathcal{M}_c$ denote the full memory corpus partitioned by category. 
Given the $K$ hypothetical memories $\{\tilde{m}_i\}_{i=1}^{K}$ generated for query $q$, 
we first obtain their predicted types $\{c_i = f_{\phi}(\tilde{m}_i)\}_{i=1}^{K}$ and construct the candidate category set: $\mathcal{C}_q = \{c_i \mid i = 1, \ldots, K\}$. The retrieval is then performed as follows:
\begin{equation}
\small
\mathcal{R}(q) = \text{Retrieve}\left(q, \bigcup_{c \in \mathcal{C}_q} \mathcal{M}_c\right).
\label{eq:filter}
\end{equation}
This filtering strategy significantly reduces the retrieval space, eliminating approximately 50\% of unrelated memories across various datasets. We set K=5 to balance coverage and robustness.

\subsection{Type-Specific Retrieval Strategies}
\label{sec:type_specific}
As previously mentioned, most existing approaches employ a single strategy to process all memories, resulting in suboptimal performance. Therefore, in this subsection, we will adopt different strategies for different memory types to enhance the final performance.

\textbf{Episodic Memory.}
Episodic memories are inherently compositional, encoding structured elements such as temporal information, participants, and other event elements.
When encoded as monolithic units, these heterogeneous elements are compressed into a single embedding vector, causing fine-grained cues to be overshadowed by the dominant event semantics.
This observation motivates a decompose-then-align strategy: rather than matching at the holistic level, we explicitly decompose both queries and memories into typed elements and compute element-wise alignment.

Formally, let $\mathcal{N} = \{T, P, L, E\}$ denote the temporal information, participants, location, and event context, respectively.
For a query $q$, the LLM extracts elements $\mathcal{E}_q = \{e_t\}_{t \in \mathcal{N}}$.
The relevance score between query $q$ and the $i$-th memory $m^i$ is then computed as:
\begin{equation}
    s_{\text{EM}}(q, m^i) =  \text{sim}(q, m^i) +   \sum_{t \in \mathcal{N}} \text{sim}(e_t, m^i_t),
\end{equation}
where $m^i_t$ is the event element extracted from the $i$-th memory $m^i$.

We note that episodic memory, which often requires temporal reasoning, constitutes the most challenging category among the three types. While element decomposition introduces additional computational cost, the routing mechanism (\S\ref{sec:router}) restricts this strategy to the episodic subset, avoiding the corpus-wide structural expansion, which substantially amortizes the per-query overhead.

\textbf{Personal Semantic Memory.} The core challenge of PM retrieval lies in the semantic asymmetry between queries and memories.
Queries targeting personal knowledge tend to be underspecified
(e.g., ``What is your occupation?''), while the corresponding
memories contain substantially richer context.
This asymmetry makes it difficult to generate faithful hypothetical
memories from the query side.

We therefore reverse the expansion direction: instead of enriching
the query, we enrich each memory $m_i$ by generating $K$
hypothetical queries
$\{\tilde{q}_j\}_{j=1}^{K} = \mathcal{G}_{\text{LLM}}(m_i)$
that the memory could plausibly answer.
Since the memory carries richer context, the generated questions
are more faithful and diverse.
The relevance between query $q$ and memory $m_i$ is measured as:
\begin{equation}
s_{\text{PM}}(q, m_i) = \max_{j \in \{1, \ldots, K\}}
\text{sim}(q, \tilde{q}_j).
\end{equation}
The final ranking fuses $s_{\text{PM}}$ with the original
query-memory similarity via Reciprocal Rank
Fusion.

 \textbf{General Semantic Memory.} 
Unlike the preceding two types, general semantic memory primarily
encompasses factual world knowledge whose expression is relatively
standardized (e.g., ``Water boils at 100°C'').
This regularity enables LLMs to generate high-quality hypothetical
memories that closely approximate the actual memory content,
making query-side expansion effective without additional structural
decomposition or direction reversal.
We therefore adopt the hypothetical memory strategy and aggregate
the retrieval results from both the original query and the
hypothetical memory via Reciprocal Rank
Fusion~\citep{rrf}.

\subsection{Memory Pruning Module}
Retrieved memory segments are typically multi-turn dialogue
excerpts that span multiple topics within a single segment.
As a result, even when a segment is correctly retrieved,
it often contains substantial content unrelated to the query,
which may mislead the generator and degrade response quality.
This issue is more pronounced in dialogue memory than in
standard document retrieval, where passages tend to be
topically coherent.

To mitigate this, we introduce a \textit{Query-Conditioned
Memory Pruning} module that filters retrieved content before
generation. Let $\mathcal{M}_q = \{m_1, m_2, \ldots, m_n\}$
denote the retrieved memory segments for query $q$.
The pruning operation selects a relevant subset via
query-conditioned assessment:
\begin{equation}
\mathcal{M}^p_q = \bigl\{ m_i \in \mathcal{M}_q
\mid \mathcal{J}_{\text{LLM}}(m_i, q) = 1 \bigr\},
\end{equation}
where $\mathcal{J}_{\text{LLM}}(m_i, q)$ is a
binary relevance judgment, indicating
whether memory $m_i$ contains information pertinent to $q$.
The pruned memory $\mathcal{M}^p_q$ is then utilized for
downstream response generation. 

\vspace{-0.5em}

\section{Theoretical Analysis}

In this section, we provide a theoretical analysis demonstrating that single embedding retrieval methods face limitations when the corpus exhibits semantic heterogeneity.

\textbf{Problem Setup}. Consider a corpus $\mathcal{D} = \{d_1, d_2, \ldots, d_N\}$ where documents are drawn from $C$ distinct latent distributions $\mathcal{D}_1, \ldots, \mathcal{D}_C$, with categories $S_1, \ldots, S_C$ respectively. In this section, we consider a unified embedding-based retrieval strategy, where a shared encoder $\phi$ maps queries and documents to a $d$-dimensional space, and documents are ranked by inner product similarity $\langle \phi(q), \phi(d) \rangle$. Without loss of generality, we assume all embeddings $\phi(\cdot)$ are $\ell_2-$normalized, as retrieval depends only on the relative ranking. For category $S_i$, we denote its embedding centroid as $\mu_i = \mathbb{E}_{d \sim \mathcal{D}_i}[\phi(d)]$ and its intra-category standard deviation as $\sigma_i$.

To formalize corpus heterogeneity, we define the angular separation between categories $S_i$ and $S_j$ as:
$$\delta_{ij} = 1 - \langle v_i^*, v_j^* \rangle,$$
where $v_i^* = \mu_i / \|\mu_i\|$ is the direction of $S_i$'s centroid.

\begin{assumption}
\label{assump:separation}
There exists a constant $\delta > 0$ such that $\delta_{ij} \geq \delta$ satisfies for all distinct category pairs $i \neq j$.
\end{assumption}

\begin{remark}
    This assumption requires only that different categories have non-identical centroid directions. When categories are semantically distinct or form clusters, this condition is naturally satisfied. Figure~\ref{fig:longmem_s_tsne} and Figure~\ref{fig:semantic_clustering} in the Appendix validate the rationality of the assumption.
\end{remark}

\begin{restatable}[Upper Bound for Single Retrieval]{theorem}{precisionthm}
\label{thm:precision}
Let $\mathcal{D}$ satisfy Assumption~\ref{assump:separation} with angular separation $\delta > 0$. Assume documents are uniformly distributed across $C$ categories. Then, for any retrieval encoder $\phi$, the expected query-document similarity has an upper bound:
$$\mathbb{E}_{q \sim \mathcal{Q}, d \sim \mathcal{D}}[\langle e_{query}, e_{doc} \rangle] \leq 1 - \frac{(C-1)\delta}{2C},$$
where $e_{query} = \phi(q)/\|\phi(q)\|$ and $e_{doc} = \phi(d)/\|\phi(d)\|$.
\end{restatable}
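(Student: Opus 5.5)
The plan is to reduce the expected similarity to an average of inner products between the query direction and the $C$ centroid directions. I would then bound that average using only the pairwise separations, via a "sum of squared distances to a point" identity that holds for every unit vector $e_{query}$. Because the resulting bound holds pointwise in $q$, it survives the outer expectation over $q\sim\mathcal{Q}$, and it involves no property of $\phi$ beyond normalization. This is what makes the statement hold "for any encoder".

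\textbf{Step 1 (reduction to centroids).} Fix a query $q$ and write $x=e_{query}$, a unit vector. Documents are uniform over categories, so linearity gives
$$\mathbb{E}_{d}[\langle x, e_{doc}\rangle]=\frac1C\sum_{i=1}^C\langle x,\mu_i\rangle .$$
This step needs $\mu_i$ to be the mean of the normalized embeddings, which is consistent with the setup's normalization convention. I then need to pass from $\mu_i$ to $v_i^*$. This is the step I expect to be the main obstacle, because $\langle x,\mu_i\rangle=\|\mu_i\|\langle x,v_i^*\rangle$ and $\|\mu_i\|\le 1$. That gives $\langle x,\mu_i\rangle\le\langle x,v_i^*\rangle$ only when $\langle x,v_i^*\rangle\ge 0$.
\begin{itemize}
\item What I would try first is to work in the concentrated regime implicit in the setup: $\|\mu_i\|=1-O(\sigma_i^2)$, so that $\mu_i\approx v_i^*$, and state the bound at the level of centroid directions.
\item Failing that, I would clip the terms: $\langle x,\mu_i\rangle\le\max(0,\langle x,v_i^*\rangle)$.
\end{itemize}
If I take the clipping route, I would then check whether the argument below still goes through when restricted to the categories with nonnegative alignment.

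\textbf{Step 2 (the key identity).} Since $x$ and every $v_i^*$ are unit vectors, $\langle x,v_i^*\rangle=1-\tfrac12\|x-v_i^*\|^2$. Therefore
$$\frac1C\sum_i\langle x,v_i^*\rangle = 1-\frac{1}{2C}\sum_i\|x-v_i^*\|^2 ,$$
and the task becomes a lower bound on $\sum_i\|x-v_i^*\|^2$ that is uniform in $x$.

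The function $x\mapsto\sum_i\|x-v_i^*\|^2$ is minimized over all of $\mathbb{R}^d$, and hence in particular on the sphere, at the mean $\bar v=\frac1C\sum_i v_i^*$. Its minimum value satisfies the standard variance identity
$$\sum_i\|v_i^*-\bar v\|^2=\frac{1}{2C}\sum_{i,j}\|v_i^*-v_j^*\|^2 .$$

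\textbf{Step 3 (use of Assumption~\ref{assump:separation}).} For $i\neq j$ we have $\|v_i^*-v_j^*\|^2=2-2\langle v_i^*,v_j^*\rangle=2\delta_{ij}\ge 2\delta$. Summing over the $C(C-1)$ ordered pairs gives
$$\sum_i\|x-v_i^*\|^2\ \ge\ \frac{1}{2C}\cdot C(C-1)\cdot 2\delta=(C-1)\delta .$$
Substituting this into Step 2 yields
$$\frac1C\sum_i\langle x,v_i^*\rangle\le 1-\frac{(C-1)\delta}{2C}$$
for every unit vector $x$. Combining with Step 1 and taking $\mathbb{E}_{q\sim\mathcal{Q}}$ gives the claimed bound.

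I would also remark that the bound is essentially tight. When the $v_i^*$ form a regular simplex and $x$ is taken along $\bar v$ (normalized), every inequality above is close to equality. This tightness is the sense in which a single shared embedding cannot align well with all categories at once.
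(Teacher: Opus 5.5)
Your Steps 2--3 are correct and give a clean bound for unit vectors, but the proof is not finished. The closing phrase ``combining with Step 1'' silently uses $\langle x,\mu_i\rangle\le\langle x,v_i^*\rangle$. As you note yourself, this fails whenever $\langle x,v_i^*\rangle<0$ and $\|\mu_i\|<1$, and neither of your proposed escapes is carried out.

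Option (a) does not prove the stated theorem. Concentration $\|\mu_i\|=1-O(\sigma_i^2)$ is not a hypothesis, and an approximation $\mu_i\approx v_i^*$ cannot give an exact inequality.

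Option (b) is the right route, and the check you postponed is short. Let $T=\{i:\langle x,v_i^*\rangle>0\}$ and $k=|T|$. Since $\langle x,\mu_i\rangle=\|\mu_i\|\langle x,v_i^*\rangle\le\max(0,\langle x,v_i^*\rangle)$, you can apply your Steps 2--3 to the $k$ vectors $\{v_i^*\}_{i\in T}$, which inherit the pairwise separation. This gives
\[
\frac1C\sum_{i=1}^C\langle x,\mu_i\rangle\;\le\;\frac1C\sum_{i\in T}\langle x,v_i^*\rangle\;\le\;\frac1C\Bigl(k-\frac{(k-1)\delta}{2}\Bigr)=\frac1C\Bigl(k\bigl(1-\tfrac{\delta}{2}\bigr)+\tfrac{\delta}{2}\Bigr).
\]
For $k=0$ the middle term is $0$ and the inequality is trivial. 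For $C\ge2$ we have $\delta\le\delta_{12}\le2$, so the right side is nondecreasing in $k$. Hence $k\le C$ yields $1-\frac{(C-1)\delta}{2C}$; the case $C=1$ is trivial. With this inserted, your argument is complete for every admissible $\delta$.

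\textbf{Comparison with the paper.} The paper first averages over queries, setting $u=\mathbb{E}_q[e_{query}]$ with $\|u\|\le1$. It writes the expectation as $\langle u,\bar v\rangle$ with $\bar v=\frac1C\sum_i\|\mu_i\|v_i^*$ and applies Cauchy--Schwarz. It then expands $\|\bar v\|^2$ to get $\|\bar v\|^2\le1-\frac{(C-1)\delta}{C}$ and finishes with $\sqrt{1-x}\le1-x/2$. Your polarization/variance identity is the same Gram-matrix computation, but it delivers the linear bound directly, with no square-root step.

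The paper handles $\|\mu_i\|\le1$ in one line, by bounding $\|\mu_i\|\|\mu_j\|\langle v_i^*,v_j^*\rangle\le1-\delta$. That step is valid only when $\delta\le1$. For $\delta>1$ the right side is negative, and shrinking the norms pushes a negative term up toward $0$. For example, $C=2$, $v_2^*=-v_1^*$ and $\|\mu_1\|\ne\|\mu_2\|$ give $\|\bar v\|^2>0=1-\frac{(C-1)\delta}{C}$, although the final bound $\|\bar v\|\le\frac12$ still holds. So the obstacle you flagged is real, and your completed clipping route treats it more carefully than the paper does.

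A small correction to your tightness remark. For an equiangular family the bound is tight only to first order in $\delta$: the true maximum over unit $x$ is $\bigl\|\frac1C\sum_i v_i^*\bigr\|=\sqrt{1-(C-1)\delta/C}$. For the regular simplex proper, $\frac1C\sum_i v_i^*=0$, so ``$x$ along $\bar v$'' is undefined there.
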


\begin{remark}
    The theorem reveals that the expected query-document alignment under a shared encoder is fundamentally constrained by two factors: the number of categories $C$ and the inter-category separation $\delta$. As category diversity increases or categories become more separated, this alignment ceiling tightens. 
\end{remark}
\begin{remark}
   The bound in Theorem~3.3 implies that improvements in alignment for one category necessarily come at the cost of suppressing alignment for others. For these underserved categories, the resulting low similarity scores push their relevant documents to lower positions in the ranking over the heterogeneous corpus, thus degrading retrieval precision. Thus, \textit{\uline{an inherent limitation emerges: a single retrieval strategy cannot excel at all categories simultaneously}}, which motivates the type-aware design in Section~\ref{sec:type_specific}.
\end{remark}

\vspace{-0.5em}

% Please add the following required packages to your document preamble:
% \usepackage[normalem]{ulem}
% \useunder{\uline}{\ul}{}
\vspace{-0.5em}

\section{Experiment}
\vspace{-0.5em}

\subsection{Experimental Settings}
\textbf{Dataset.} In the section, we evaluate our model performance on four long-term memory datasets: LongMemEval-S \citep{longmemeval}, LoCoMo \citep{locomo}, LongMemEval-M \citep{longmemeval}, and PerLTQA~\citep{perltqa}. These datasets are designed to assess both retrieval and generation tasks. Detailed dataset statistics are provided in the Appendix \ref{dataset}.

\begin{table*}[t]
\fontsize{8}{9}\selectfont 
\centering
\caption{Retrieval Performance. All methods are based on Contriever as the retriever.}
\label{retrieval_performance}
\vspace{-0.5 em}
\resizebox{\linewidth}{!}{
\renewcommand\arraystretch{1}
\tabcolsep=0.07cm
\scalebox{0.8}{
\begin{tabular}{ccccccccc}
\toprule
Method     & Recall@1       & NDCG@1         & Recall@3       & NDCG@3         & Recall@5       & NDCG@5         & Recall@10       & NDCG@10         \\ \midrule
\multicolumn{9}{c}{LongMemEval-S}                                                                                                                  \\ \midrule
HyDE~\citep{hyde}      & 27.23          & 27.23          & 53.19          & 37.82          & 65.32          & 42.84          & 82.34           & 49.83          \\
Mill~\citep{mill}     & 38.72          & 38.72          & 67.66          & 50.17          & 78.94          & 56.10          & 90.64           & 61.34          \\
Query2Doc~\citep{query2doc}& 25.96          & 25.96          & 50.64          & 36.43          & 64.89          & 41.66          & 81.06           & 48.38          \\
SeCom~\citep{secom}    & 45.32          & 45.32          & 74.04          & 53.42          & 83.19          & 59.42          & 91.28           & 64.66          \\
HippoRAG2~\citep{hipporag2}& 50.64          & 50.64          & \uline{80.85}    & \uline{60.59}    & \uline{88.51}    & \uline{66.48}    & \textbf{93.53}  & \uline{71.05}    \\
A-Mem~\citep{a-mem}     & 48.20          & 48.20          & 74.80          & 56.63          & 82.80          & 61.52          & 91.40           & 66.83          \\
MemoryOS~\citep{memoryos}  & 35.70          & 35.70          & 61.80          & 38.30          & 73.07          & 43.82          & 85.39           & 48.56          \\
LightMem~\citep{lightmem}  & \uline{56.99}    & \uline{56.99}    & 73.28          & 51.59          & 79.12          & 55.57          & 88.10           & 60.24          \\
\ourmodel (Ours)      & \textbf{59.15} & \textbf{59.15} & \textbf{83.19} & \textbf{66.74} & \textbf{90.21} & \textbf{71.60} & \uline{92.77}     & \textbf{74.85} \\ \midrule
\multicolumn{9}{c}{LoCoMo}                                                                                                                         \\ \midrule
HyDE~\citep{hyde}     & 25.33          & 25.33          & 42.55          & 36.24          & 50.45          & 39.60          & 59.82           & 42.65          \\
Mill~\citep{mill}     & 19.84          & 19.84          & 35.60          & 30.16          & 42.75          & 33.10          & 53.12           & 36.57          \\
Query2Doc~\citep{query2doc}& 23.11          & 23.11          & 39.78          & 33.94          & 47.48          & 37.15          & 59.37           & 41.11          \\
SeCom~\citep{secom}    & 23.77          & 23.77          & 39.38          & 34.06          & 45.57          & 36.60          & 55.44           & 39.76          \\
HippoRAG2~\citep{hipporag2}& 24.97          & 24.97          & 39.98          & 33.84          & 45.87          & 36.28          & 53.17           & 38.64          \\
A-Mem~\citep{a-mem}     & \uline{28.80}    & \uline{28.80}    & \uline{46.12}    & \uline{39.82}    & \uline{53.32}    & \uline{42.87}    & \uline{63.29}     & \uline{46.14}    \\
MemoryOS~\citep{memoryos}  & 19.89          & 19.89          & 33.79          & 30.02          & 40.28          & 32.82          & 49.09           & 35.64          \\
LightMem~\citep{lightmem}  & 22.56          & 22.56          & 31.22          & 26.65          & 36.30          & 28.56          & 41.84           & 30.25          \\
\ourmodel (Ours)      & \textbf{31.07} & \textbf{31.07} & \textbf{47.73} & \textbf{40.97} & \textbf{54.08} & \textbf{43.66} & \textbf{65.01}  & \textbf{47.12} \\ \midrule
\multicolumn{9}{c}{LongMemEval-M}                                                                                                                  \\ \midrule
HyDE~\citep{hyde}     & 21.70          & 21.70          & 40.43          & 27.90          & 50.64          & 31.99          & 63.62           & 36.71          \\
Mill~\citep{mill}     & 28.51          & 28.51          & 52.77          & 37.48          & 63.40          & 42.73          & 76.81           & 48.04          \\
Query2Doc~\citep{query2doc}& 19.15          & 19.15          & 37.02          & 25.66          & 44.68          & 28.96          & 61.49           & 34.03          \\
SeCom~\citep{secom}    & \uline{31.91}    & \uline{31.91}    & \uline{55.11}    & \uline{38.16}    & \uline{64.47}    & \uline{42.38}    & \uline{75.96}     & \uline{47.06}    \\
MemoryOS~\citep{memoryos}  & 28.18          & 28.18          & 47.39          & 30.30          & 55.95          & 33.78          & 67.22           & 37.41          \\
LightMem~\citep{lightmem}  & 27.14          & 27.14          & 31.73          & 31.16          & 33.19          & 31.82          & 34.45           & 32.29          \\
\ourmodel (Ours)      & \textbf{48.09} & \textbf{48.09} & \textbf{70.43} & \textbf{52.20} & \textbf{78.09} & \textbf{57.09} & \textbf{84.89}  & \textbf{61.27} \\ \midrule
\multicolumn{9}{c}{PerLTQA}                                                                                                                        \\ \midrule
HyDE~\citep{hyde}     & 17.09          & 17.09          & 56.06          & 53.16          & 66.42          & 58.01          & 79.49           & 62.52          \\
Mill~\citep{mill}     & 50.18          & 50.18          & 79.18          & 76.46          & 86.22          & 79.76          & \uline{92.25}     & 81.85          \\
Query2Doc~\citep{query2doc}& 26.15          & 26.15          & 62.26          & 59.34          & 72.90          & 64.36          & 83.47           & 68.04          \\
SeCom~\citep{secom}    & 49.51          & 49.51          & 77.59          & 75.14          & 84.81          & 78.54          & 91.97           & 81.02          \\
HippoRAG2~\citep{hipporag2}& 41.90          & 41.90          & 72.16          & 69.87          & 78.05          & 72.64          & 85.94           & 75.41          \\
A-Mem~\citep{a-mem}     & 31.61          & 31.61          & 63.11          & 60.83          & 72.06          & 65.07          & 81.08           & 68.22          \\
MemoryOS~\citep{memoryos}  & \uline{57.19}    & \uline{57.19}    & \uline{79.14}    & \uline{77.18}    & \uline{84.78}    & \uline{79.81}    & 91.61           & \uline{82.18}    \\
LightMem~\citep{lightmem}  & 49.68          & 49.68          & 75.65          & 73.25          & 82.28          & 76.34          & 89.15           & 78.73          \\
\ourmodel (Ours)      & \textbf{71.92} & \textbf{71.92} & \textbf{88.05} & \textbf{86.22} & \textbf{92.78} & \textbf{88.48} & \textbf{96.83}  & \textbf{89.90} \\ \bottomrule
\end{tabular}
}}\vspace{-1.5 em}
\end{table*}

\textbf{Metrics.} We evaluate two types of tasks: retrieval tasks and generation tasks, using metrics tailored to each. For retrieval tasks, we use Recall@k and NDCG@k. For generation tasks, we employ BLEU, BERTScore, F1, Rouge1, Rouge2, RougeLSum, and GPT4Judge \citep{zheng2023judging}. GPT4Judge leverages GPT-5.2 to assess response alignment with reference answers. 

\textbf{Baselines.} We evaluate the proposed method with nine  baselines.   (1)\textit{HyDE} \citep{hyde} (2) \textit{Mill} \citep{mill} (3) \textit{Query2Doc} \citep{query2doc}  (4) \textit{Secom} \citep{secom} (5) \textit{A-Mem} \citep{a-mem} (6) \textit{HippoRAG 2} \citep{hipporag2}  (7) \textit{Raptor}~\citep{raptor}  (8) \textit{MemoryOS}~\citep{memoryos} (9) \textit{LightMem}~\citep{lightmem}.  Due to the space limits, detailed descriptions of baselines are provided in the Appendix~\ref{baseline}.

\textbf{Implement Details.} For all methods, we adopted a consistent strategy to ensure a fair comparison. In both retrieval and generation tasks, our method and the baselines utilized Contriever as the embedding model. For generation performance, we evaluated using the top 3 retrieved turns on the LongMemEval-S and LongMemEval-M datasets, while for LoCoMo, we selected the top 10 retrieved turns for performance assessment. In our experiments, we utilize 'gpt-4o-mini' as backbone for all tasks and baselines, including memory information processing and question-answer generation. To ensure fair comparisons, all baselines are implemented with uniform generation prompts. The temperature of LLMs is fixed at 0 to ensure reproducibility.

\begin{table*}[t]\fontsize{8}{8.5}\selectfont 
\centering
\caption{Question Answer Performance. All methods are based on Contriever as the retriever. }
\label{generation_performance}
\vspace{-1 em}
\resizebox{\linewidth}{!}{
\renewcommand\arraystretch{1}
\tabcolsep=0.15cm
\scalebox{1.02}{
\begin{tabular}{ccccccccc} \toprule
Method    & GPT4Judge            & F1             & BLEU           & Rouge1         & Rouge2         & RougeL         & RougeLsum      & BERTScore      \\ \midrule
\multicolumn{9}{c}{LongMemEval-S}                                                                                                                 \\ \midrule
HyDE~\citep{hyde}     & 42.60          & 9.96           & 1.44           & 10.58          & 4.67           & 9.31           & 9.57           & 83.02          \\
Mill~\citep{mill}     & 42.20          & 9.91           & 1.55           & 10.57          & 4.54           & 9.20           & 9.45           & 82.96          \\
Query2Doc~\citep{query2doc}& 43.00          & 9.80           & 1.47           & 10.44          & 4.46           & 9.21           & 9.43           & 82.97          \\
SeCom~\citep{secom}    & 44.80          & 11.01          & 1.65           & 11.66          & 5.14           & 10.36          & 10.58          & 83.26          \\
HippoRAG2~\citep{hipporag2}& 45.40          & 10.55          & 1.65           & 11.15          & 5.17           & 9.98           & 10.17          & 83.12          \\
RAPTOR~\citep{raptor}   & 32.20          & 12.08          & 1.90           & 12.73          & 5.82           & 11.25          & 11.35          & 83.50          \\
A-Mem~\citep{a-mem}     & \uline{47.00}    & 12.12          & 1.90           & 12.81          & 5.91           & 11.38          & 11.63          & 83.58          \\
MemoryOS~\citep{memoryos}  & 43.40          & 13.50          & 2.09           & 14.16          & 6.60           & 12.63          & 13.00          & 83.78          \\
LightMem~\citep{lightmem}  & 43.40          & \uline{14.45}    & \uline{2.28}     & \uline{15.14}    & \uline{6.69}     & \uline{13.59}    & \uline{13.85}    & \uline{84.01}    \\
\ourmodel (Ours)      & \textbf{56.80} & \textbf{20.13} & \textbf{4.66}  & \textbf{21.83} & \textbf{10.50} & \textbf{20.09} & \textbf{20.18} & \textbf{85.19} \\ \midrule
\multicolumn{9}{c}{LoCoMo}                                                                                                                        \\ \midrule
HyDE~\citep{hyde}     & 40.18          & 14.86          & 2.60           & 15.25          & 7.35           & 14.14          & 14.15          & 84.48          \\
Mill~\citep{mill}     & 37.41          & 14.50          & 2.49           & 14.88          & 7.13           & 13.84          & 13.83          & 84.38          \\
Query2Doc~\citep{query2doc}& 39.83          & 14.57          & 2.50           & 14.95          & 7.16           & 13.88          & 13.87          & 84.44          \\
SeCom~\citep{secom}    & 38.57          & 14.85          & 2.41           & 15.30          & 7.37           & 14.25          & 14.25          & 84.44          \\
HippoRAG2~\citep{hipporag2}& 37.81          & 14.61          & 2.56           & 15.03          & 7.33           & 14.03          & 14.04          & 84.42          \\
RAPTOR~\citep{raptor}   & 31.72          & \uline{14.55}    & \uline{2.88}     & \uline{15.09}    & \uline{7.49}     & \uline{14.18}    & \uline{14.17}    & \uline{84.48}    \\
A-Mem~\citep{a-mem}     & \uline{39.98}    & 14.44          & 2.32           & 14.82          & 6.85           & 13.73          & 13.75          & 84.39          \\
MemoryOS~\citep{memoryos}  & 36.81          & 13.33          & 2.12           & 13.81          & 6.37           & 12.85          & 12.84          & 84.20          \\
LightMem~\citep{lightmem}  & 25.03          & 9.20           & 1.20           & 9.70           & 3.70           & 8.85           & 8.85           & 83.30          \\
\ourmodel (Ours)      & \textbf{48.79} & \textbf{22.22} & \textbf{4.66}  & \textbf{23.24} & \textbf{11.88} & \textbf{21.84} & \textbf{21.80} & \textbf{85.82} \\ \midrule
\multicolumn{9}{c}{LongMemEval-M}                                                                                                                 \\ \midrule
HyDE~\citep{hyde}     & 34.60          & 9.01           & 1.26           & 9.71           & 3.99           & 8.46           & 8.67           & 82.70          \\
Mill~\citep{mill}     & 35.40          & 8.79           & 1.29           & 9.47           & 3.89           & 8.14           & 8.39           & 82.58          \\
Query2Doc~\citep{query2doc}& 34.80          & 8.66           & 1.20           & 9.34           & 3.88           & 7.99           & 8.23           & 82.61          \\
SeCom~\citep{secom}    & 34.40          & 10.13          & 1.37           & 10.83          & 4.65           & 9.51           & 9.69           & 83.07          \\
MemoryOS~\citep{memoryos}  & \uline{36.20}    & \uline{12.22}    & \uline{1.79}     & \uline{12.87}    & \uline{5.87}     & \uline{11.49}    & \uline{11.79}    & \uline{83.57}    \\
LightMem~\citep{lightmem}  & 25.20          & 9.30           & 1.11           & 9.92           & 4.15           & 8.65           & 8.92           & 82.43          \\
\ourmodel (Ours)      & \textbf{50.20} & \textbf{18.77} & \textbf{4.24}  & \textbf{20.36} & \textbf{9.19}  & \textbf{18.56} & \textbf{18.64} & \textbf{85.02} \\ \midrule
\multicolumn{9}{c}{PerLTQA}                                                                                                                       \\ \midrule
HyDE~\citep{hyde}     & 50.56          & 33.96          & 8.48           & 35.61          & 19.55          & 29.69          & 29.74          & 89.47          \\
Mill~\citep{mill}     & 60.11          & 38.55          & 11.21          & 40.17          & 23.44          & 34.00          & 34.05          & 90.26          \\
Query2Doc~\citep{query2doc} & 52.85          & 35.01          & 9.14           & 36.64          & 20.42          & 30.65          & 30.74          & 89.66          \\
HippoRAG2~\citep{hipporag2}& 56.73          & 37.04          & 10.11          & 38.71          & 22.17          & 32.75          & 32.86          & 89.97          \\
RAPTOR~\citep{raptor}    & 40.84          & 32.51          & 8.56           & 34.38          & 18.3           & 28.65          & 28.72          & 89.13          \\
A-Mem~\citep{a-mem}     & 52.01          & 35.13          & 9.24           & 36.76          & 20.58          & 30.86          & 30.93          & 89.62          \\
SeCom~\citep{secom}    & 57.68          & \uline{37.49}    & \uline{10.27}    & \uline{39.18}    & \uline{21.91}    & \uline{32.85}    & \uline{32.92}    & \uline{90.09}    \\
MemoryOS~\citep{memoryos}  & \uline{60.08}    & 36.32          & 9.89           & 37.95          & 21.46          & 31.73          & 31.79          & 89.93          \\
LightMem~\citep{lightmem}  & 31.36          & 32.13          & 7.69           & 34.12          & 16.66          & 28.01          & 28.06          & 89.15          \\
\ourmodel (Ours)      & \textbf{62.83} & \textbf{47.26} & \textbf{20.65} & \textbf{51.20} & \textbf{33.02} & \textbf{44.78} & \textbf{44.79} & \textbf{91.61}   \\ \bottomrule      
\end{tabular}}} 
\vspace{-1em}
\end{table*}

\vspace{-0.5em}

\subsection{Overall Results}
\vspace{-0.5em}

In this paper, we evaluate two types of tasks, retrieval and generation tasks, with their respective results presented in Table \ref{retrieval_performance} and Table \ref{generation_performance}. Notably, Raptor involves generating a new text process, making it impossible to assess their retrieval performance. Additionally, for A-Mem, HippoRAG2, and Raptor, the memory construction latency on LongMemEval-M exceeds one week, and thus, we do not report their performance in this study.

\textbf{Retrieval Results.} As shown in Table~\ref{retrieval_performance}, MemoType consistently achieves the best performance across all four datasets. On LongMemEval-S, MemoType achieves a Recall@1 of 59.15\%, outperforming the strongest baseline LightMem by 2.16\% and the structural method HippoRAG2 by 8.51\%. On the more challenging LongMemEval-M dataset, MemoType achieves a Recall@1 of 48.09\%, surpassing SeCom by over 16 points, demonstrating the substantial advantage of type-aware retrieval over uniform segmentation-based approaches. On LoCoMo, MemoType outperforms all baselines including A-Mem by 2.27\% in Recall@1, confirming its effectiveness on user-user dialogue scenarios. On PerLTQA, MemoType achieves 71.92\% Recall@1, exceeding the second-best method MemoryOS by 14.73\%. These results validate that leveraging memory type diversity with tailored strategies yields consistent improvements over both query expansion and structural augmentation methods across diverse memory benchmarks.

\textbf{Question Answer Results.} To validate the question answer performance of the proposed method, we conduct the QA experiment in Table \ref{generation_performance}. MemoType consistently achieves the best performance across all four datasets. On LongMemEval-S, MemoType attains a GPT4Judge score of 56.80 and a BLEU score of 4.66, surpassing the strongest baseline LightMem by over 13 points and 2.38 points respectively, which highlights the limitations of unified retrieval strategies in generating high-quality answers. On LoCoMo, MemoType outperforms all baselines by a notable margin, achieving an F1 of 22.22 and a GPT4Judge of 48.79, confirming its effectiveness in leveraging memory diversity for user-user dialogue scenarios. These results underscore the robustness and generalizability of our method across diverse memory benchmarks.

\begin{table*}[t]\fontsize{8}{8.5}\selectfont 
\centering
\caption{The ablation study of the proposed type-aware strategy with single retrieval strategy on three datasets.}
\label{tab:ablation_strategy}
\vspace{-1 em}
\resizebox{\linewidth}{!}{
\renewcommand\arraystretch{1}
\tabcolsep=0.1cm
\scalebox{1.02}{ \begin{tabular}{cccccccccc} \toprule
\multirow{2}{*}{Strategy} & \multicolumn{3}{c}{LongMemEval-S}                & \multicolumn{3}{c}{LoCoMo}                       & \multicolumn{3}{c}{PerLTQA}                      \\ \cmidrule(lr){2-4} \cmidrule(lr){5-7} \cmidrule(lr){8-10}
                          & Recall@1       & Recall@3       & Recall@5       & Recall@1       & Recall@3       & Recall@5       & Recall@1       & Recall@3       & Recall@5       \\ \midrule
Key Expansion             & 51.77          & 76.62          & 85.59          & 25.93          & 43.25          & 51.06          & 66.10          & 83.64          & 88.05          \\
Hypothetical Query        & 53.91          & 78.29          & 87.06          & 24.42          & 43.00          & 50.10          & 62.65          & 82.90          & 89.08          \\
Hypothetical Memory       & 52.03          & 77.45          & 86.22          & 25.23          & 42.30          & 51.11          & 66.24          & 83.88          & 88.34          \\ 
 Type-Aware Strategy   & \textbf{59.15} & \textbf{83.19} & \textbf{90.21} & \textbf{31.07} & \textbf{47.73} & \textbf{54.08} & \textbf{71.92} & \textbf{88.05} & \textbf{92.78} \\ \cmidrule{1-10}
Improvement               & 5.24           & 4.90           & 3.15           & 5.14           & 4.48           & 2.97           & 5.68           & 4.17           & 3.70     \\ \bottomrule
\end{tabular}
}}
\vspace{-1em}
\end{table*}

\subsection{Ablation Study}
In this section, we conduct ablation studies to analyze the effectiveness of our proposed method. Due to space limits, \uline{more experiments about router performance, cost analysis, retrieval strategy ablation, and other ablation studies can be found in Appendix~\ref{app:experiment}}.

\textbf{Type-Aware Retrieval Strategy.} To validate the type-aware retrieval strategy, we compare it with three single-strategy baselines where all memory types employ the same retrieval enhancement, as shown in Table~\ref{tab:ablation_strategy}. Our type-aware strategy consistently outperforms all single strategies across three datasets. On LongMemEval-S, our strategy achieves a Recall@1 of 59.15, surpassing the best single strategy Hypothetical Query by 5.24 points. Similar improvements are observed on LoCoMo and PerLTQA, with gains of 5.14 and 5.68 in Recall@1 respectively. These results confirm that assigning tailored retrieval strategies to different memory types effectively leverages their unique characteristics and yields substantial performance gains over any uniform approach.

\begin{wrapfigure}{r}{0.47\textwidth}
\vspace{-3pt}
\centering
\setlength{\abovecaptionskip}{0pt}
\includegraphics[width=1\linewidth]{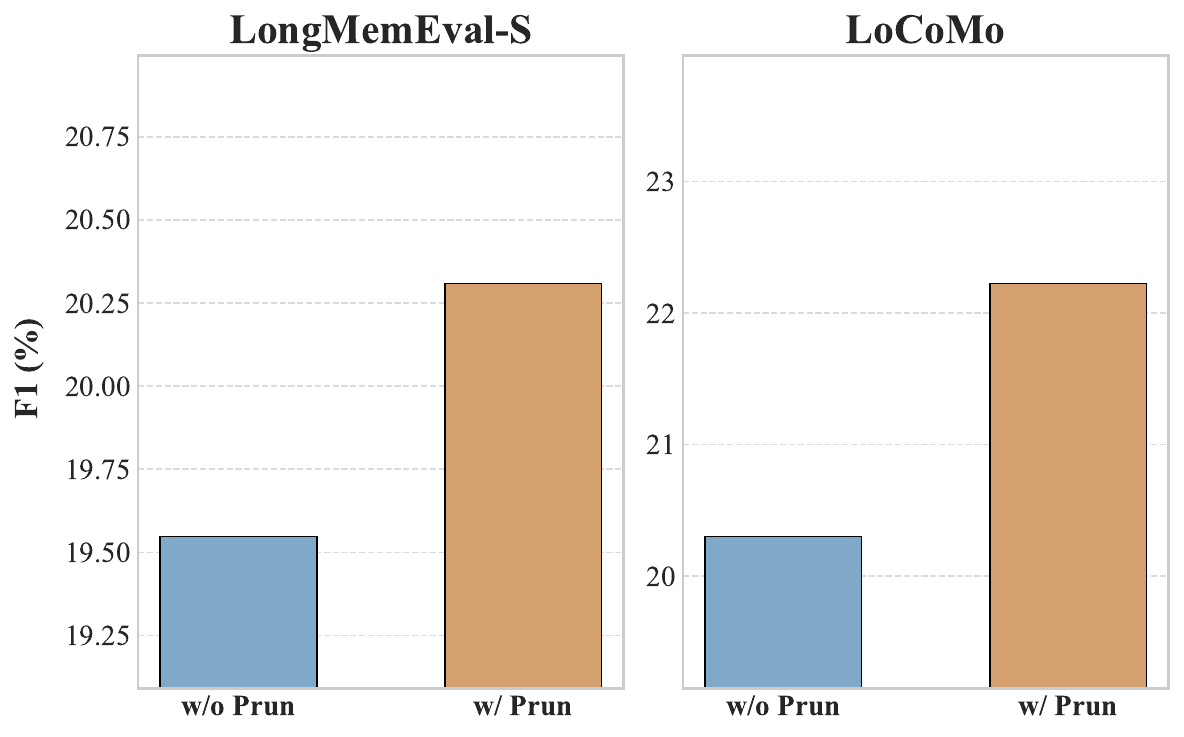}
\caption{The ablation study of the memory pruning module on two benchmarks.}
\label{fig:pruning}
\vspace{-8pt}
\end{wrapfigure}

\textbf{Memory Pruning Module.} To evaluate the effectiveness of the proposed query-conditioned memory pruning module, we compare the generation performance with and without pruning, as shown in Figure~\ref{fig:pruning}. On LongMemEval-S, applying the pruning module improves F1 from 19.55 to 20.31. These improvements demonstrate that retrieved memory segments inevitably contain query-irrelevant information that introduces noise into the generation process. By adaptively filtering out such irrelevant content before feeding memories into the LLM, the pruning module effectively reduces noise and enhances the quality of generated responses across different datasets. Due to the space limits, more experimental results about pruning module ablation can be found in Appendix ~\ref{app:pruning_ablation}.

\vspace{-1em}

\section{Related Work}

\textbf{RAG-based Memory technology}
RAG-based memory strategies can be broadly classified into unstructured and structured types. Unstructured RAG \citep{memochat, mms, cfgm, memorybank, mem0} stores history as dense chunks or LLM summaries and retrieves by similarity; representative systems include SeCom’s segment-level denoising \citep{secom}. Structured RAG \citep{hipporag, graphrag, g-retriever, graph-cot, lightrag} introduces relations via graphs or trees: Zep’s \citep{zep} Graphiti integrates conversations and business data into a temporal KG, MemTree \citep{memtree} organizes a dynamic tree, and H‑MEM \citep{h-mem}. Structural information augmentation can improve performance on complex reasoning tasks, but it often incurs inevitable computational overhead. For simpler problems, such enhancements may introduce noise and degrade performance. Our proposed Memotype addresses this issue effectively by adopting tailored strategies for each.

\textbf{Agent Memory}
Beyond RAG, agents learn or structure memory with richer mechanisms \citep{memoryllm, m+, mirix, memoryos, survey-memory}. MemoryLLM \citep{memoryllm} updates parameters with a latent memory pool. Memory‑R1 \citep{memory-r1} trains a memory manager agent with RL to add/update/delete entries. Learn‑to‑Memorize models memory cycles with MoE‑gated retrieval, aggregation, and reflection; Mem$^p$~\citep{memp} distills procedural traces into lifelong skills. Through methods such as reinforcement learning, these approaches often achieve superior performance compared to traditional RAG systems. However, they typically incur higher memory construction latency. Balancing performance and latency remains an area worthy of further exploration for these methodologies.

\section{Conclusion}
In this paper, we propose \ourdata, a multi-class memory benchmark inspired by cognitive psychology, categorizing memory into \textit{Episodic Memory}, \textit{Personal Semantic Memory}, and \textit{General Semantic Memory}. Built on this foundation, we introduce \ourmodel, a novel memory framework that adaptively identifies memory types and designs tailored storage and retrieval strategies, significantly enhancing performance. We further provide a theoretical analysis showing that single-strategy retrieval faces fundamental precision bounds in heterogeneous corpora.  Extensive experiments demonstrate its effectiveness.  Whether different memory types at the corpus levels can benefit from better strategies remains a promising direction for future research.

\bibliography{neurips_2026}
\bibliographystyle{plainnat}

%%%%%%%%%%%%%%%%%%%%%%%%%%%%%%%%%%%%%%%%%%%%%%%%%%%%%%%%%%%%

\appendix
\newpage
\appendix
\onecolumn
\begin{center}
    \LARGE \textbf{Technical Appendix}
    \vspace{1em}
\end{center}
\addcontentsline{toc}{part}{Appendix}  %

\begingroup
  \etocsetnexttocdepth{subsection}     %
  \etocsettocstyle{\section*{Table of Contents}}{} %
  \localtableofcontents                %
\endgroup
\clearpage

\section{Proof of Theorem \ref{thm:precision}}
\precisionthm*
\begin{proof}
Let $u = \mathbb{E}_{q \sim \mathcal{Q}}[e_{query}]$ and $\mu_i = \mathbb{E}_{d \in S_i}[e_{doc}]$ for each category $S_i$. Since $\phi(\cdot)$ are $\ell_2-$ normalized, we have $\mathbb{E}_{d \in S_i}[e_{doc}] = \mu_i = \|\mu_i\|v_i^*$ for each category, where $\|\mu_i\| \leq 1$ by Jensen's inequality.  Besides, with queries and documents sampled independently, we have:
$$\mathbb{E}_{q \sim \mathcal{Q}, d \sim \mathcal{D}}[\langle e_{query}, e_{doc} \rangle] = \langle \mathbb{E}_{q \sim \mathcal{Q}}[e_{query}], \mathbb{E}_{d \sim \mathcal{D}}[e_{doc}] \rangle = \langle u, \mathbb{E}_{d \sim \mathcal{D}}[e_{doc}] \rangle.$$
Since documents are uniformly distributed across $C$ categories:
$$\mathbb{E}_{d \sim \mathcal{D}}[e_{doc}] = \frac{1}{C}\sum_{i=1}^{C} \mathbb{E}_{ d_i \sim \mathcal{D}_i}[e_{d}^i] = \frac{1}{C}\sum_{i=1}^{C} \|\mu_i\|v_i^*.$$
Thus:
$$\mathbb{E}_{q \sim \mathcal{Q}, d \sim \mathcal{D}}[\langle e_{query}, e_{doc} \rangle] = \frac{1}{C}\sum_{i=1}^{C} \|\mu_i\|\langle u, v_i^* \rangle = \langle u, \bar{v} \rangle,$$

where $\bar{v} = \frac{1}{C} \sum_{i=1}^{C}\|\mu_i\| v_i^*$.
By Cauchy-Schwarz inequality and $\|u\| \leq 1$:
$$\langle u, \bar{v} \rangle \leq \|u\| \|\bar{v}\| \leq \|\bar{v}\|.$$
We now bound $\|\bar{v}\|^2$:
$$\|\bar{v}\|^2 = \frac{1}{C^2}\sum_{i,j} \|\mu_i\| \|\mu_j\|\langle v_i^*, v_j^* \rangle = \frac{1}{C^2}\left[\sum_i\|\mu_i\|^2 + \sum_{i \neq j} \|\mu_i\| \|\mu_j\|\langle v_i^*, v_j^* \rangle\right].$$
Since $\|\mu_i\| \leq 1$ for all $i$. By Assumption~\ref{assump:separation}, $\langle v_i^*, v_j^* \rangle \leq 1 - \delta$ for $i \neq j$:
$$\|\bar{v}\|^2 \leq \frac{1}{C^2}[C + C(C-1)(1-\delta)] = 1 - \frac{(C-1)\delta}{C}.$$
Using $\sqrt{1-x} \leq 1 - x/2$ for $x \in [0,1]$:
$$\|\bar{v}\| \leq 1 - \frac{(C-1)\delta}{2C}.$$

Therefore, 
$$\mathbb{E}_{q \sim \mathcal{Q}, d \sim \mathcal{D}}[\langle e_{query}, e_{doc} \rangle] \leq \|\bar{v}\| \leq 1 - \frac{(C-1)\delta}{2C}.$$
\end{proof}

Since higher alignment $\langle u, v_i^* \rangle$ leads to higher retrieval scores for category $S_i$ documents, this bound implies a fundamental limit on retrieval performance: no single retrieval direction can simultaneously achieve high alignment with all category directions.

\section{TriMEM Dataset Details}
\label{app:dateset}

\subsection{Classification Rationale}

{In this paper, we will focus on text classification in the form of dialogue. Because dialogues are crucial retrieval objects in existing memory scenarios. For instance, in the two common memory benchmarks LoCoMo~\citep{locomo} and LongMemEval~\citep{longmemeval}, dialogues are the primary and exclusive retrieval objects. Moreover, in real-world scenarios, accessible data typically exists in dialogue form. We propose three memory types: Episodic Memory, Personal Semantic Memory, and General Semantic Memory. This classification is rooted in cognitive psychology research on declarative memory~\citep{cognitive_psychology}, which encompasses memory for facts and events that can be consciously recalled and linguistically described. In this subsection, we will explain the basis for our classification from the perspectives of completeness and necessity. Specifically, we will demonstrate that the three categories of memory encompass all conversational memory and justify the necessity of this classification.}

{\textbf{Completeness.} Our classification is based on well-established research in cognitive psychology~\citep{cognitive_psychology} on declarative memory. Declarative memory inherently includes both semantic memory and episodic memory, which collectively cover all forms of dialogue memory that can be expressed through language. Nondeclarative memory, such as procedural memory or priming memory, cannot be articulated linguistically and thus lies beyond the scope of this framework. Therefore, the three proposed categories comprehensively encompass all dialogue memory types without requiring additional classifications.}

{\textbf{Necessity.} The necessity of categorizing semantic memory into Personal Semantic Memory and General Semantic Memory is grounded in two critical considerations. First, the division addresses the category imbalance problem, as semantic memory accounts for a disproportionately large proportion (87.64\%) of dialogue memory compared to episodic memory (12.35\%). Subdividing semantic memory ensures a more balanced classification, mitigating the dominance of one category during model optimization and evaluation. Second, the semantic gap between Personal and General Semantic Memory is empirically validated through TSNE visualization and word cloud analysis. As shown in the Figure \ref{fig:longmem_s_tsne}, both categories exhibit distinct clustering patterns, indicating significant semantic differences between personal and general semantic memory. Additionally, we supplemented PM Word Cloud Visualizations and GM Word Cloud Visualizations as shown in Figure \ref{fig:pm_wordcloud} and Figure \ref{fig:gm_wordcloud}, revealing that General Semantic Memory emphasizes common terms like "assistant" and "offer," while Personal Semantic Memory focuses on user-specific terminology such as "planning "and "user." This further highlights the distinctions between the two memory types. This clear semantic distinction underscores the necessity of separating semantic memory into these two subcategories.}

\begin{table*}[t]
\fontsize{8}{9.7}\selectfont 
\centering
\caption{The Memory Classification Criterion in the TriMEM Benchmark. ``EM'' denotes Episodic Memory, ``PM'' denotes the Personal Semantic Memory, ``GM'' denotes the General Semantic Memory.}
\label{class_reason}
\vspace{-1 em}
\resizebox{\linewidth}{!}{
\renewcommand\arraystretch{1}
\tabcolsep=0.1cm
\scalebox{1}{
\begin{tabular}{ccc}
\toprule
Memory   Type                                 & Content Type                         & Example                                                                                          \\ \midrule
\multirow{2}{*}{EM}              & Event Occurred                       & ``Sarah and John had dinner   at a sushi restaurant downtown.'' \\
          & Events   Planned to Occur            & ``Next Month, my colleagues   are planning to see a beautiful sunset.''                            \\ \midrule
\multirow{3}{*}{PM} & User Information                     & ``Jack's degree is in   Mathematics. ''                                                           \\
  & User   Preferences and Habits        & ``I love Italian food,   especially pasta and pizza.''                                             \\
  & User   Intent                        & ``I want to buy a pair of   running shoes''                                                        \\ \midrule
\multirow{3}{*}{GM}      & General Knowledge                    & ``Water boils at 100   degrees Celsius at sea level.''                                             \\
& Broadly   Applicable Suggestions     & ``Getting up early is a good habit for maintaining good health.''                      \\
& Objective  Descriptions of Entities & ``The desks in the school   are made of wood.''                                                    \\ \midrule
\multirow{2}{*}{None}                     & Question without Information                     & ``Do you have any tips on how to  organize a shoe rack?''                                           \\
              & Sentence without Information & ``Good Morning, Peter''; ``See you next  time'' \\ \bottomrule
\end{tabular}}}
\end{table*}

\subsection{Memory Type Definitions}
\label{app:classification}
In this subsection, we offer a more comprehensive explanation of the three types of memory, as shown in Table~\ref{class_reason}.

\begin{itemize}[leftmargin=*]
    \item \textbf{Episodic Memory.} Episodic memory refers to memories of specific events or occurrences tied to contextual details such as time, location, participants, and activities. These memories can include both past experiences and plans for the future. For example: "Last summer, I attended a three-day conference in Paris where I presented my research on renewable energy." "Next Friday, I’m scheduled to meet my supervisor at 10 a.m. to review the final draft of my thesis."
    \item \textbf{Personal Semantic Memory.} Personal Semantic Memory refers to long-term knowledge related to an individual's identity, preferences, habits, or intentions. Unlike Episodic Memory, it focuses on general personal information rather than specific events. For instance: "I have been working as a data scientist for a long time, and I specialize in natural language processing." "I usually prefer hiking in the mountains over beach vacations because I enjoy the tranquility of nature."
    \item \textbf{General Semantic Memory.} General Semantic Memory encompasses universally applicable knowledge, objective descriptions, or facts about the world. It is not tied to any specific individual or event. Examples include: "The Great Wall of China is over 13,000 miles long and was constructed to protect against invasions." "Photosynthesis is the process by which green plants convert sunlight into chemical energy stored in glucose."
\end{itemize}

\subsection{Dataset Construction Pipeline}
\label{app:construction}
TriMEM is a carefully curated dataset comprising 6,000 dialogue samples, designed to evaluate long-term memory capabilities in dialogue models. The dataset is constructed with diversity and quality in mind, and its creation involves multiple steps including sampling, rewriting, and labeling, all of which are detailed below.

\textbf{Data Sampling.} TriMEM is composed of dialogue samples sourced from multiple datasets and manually constructed dialogues to ensure a balanced and realistic testbed. \textit{LongMemEval-S}: 1,000 dialogue pairs were sampled from this dataset, which contains approximately 20,000 dialogue records. To prevent over-representation, the sampling proportion was kept low at only 10\%.
\textit{LoCoMo}: 750 dialogue pairs were sampled to add variety.
\textit{PersonaMem}: 750 dialogue pairs were also sampled to further diversify the dataset.
\textit{Manually-constructed dialogues}: 500 dialogue pairs were created by human annotators to introduce additional variability and ensure coverage of edge cases not captured in the sampled datasets.

\textbf{Benchmark Selection.} {We aimed to annotate memory categories in real-world dialogue scenarios rather than generating memory-related dialogues from predefined categories, which would risk introducing bias by over-focusing on single categories. We chose  LoCoMo \citep{locomo} and  LongMemEval \citep{longmemeval} as they are widely recognized memory datasets frequently referenced in works such as A-Mem~\citep{a-mem}, mem0~\citep{mem0}, MIRIX~\citep{mirix}, and Secom~\citep{secom}. This highlights their significance and credibility. LongMemEval emphasizes user-assistant interactions, while LoCoMo focuses on user-user dialogues, aligning with our goal of covering diverse memory scenarios and topics. To further enhance the generalizability of our annotated dataset beyond evaluation-specific categories, we included  PersonaMem \citep{personamem}, a personalized memory benchmark that introduces varied user profiles. This addition strengthens the dataset’s applicability across broader dialogue classification tasks.}

\textbf{Rewriting to Prevent Overfitting.} Since both LongMemEval-S and LoCoMo are also used in the evaluation datasets of this study, special care was taken to avoid the risk of the model directly memorizing specific dialogue samples. Such memorization could compromise the evaluation of classification performance.

To address this, the sampled dialogues from these datasets were rewritten using transfer prompt, which is shown in Figure \ref{transfer1} and Figuer \ref{transfer2}. The rewriting process involved modifying the necessary entity names, rephrasing relevant topic statements, and substituting key details while preserving the original semantic structure. This ensured that the dialogues remained contextually meaningful but sufficiently distinct from their original forms, mitigating the risks of data leakage and overfitting.

\textbf{Labeling Process.}
The labeling of dialogue samples was conducted through a combination of automated and manual processes to ensure high accuracy and consistency.
 The GPT-5.2-2025-12-11 model was used to generate initial labels for the dialogue samples. These prompts were meticulously crafted and validated on a subset of the data to ensure their effectiveness for the classification task. Notably, we opted for separate prompts for each category, as this approach achieved higher classification accuracy compared to using specific prompts for all categories (Episodic Memory Classification Prompt in Figure \ref{em_prompt}, Personal Semantic Memory Classification Prompt in Figure \ref{pm_prompt},  and General Semantic Memory Classification Prompt in Figure \ref{gm_prompt}). These prompts were tailored to the unique characteristics of each memory category, ensuring precise and context-aware labeling.

\textbf{Human Verification}
\label{sec:human_verification}
To ensure the quality and reliability of the TriMEM dataset, we conducted a rigorous human verification process. Three PhD-level annotators with expertise in natural language processing and dialogue systems were recruited to independently review the dataset annotations.
\textit{(i) Verification Protocol.} A stratified random sample of 1,200 dialogue pairs (20\% of the total dataset) was selected for manual review, covering all source datasets proportionally: 400 samples from LongMemEval-S, 300 from LoCoMo, 300 from PersonaMem, and 200 from manually-constructed dialogues. Each sample was independently evaluated by at least two annotators, and disagreements were resolved through discussion until consensus was reached.
\textit{(ii) Evaluation Criteria.} Annotators assessed each sample along three dimensions: 1) Category Correctness: whether the assigned memory category accurately reflects the dialogue content; 2) Rewriting Quality: whether the rewritten dialogues preserve the original semantic structure while being sufficiently distinct from the source; and 3) Label Consistency: whether the annotations are internally consistent and free of contradictions.
\textit{(iii) Results.} The inter-annotator agreement, measured by Fleiss' $\kappa$, was 0.87, indicating near-perfect agreement. The overall annotation accuracy was 94.6\%, with category correctness at 95.2\%, rewriting quality at 93.8\%, and label consistency at 94.9\%. For the 64 samples (5.4\%) identified as erroneous or ambiguous, annotators provided corrected labels, which were incorporated into the final dataset. A second round of verification on the corrected samples confirmed 100\% agreement among all three annotators.
\textit{iv: Quality Assurance.} Beyond the sampled verification, annotators also flagged 23 additional edge cases during the review process that required further deliberation. These cases were discussed in a joint session and resolved with detailed annotation guidelines, which were subsequently applied to re-examine all similar instances in the full dataset. This iterative process further strengthened the overall quality and consistency of TriMEM.

\subsection{Dataset Statistics}
\label{app:dataset}
\vspace{-0.5 em}
\begin{wrapfigure}{r}{0.5\textwidth}
\centering
    \includegraphics[width=1\linewidth]{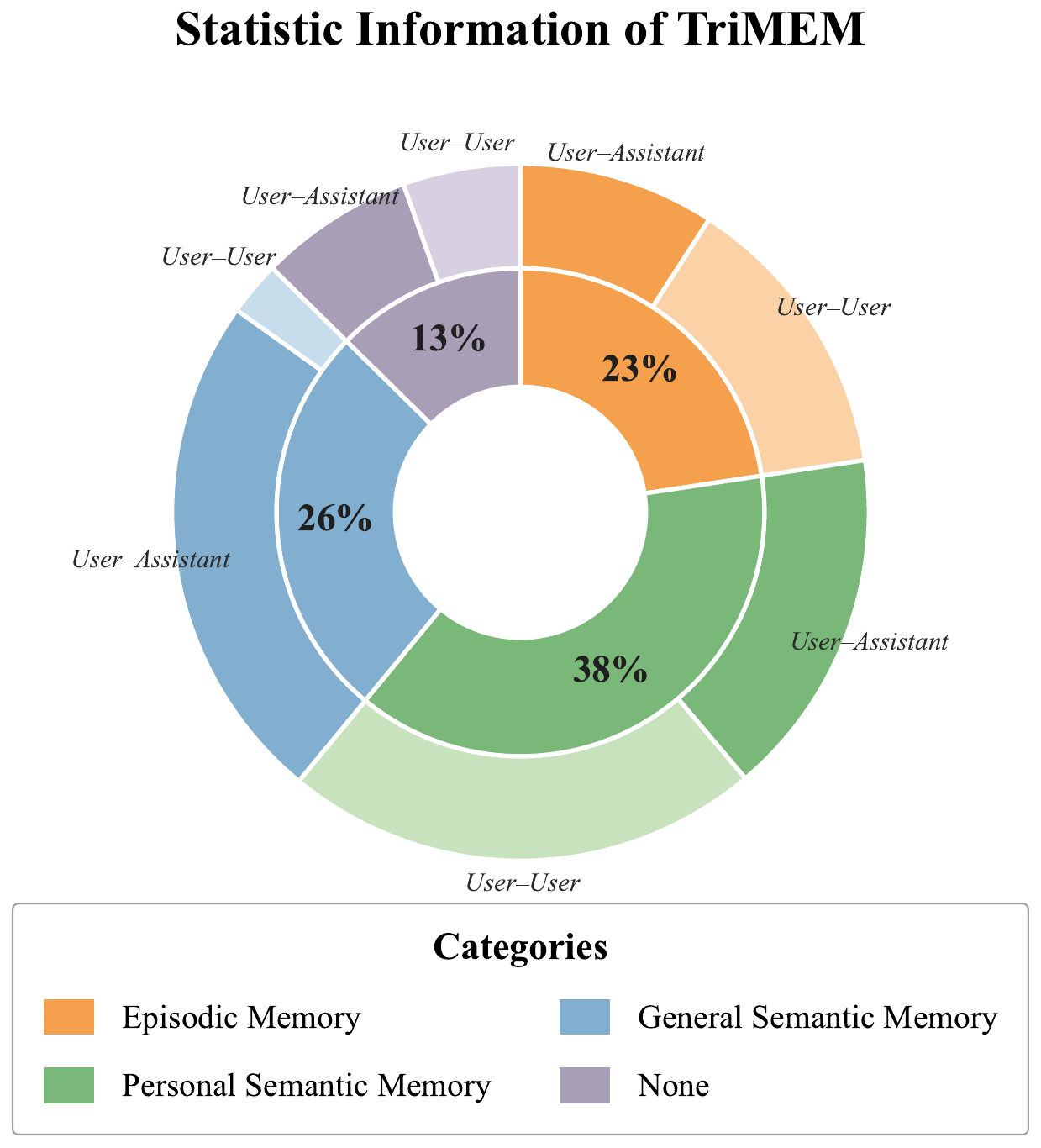}
    
    \caption[The Statistical Information of TriMEM. ``U-U'' denotes the user-to-user conversation, ``U-A'' denotes the user-to-assistant conversation.]%
          {The Statistical Information of TriMEM\protect\footnotemark.}
    \label{statistic}
    \vspace{-0.5 em}
\end{wrapfigure}
        \footnotetext{The percentage of the memory types is a relative ratio, as there is overlap between different memory types.}
The \ourdata dataset comprises 6,000 dialogues. User-assistant interactions
account for 58.77\% of the data and user-user interactions for 41.23%,
yielding a roughly 60/40 split. Each text contains an average of 91.06
tokens, indicating moderate linguistic complexity.

Coverage across the four memory categories is well distributed. Episodic
Memory covers 29.08\% of the dataset, Personal Semantic Memory 49.35\%,
General Semantic Memory 33.95\%, and None 16.25\%. Since a single dialogue
may carry multiple labels, these proportions sum above 100\%.

The breakdown by interaction type reveals clear patterns. General
Semantic Memory is dominated by user-assistant dialogues at 90.48\%,
reflecting the assistant's role in providing factual and world knowledge.
In contrast, Episodic Memory and Personal Semantic Memory slightly favor
user-user dialogues, with respective shares of 59.71\% and 57.72\%,
consistent with the more event-oriented and personal nature of peer
exchanges. The None class is approximately balanced, with user-assistant
turns at 56.82\% and user-user turns at 43.18\%.

This composition reflects the diversity of TriMEM and its capacity to
model distinct memory types across varied conversational settings. By
covering both assistant-guided and peer-to-peer exchanges, TriMEM
provides a comprehensive resource for memory-aware dialogue research
and supports tasks such as memory reasoning and personalized
conversation.

\begin{figure}[t]
\centering
\begin{minipage}{0.48\textwidth}
\centering
\includegraphics[width=1\textwidth]{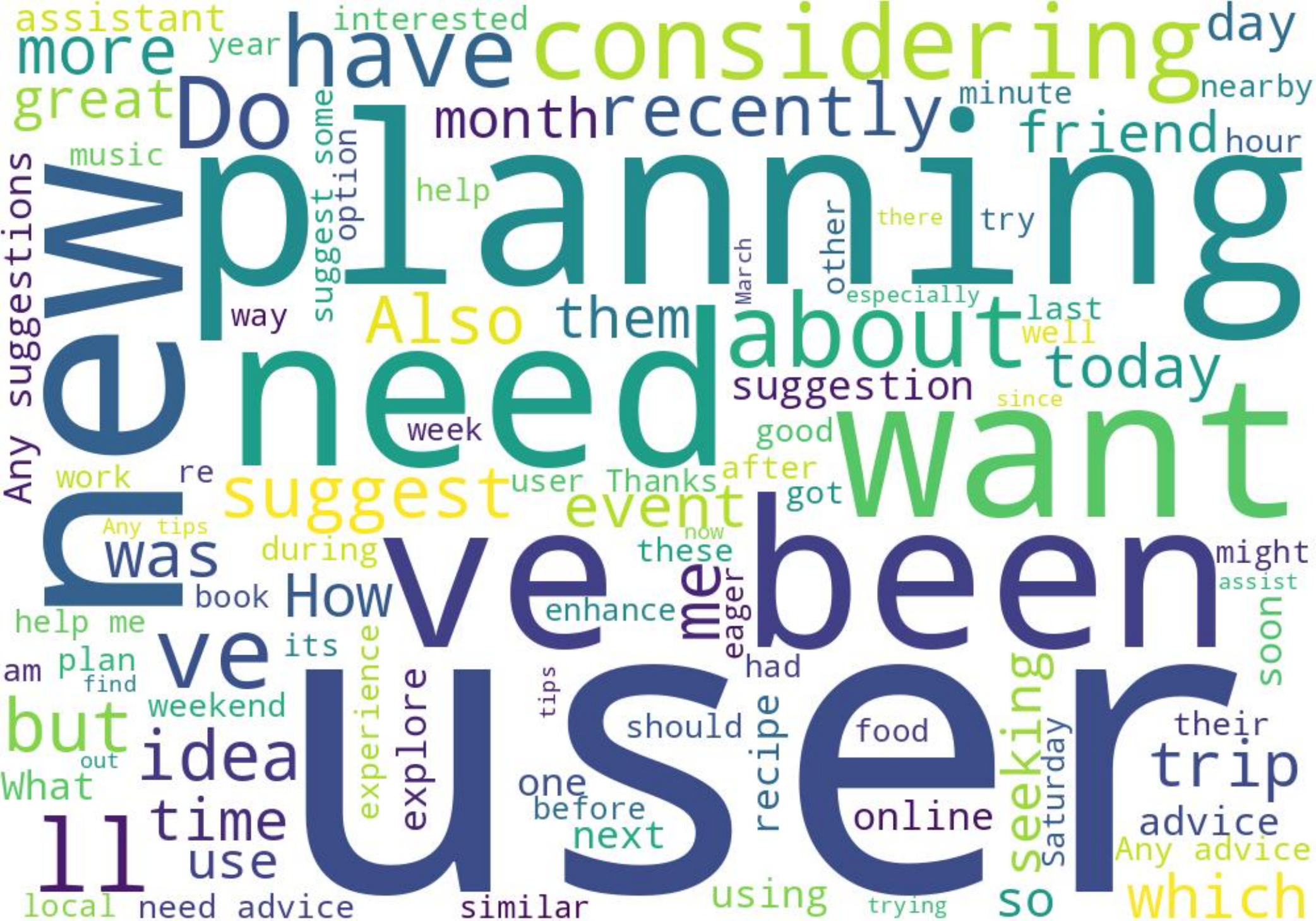}
\caption{{The Word Cloud Figure of Personal Semantic Memory on LongMemEval-S Dataset.}}
\label{fig:pm_wordcloud}
\end{minipage}
\hfill
\begin{minipage}{0.48\textwidth}
\centering
\includegraphics[width=1\textwidth]{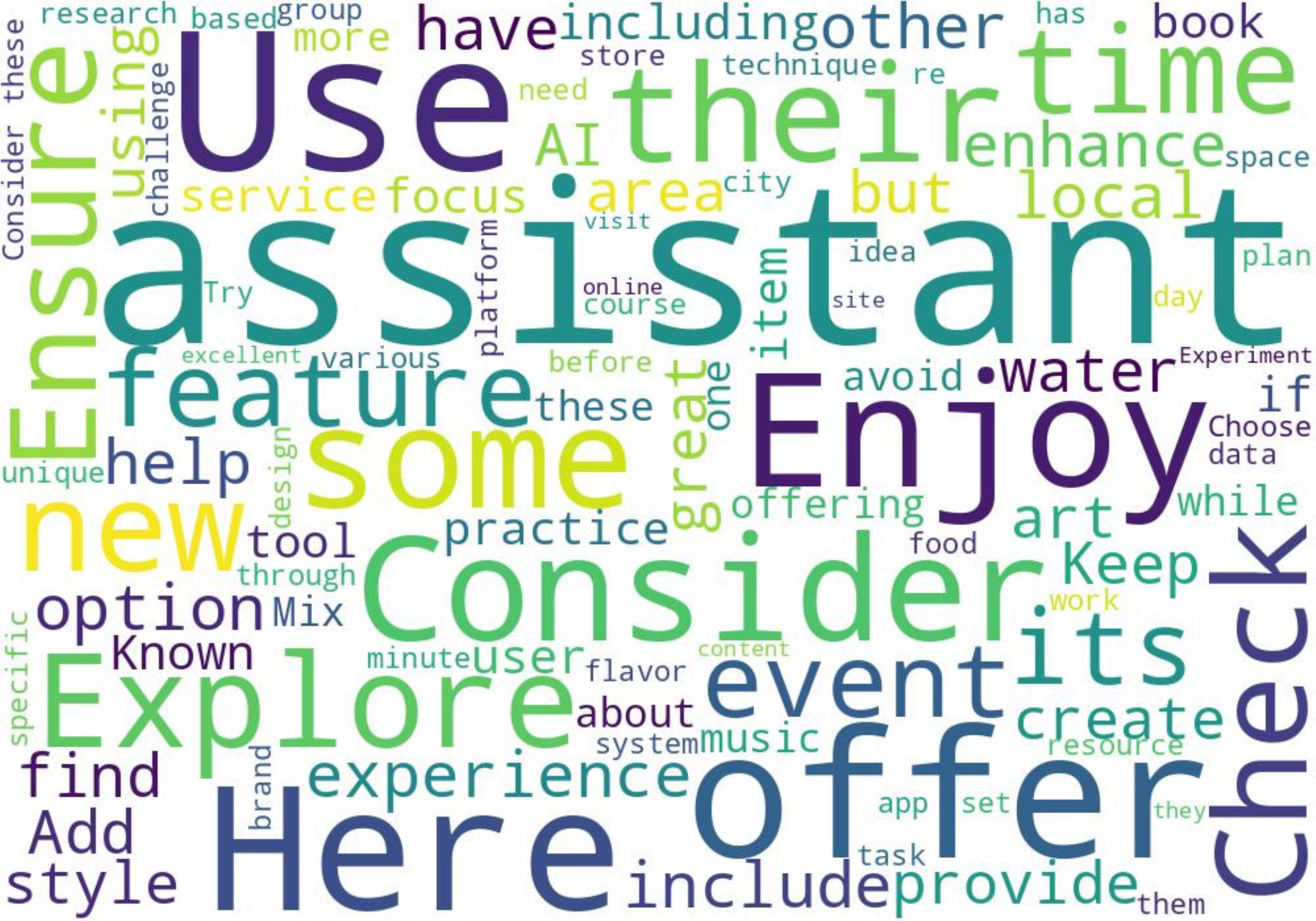}
\caption{{The Word Cloud Figure of General Semantic Memory on LongMemEval-S Dataset.}}
\label{fig:gm_wordcloud}
\end{minipage}
\end{figure}

\subsection{Router Classification Performance}
\label{app:trimem_evaluation}
In this study, we evaluated the classification performance of four large language models (LLMs) on the TriMEM dataset: Qwen3-8b \citep{qwen3}, Gemini-2.5-Flash \citep{gemini2.5}, GPT-4o-Mini \citep{gpt-4o}, and Qwen3-32b with multi-class prompt in Figure \ref{multi_class_prompt}. Additionally, our model was included for comparison. The evaluation metrics used were accuracy (ACC) and F1 score, which respectively measure the overall classification correctness and the balance between precision and recall. Performance was analyzed across three memory dimensions in TriMEM: Episodic Memory (EM), Personal Semantic Memory (PM), and General Semantic Memory (GM), and PerLTQA.

Experimental results indicate that our model outperformed all other LLMs in every category. For overall accuracy and F1 score, our model achieved 92.10\% and 91.75\%, respectively, significantly surpassing GPT-4o-Mini, the second-best model, which scored 83.55\% (ACC) and 76.94\% (F1). In the EM dimension, our model achieved an outstanding 98.80\% (ACC) and 97.81\% (F1), outperforming GPT-4o-Mini, which achieved 85.05\% (ACC) and 76.10\% (F1). For PM, our model reached 82.65\% (ACC) and 85.64\% (F1), while GPT-4o-Mini achieved slightly higher accuracy (87.18\%) but lower F1 (85.59\%). Finally, in GM, our model achieved 94.85\% (ACC) and 92.39\% (F1), outperforming others by a clear margin.

Despite our model's superior performance, the results highlight that existing LLMs struggle to achieve precise classification on TriMEM, requiring further advancements in memory modeling for improved accuracy and generalization.

\subsection{Data Leakage Analysis}

\begin{wrapfigure}{r}{0.5\textwidth}
\centering
    \includegraphics[width=1\linewidth]{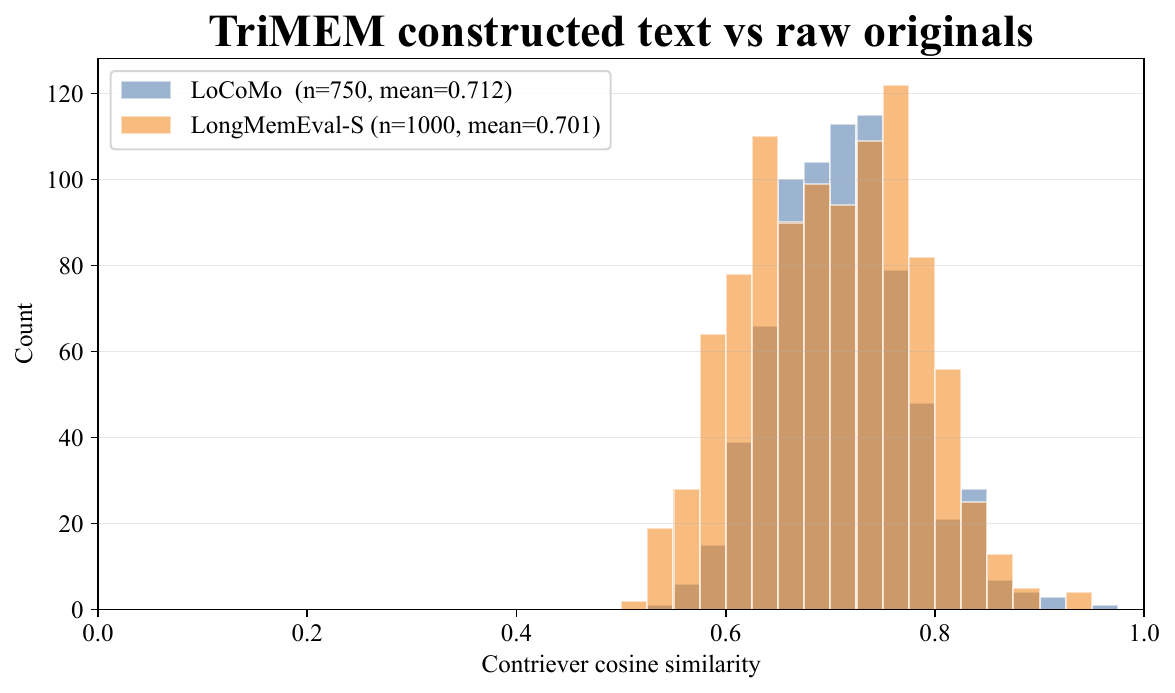}
    
    \caption
          {The Contriever Similarity Distribution between TriMem Context with the raw context on two benchmarks.}
    \label{fig:similarity}
    \vspace{-0.5 em}
\end{wrapfigure}

As shown in Fig.~\ref{fig:optim-step-hparam}, we compare the similarity between the transferred memories produced by \ourmodel and the original memories from the dataset using three surface-form metrics (Word Overlap Rate, BLEU, and 3-gram Overlap Rate) on LongMemEval-S and LoCoMo. Across both datasets, the distributions of all three metrics are concentrated in the low-similarity regime, and most samples fall below the threshold of 0.5. The corresponding frequency distributions in Fig.~\ref{fig:locomo_freq} and Fig.~\ref{fig:longmemeval_freq} further confirm this trend: the mean and median values remain low for each metric, with only a small fraction of cases exhibiting higher overlap.

To further address concerns about \emph{semantic-level} leakage that surface-form metrics cannot capture, we additionally measure semantic similarity using Contriever embeddings, which is the same encoder employed in our retrieval experiments. As shown in Fig.~\ref{fig:similarity}, the Contriever cosine similarity between transferred and original memories is centered around $0.70$ on both LoCoMo (mean $= 0.712$) and LongMemEval-S (mean $= 0.701$), with the distribution concentrated in the $0.6$--$0.8$ range and only a negligible fraction exceeding $0.9$. This level of semantic similarity reflects the \emph{type-level} consistency that is required by design---rewritten dialogues must preserve enough meaning to retain their original memory-type label---rather than instance-level duplication. Notably, the absence of a high-similarity peak ($>0.9$) indicates that the transferred texts are not semantic near-duplicates of their sources.

Taken together, the three surface-form metrics and the Contriever-based semantic analysis provide complementary evidence from two independent perspectives: \ourmodel preserves neither lexical patterns nor instance-level semantic content of the original benchmarks. This property is important for interpreting the role of our router: rather than relying on memorization of dataset-specific expressions or templates, the router is forced to route based on the generalized characteristics of memory types reflected in the transferred content. Consequently, the routing behavior learned by it is better aligned with the intended goal of type-level generalization, instead of reproducing or matching individual dataset instances. This conclusion is further corroborated by the strong out-of-distribution performance of the router on PerLTQA (Table~\ref{trimem_class}), a benchmark that was \emph{never} used in TriMEM construction, where the router still achieves $96$--$100\%$ accuracy across all three categories.
\subsection{Dataset Statistics}
\label{trimem_statistic}

These results indicate that the transferred memories are not near-duplicates of the original texts. In other words, \ourmodel does not preserve substantial lexical or n-gram-level content from the source memories, which provides direct evidence against data leakage through copying. This property is important for interpreting the role of our router: rather than relying on memorization of dataset-specific expressions or templates, the router is forced to route based on the generalized characteristics of memory types reflected in the transferred content. Consequently, the routing behavior learned by it is better aligned with the intended goal of type-level generalization, instead of reproducing or matching individual dataset instances.

"Crucially, our router achieves 96–100\% accuracy on \emph{PerLTQA}, a benchmark whose dialogues were \emph{never used} in TriMEM construction. This out-of-distribution performance demonstrates that the router learns generalizable type-level features rather than benchmark-specific patterns; if the router relied on dataset fingerprints, OOD performance would degrade sharply, which is not observed.

\begin{table}[t]\fontsize{8}{8.5}\selectfont 
\centering
\caption{The classification Performance on TriMEM with four LLM models. ``EM'' denotes Episodic Memory, ``PM'' denotes the Personal Semantic Memory, ``GM'' denotes the General Semantic Memory.}
\label{trimem_class}
\resizebox{\linewidth}{!}{
\renewcommand\arraystretch{1}
\tabcolsep=0.3cm
\scalebox{1.02}{
\begin{tabular}{lcccccccc}
\toprule
\multicolumn{1}{c}{\multirow{2}{*}{Type}} & \multirow{2}{*}{Model} & \multicolumn{4}{c}{TriMEM(In Distribution)}                                                                                                                    & \multicolumn{3}{c}{PerLTQA(Out  of Distribution)}            \\ \cmidrule(lr){3-6} \cmidrule(lr){7-9}
\multicolumn{1}{c}{}                      &                        & \multicolumn{1}{l}{Overall} & \multicolumn{1}{l}{EM} & \multicolumn{1}{l}{PM} & \multicolumn{1}{l}{GM} & Profile & Relationship & Events         \\ \midrule
\multicolumn{9}{c}{ACC (100 \%)}                                                                                                                                                                                                                                                                 \\ \midrule
Prompting                                 & Qwen3-8b               & 71.90                       & 81.65                               & 71.37                                        & 62.68                                       & 88.65               & 57.21               & 46.23          \\
Prompting                                 & Gemini-2.5-Flash       & 77.84                       & 82.47                               & 76.72                                        & 74.33                                       & 100.00              & 75.06               & 91.83          \\
Prompting                                 & GPT-4o-Mini            & \uline{83.55}                 & 85.05                               & \textbf{87.18}                               & \uline{78.42}                                 & 100.00              & 89.47               & 92.00          \\
Prompting                                 & Qwen3-32b              & 77.75                       & \uline{87.82}                         & 80.88                                        & 64.55                                       & 100.00              & 95.64               & 87.55          \\
Training                                  & Ours                   & \textbf{92.10}              & \textbf{98.80}                      & \uline{82.65}                                  & \textbf{94.85}                              & \textbf{100.00}     & \textbf{96.49}      & \textbf{96.94} \\ \midrule
\multicolumn{9}{c}{F1 (100 \%)}                                                                                                                                                                                                                                                                  \\ \midrule
Prompting                                 & Qwen3-8b               & 55.26                       & 49.05                               & 55.58                                        & 61.15                                       & 93.98               & 72.78               & 63.23          \\
Prompting                                 & Gemini-2.5-Flash       & 72.75                       & 73.47                               & 74.55                                        & 70.22                                       & 100.00              & 85.75               & 95.74          \\
Prompting                                 & GPT-4o-Mini            & \uline{76.94}                 & 76.10                               & \uline{85.59}                                  & \uline{69.14}                                 & 100.00              & 94.44               & 95.83          \\
Prompting                                 & Qwen3-32b              & 72.03                       & \uline{76.80}                         & 75.67                                        & 63.61                                       & 100.00              & 98.29               & 93.36          \\
Training                                  & Ours                   & \textbf{91.75}              & \textbf{97.81}                      & \textbf{85.64}                               & \textbf{92.39}                              & \textbf{100.00}     & \textbf{98.45}      & \textbf{98.21} \\ \bottomrule
\end{tabular}}}
\end{table}

\begin{figure}[t]
    \centering
    \begin{subfigure}[t]{0.4\linewidth}
        \centering
        \includegraphics[width=\linewidth]{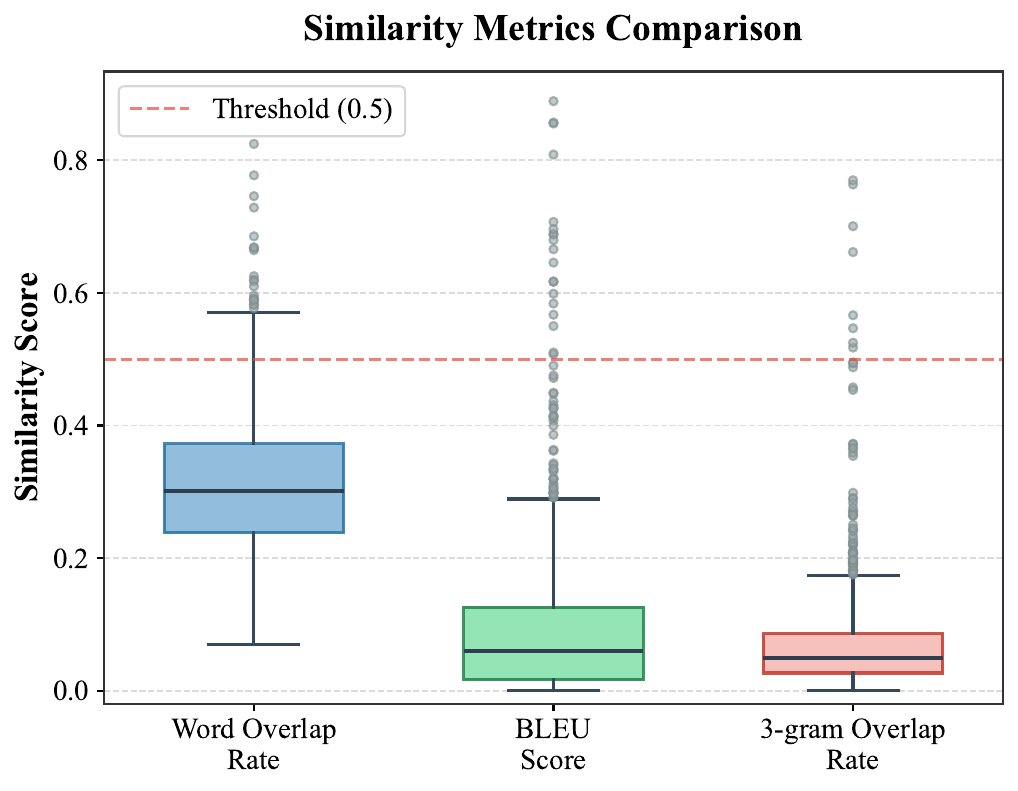}
        \caption{LongMemEval-S}
        \label{fig:longmemeval_compare}
    \end{subfigure}
    \begin{subfigure}[t]{0.4\linewidth}
        \centering
        \includegraphics[width=\linewidth]{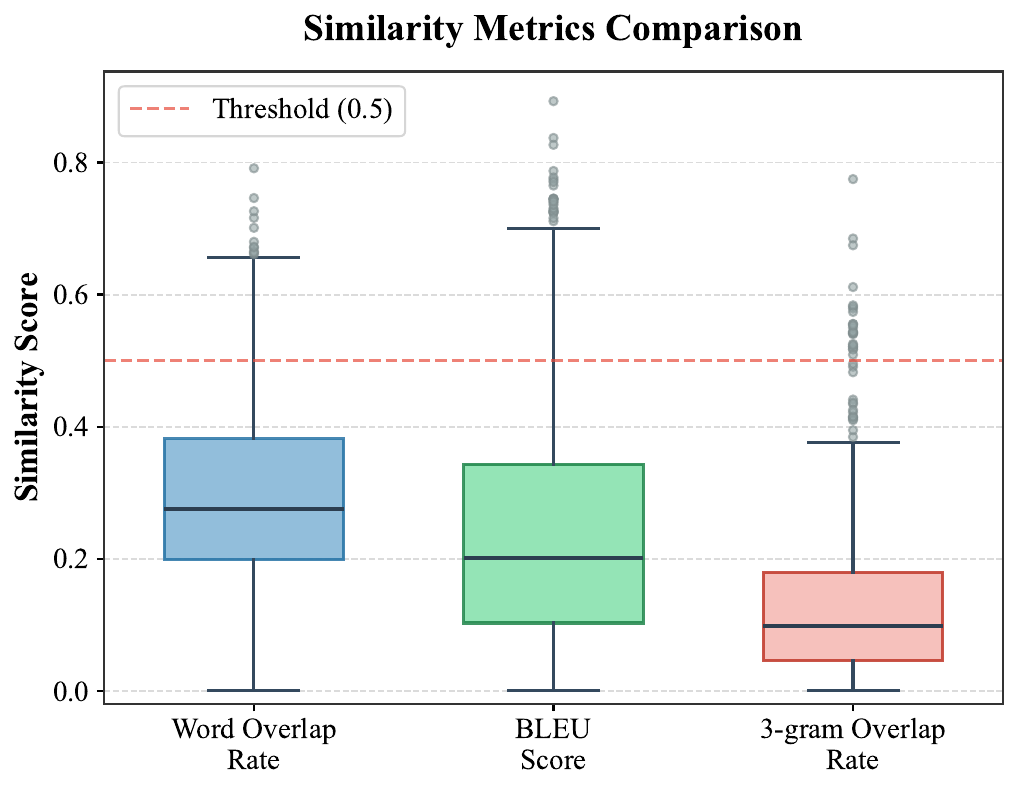}
        \caption{LoCoMo}
        \label{fig:locomo_compare}
    \end{subfigure}
\vspace{0.6em}
    \caption{The Distribution of three Similarity Metrics (Word Overlap Rate, BLEU Score, and 3-gram Overlap Rate) between Transferred Memory and Original Memory on LongMemEval-S and LoCoMo datsets. The red dotted line indicates a similarity threshold of 0.5. The majority of samples fall in the low similarity region, indicating a significant difference between the transformed text and the original text. }
    \label{fig:optim-step-hparam}
\end{figure}
\begin{figure*}[t]
    \centering

    \begin{subfigure}[t]{0.33\linewidth}
        \centering
        \includegraphics[width=\linewidth]{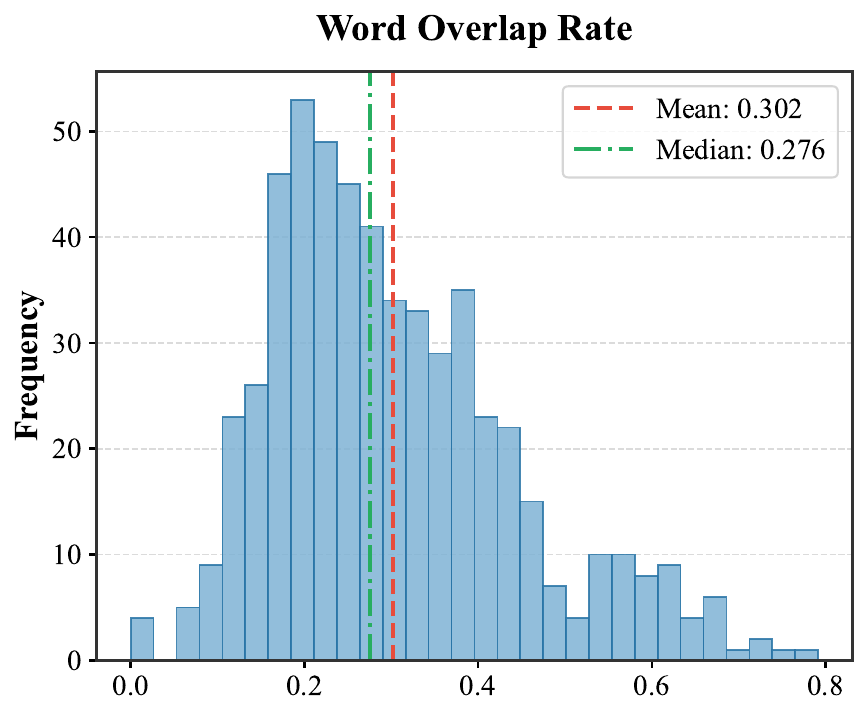}
        \label{fig:locomo_word}
    \end{subfigure}
    \begin{subfigure}[t]{0.32\linewidth}
        \centering
        \includegraphics[width=\linewidth]{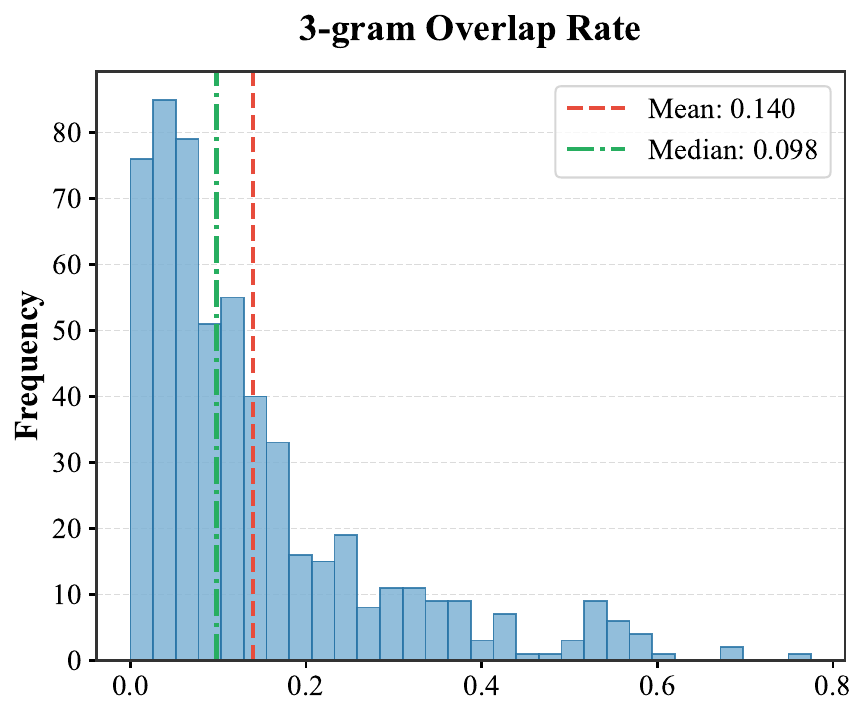}
        \label{fig:locomo_ngram}
    \end{subfigure}
\begin{subfigure}[t]{0.32\linewidth}
        \centering
        \includegraphics[width=\linewidth]{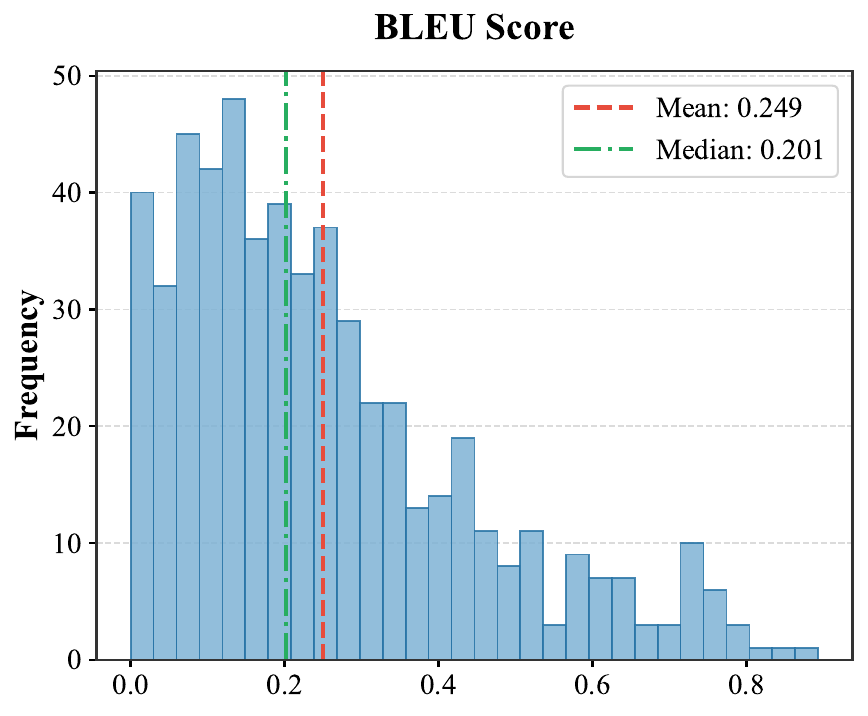}
        \label{fig:locomo_bleu}
    \end{subfigure}
    \caption{The Frequency Distributions of three Similarity Metrics between Transferred Memory and Original Memory on LoCoMo dataset. The red/green dashed lines indicate the mean and median, respectively. The distribution is overall low and long-tailed, further indicating that the transformed memory differs significantly from the original.}
    \label{fig:locomo_freq}
\end{figure*}
\begin{figure*}[t]
    \centering
    \begin{subfigure}[t]{0.32\linewidth}
        \centering
        \includegraphics[width=\linewidth]{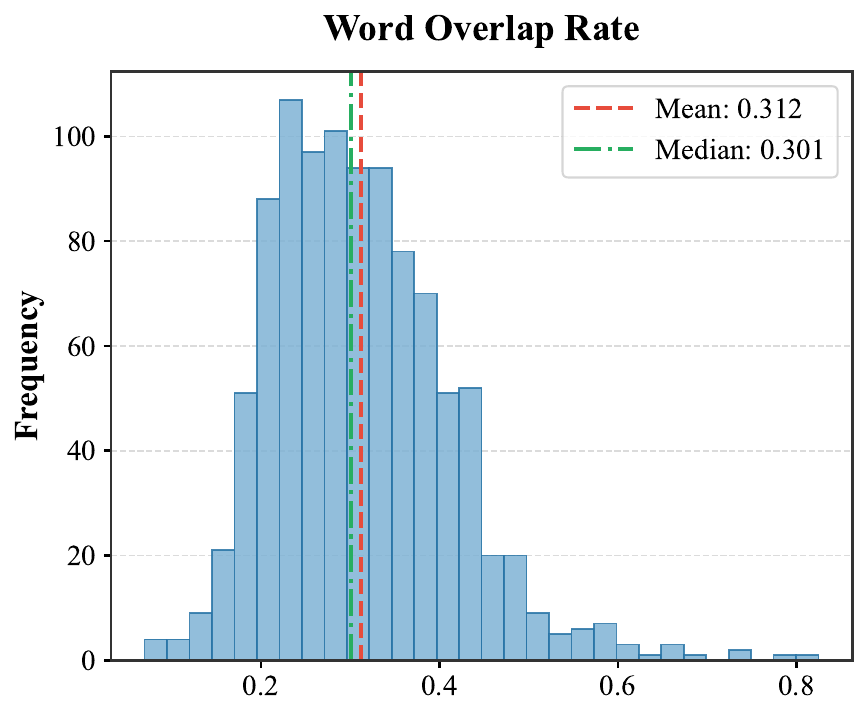}
        \label{fig:longmemeval_word}
    \end{subfigure}
    \begin{subfigure}[t]{0.32\linewidth}
        \centering
        \includegraphics[width=\linewidth]{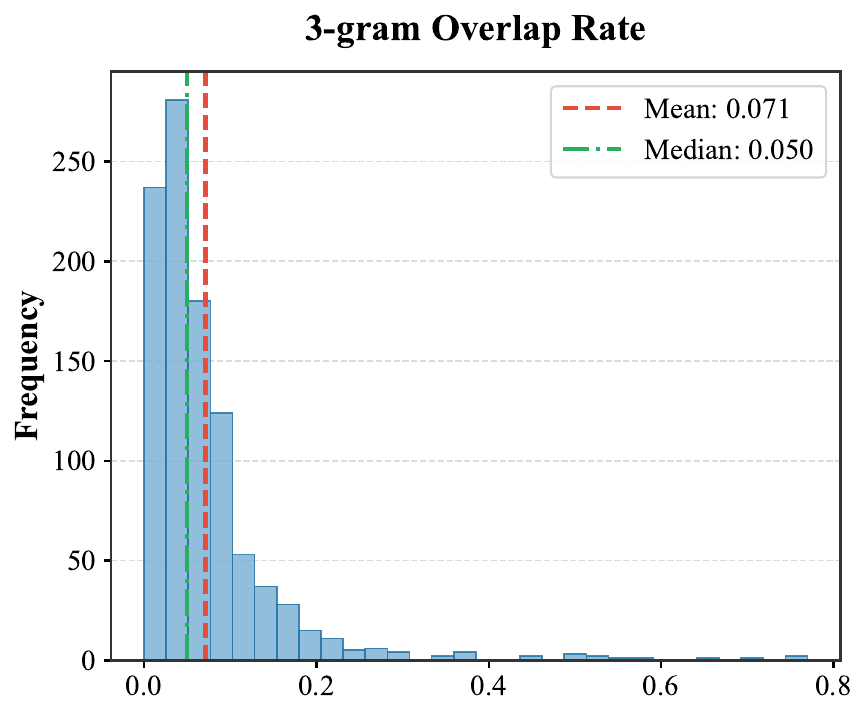}
        \label{fig:longmemeval_ngram}
    \end{subfigure}
\begin{subfigure}[t]{0.32\linewidth}
        \centering
        \includegraphics[width=\linewidth]{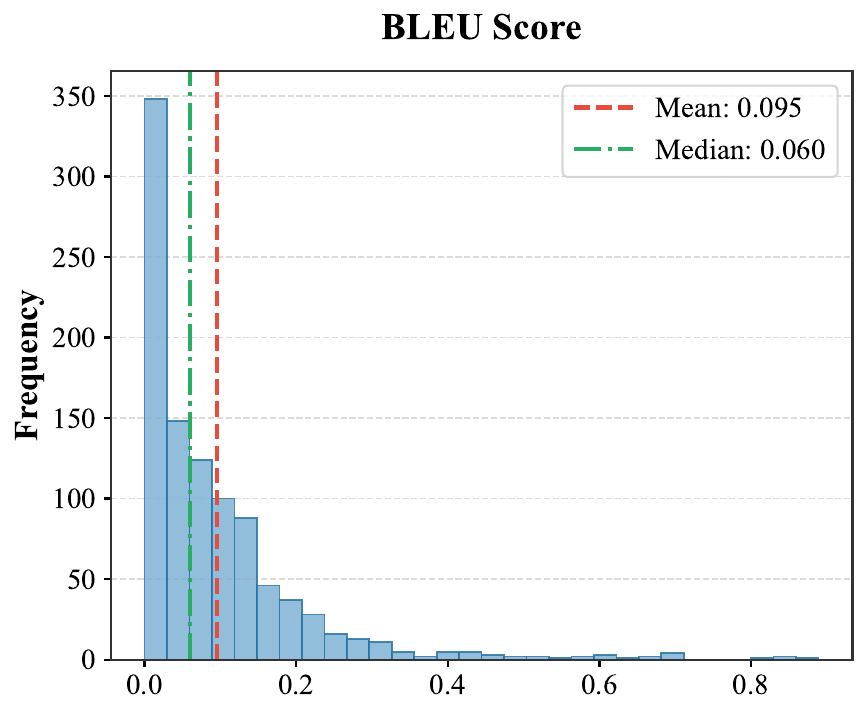}
        \label{fig:longmemeval_bleu}
    \end{subfigure}
    \caption{The Frequency Distributions of three Similarity Metrics between Transferred Memory and Original Memory on LongMemEval-S dataset. The red/green dashed lines indicate the mean and median, respectively. It is evident that all three indicators are concentrated in the lower range.}
    \label{fig:longmemeval_freq}
\end{figure*}
\section{Benchmark and Baseline Details}
\subsection{Evaluation Dataset Details}
\label{dataset}
As statistic information in shown in Table~\ref{tab:dataset-stats}, we evaluate our model with the following datasets:
\begin{itemize}[leftmargin=*]
     \item \textbf{LongMemEval-S} \citep{longmemeval} is a dataset designed to evaluate long-term memory in conversational AI systems. It includes 50 question sessions, each with an average of 115,000 tokens, providing a compact yet challenging benchmark. This dataset tests core memory abilities such as information extraction, multi-session reasoning, temporal reasoning, and abstention. LongMemEval-s requires chat assistants to process user-AI dialogues effectively, retain pertinent information over extended conversation histories, and maintain consistency across sessions. By focusing on shorter yet complex contexts, LongMemEval-s serves as a foundational benchmark for assessing memory performance.
     \item \textbf{LoCoMo} \citep{locomo} is a long-context memory benchmark, evaluates AI systems' ability to handle lengthy dialogues and long-range dependencies. It features conversations with an average of 300 turns, 9,000 tokens, and up to 35 sessions, with a publicly available subset, LoCoMo-10, comprising ten high-quality conversations. The dataset is well-suited for testing long-context language models (LLMs) and retrieval-augmented generation (RAG) systems. Despite advances, systems still struggle with temporal reasoning and maintaining coherence in extended interactions, highlighting the challenge of replicating human-like memory performance.
     \item \textbf{LongMemEval-M} \citep{longmemeval} builds on LongMemEval-S with a more extensive evaluation setting, encompassing 500 sessions per question and an average of 1.5 million tokens per conversation. This dataset is designed for rigorous testing of long-term memory and scalability. It challenges systems to handle dynamic user-AI interactions across an even greater number of sessions, requiring robust memory mechanisms for historical consistency and information recall. The dataset underscores the substantial performance gap between state-of-the-art AI systems and human capabilities in long-term memory retention and reasoning over extended contexts.
     \item \textbf{PerLTQA}~\citep{perltqa} is a benchmark released to study personalized long-term memory in conversational AI. Each instance is built around a virtual persona and bundles four complementary memory views—structured profile, social relationships, life events, and naturalistic multi-turn dialogues—together with a large pool of human-curated question–answer pairs whose evidence is grounded in specific dialogue turns. By unifying semantic memory (facts about the user and their world) with episodic memory (what was actually said), PerLTQA enables joint evaluation of memory classification (which memory type to consult), retrieval (locating the supporting record), and fusion (synthesizing across memories) for downstream QA. Its persona-centric, multi-faceted design makes it complementary to conversation-only benchmarks such as LOCOMO and LongMemEval, and a natural testbed for memory-augmented LLM agents.

\end{itemize}

% 在导言区添加: \usepackage{booktabs}  \usepackage{multirow}

\begin{table}[t]
\centering
\caption{Statistics of the long-term memory QA datasets used in our study.
``\#Inst.'' counts the top-level units. ``Sess./Inst.'' and
``Turns/Sess.'' are averages; ``Words/Inst.'' is the average number of
whitespace-tokenized words per instance.}
\label{tab:dataset-stats}
\setlength{\tabcolsep}{4pt}
\begin{tabular}{lrrrrrrc}
\toprule
\textbf{Dataset} & \textbf{\#Inst.} & \textbf{\#QA} & \textbf{\#Sess.} & \textbf{Sess./Inst.} & \textbf{Turns/Sess.} & \textbf{Words/Inst.} & \textbf{\#Q-Types} \\
\midrule
LoCoMo            & 10  & 1{,}986 & 272      & 27.2            & 21.6 & 15.9\,k   & 5 \\
LongMemEval-S & 500 & 500     & 25{,}097 & 50.2            & \phantom{0}9.8  & 79.0\,k   & 6 \\
LongMemEval-M & 500 & 500     & 249{,}525 & 499.1          & \phantom{0}9.8  & 777.6\,k  & 6 \\
PerLTQA           & 32  & 2{,}838 & 128 & 4.0 & 21.7 & \phantom{0}8.9\,k & 4\\
\bottomrule
\end{tabular}

\vspace{2pt}
\end{table}

\subsection{Baseline Descriptions}
\label{baseline}
We evaluate our model with the following baselines:
\begin{itemize}[leftmargin=*]
\item \textbf{Contriever} \citep{contriever} is a retrieval model that serves as a strong baseline by efficiently retrieving and ranking relevant documents based on dense representations.
\item \textbf{MPNet} \citep{mpnet} is a pre-trained transformer model that combines masked language modeling and permuted language modeling to better capture dependencies between tokens for improved natural language understanding.
\item  \textbf{MiniLM} \citep{minilm} is a lightweight and efficient transformer-based model designed for natural language understanding tasks, offering competitive performance with significantly fewer parameters compared to larger models.

\item \textbf{QAMiniLM} \citep{minilm} is an extension of MiniLM, designed specifically for Question Answering tasks, leveraging MiniLM's lightweight architecture while fine-tuning it to optimize performance for extracting precise answers from text.

\item \textbf{HyDE} \citep{hyde} generates a "hypothetical" document using a language model and retrieves similar real documents via dense embeddings to improve zero-shot dense retrieval without needing labeled relevance data. The prompt of HyDE could be found in Figure \ref{hyde_p}.

\item \textbf{Mill} \citep{mill} utilizes large language models to generate diverse sub-queries and documents, followed by a mutual verification process to synergize generated and retrieved data for effective zero-shot query expansion. The prompt of Mill could be found in Figure \ref{mill_p}.

\item \textbf{Query2Doc} \citep{query2doc} improves sparse and dense retrieval systems by generating pseudo-documents through few-shot prompting of large language models (LLMs) and using them for query expansion, achieving significant performance boosts without fine-tuning. The prompt of QueryDoc could be found in Figure \ref{query2doc_p}.

\item \textbf{Secom} \citep{secom} enhances memory retrieval in long-term conversations by segmenting them into topically coherent units and applying compression-based denoising for improved retrieval accuracy and semantic quality.

\item \textbf{HippoRAG2} \citep{hipporag} is a framework that enhances retrieval-augmented generation (RAG) by integrating deeper passage connections and effective LLM use, achieving superior performance on factual, sense-making, and associative memory tasks, thereby advancing non-parametric continual learning for LLMs.

\item \textbf{Raptor} \citep{raptor}  improves retrieval-augmented language models by recursively embedding, clustering, and summarizing text into a hierarchical tree, enabling multi-level abstraction retrieval and achieving state-of-the-art performance in complex reasoning tasks.

\item \textbf{A-Mem} \citep{a-mem} introduces an agentic memory system for LLMs that dynamically organizes and evolves memories by leveraging Zettelkasten principles, enabling adaptive and context-aware knowledge management through dynamic linking and indexing.

\item \textbf{MemoryOS}~\citep{memoryos} treats LLM-agent memory as an
operating-system problem, organizing it into three tiers, short-term
dialogue pages, mid-term topic segments updated by a heat score
(visits, length, recency), and long-term user/agent personas---and
coordinating Storage, Updating, two-stage segment-then-page Retrieval,
and Generation modules.
\item \textbf{LightMem}~\citep{lightmem} follows the Atkinson--Shiffrin
three-stage view: a sensory stage compresses tokens with LLMLingua-2
and segments turns by topic, a topic-aware short-term stage summarizes
each chunk, and a long-term stage uses fast soft writes online plus an
offline ``sleep-time update'' that merges and de-duplicates entries.
\end{itemize}

\section{Additional Experiments} 
\label{app:experiment}

\begin{table*}[t]\fontsize{8}{8.5}\selectfont 
\centering
\caption{The ablation study of different retrievers.}
\label{table:retriever_ablation}
\vspace{-1 em}
\resizebox{\linewidth}{!}{
\renewcommand\arraystretch{1}
\tabcolsep=0.1cm
\scalebox{1.02}{
\begin{tabular}{ccccccccccccc} \toprule
\multirow{2}{*}{Method} & \multicolumn{3}{c}{LongMemEval-S}                & \multicolumn{3}{c}{LoCoMo}                       & \multicolumn{3}{c}{PerLTQA}                      \\ \cmidrule(lr){2-4} \cmidrule(lr){5-7} \cmidrule(lr){8-10}
                        & Recall@1       & Recall@3       & Recall@5       & Recall@1       & Recall@3       & Recall@5       & Recall@1       & Recall@3       & Recall@5       \\ \midrule
\multicolumn{10}{c}{Base   Retriever:   MPNet}                                                                                                                                   \\ \midrule
HyDE                    & 25.11          & 53.19          & 66.60          & 27.29          & 44.76          & 53.52          & 30.62          & 65.68          & 74.17          \\
Mill                    & 27.45          & 53.19          & 64.89          & 21.00          & 36.81          & 44.51          & 36.15          & 67.27          & 74.77          \\
Query2Doc               & 27.45          & 54.68          & 68.72          & 26.38          & 46.12          & 54.03          & 33.51          & 65.82          & 74.63          \\
SeCom                   & 37.66          & 62.13          & 73.19          & 21.45          & 34.24          & 41.49          & 36.19          & 65.72          & 73.75          \\
A-Mem                   & 27.80          & 56.60          & 67.80          & 29.20          & 49.65          & 58.61          & 32.42          & 66.38          & 74.91          \\
MemoryOS                & 31.52          & 57.20          & 65.55          & 23.21          & 42.45          & 51.26          & 42.07          & 65.12          & 72.23          \\
Ours                    & \textbf{47.87} & \textbf{74.68} & \textbf{83.40} & \textbf{30.87} & \textbf{49.45} & \textbf{58.11} & \textbf{58.00} & \textbf{78.58} & \textbf{85.17} \\ \midrule
\multicolumn{10}{c}{Base   Retriever:   MiniLM}                                                                                                                                  \\ \midrule
HyDE                    & 32.34          & 61.70          & 72.55          & 22.00          & 37.92          & 45.87          & 23.15          & 59.76          & 67.58          \\
Mill                    & 36.81          & 63.62          & 75.11          & 16.26          & 31.27          & 38.97          & 22.45          & 52.43          & 60.50          \\
Query2Doc               & 29.79          & 61.49          & 72.34          & 22.91          & 38.32          & 46.78          & 24.77          & 59.76          & 66.98          \\
SeCom                   & 37.02          & 64.04          & 77.45          & 16.11          & 28.54          & 34.74          & 23.08          & 53.49          & 61.17          \\
A-Mem                   & 35.40          & 63.60          & 73.60          & 21.25          & 37.81          & 46.32          & 31.15          & 62.65          & 71.49          \\
MemoryOS                & 36.95          & 60.54          & 70.98          & 20.34          & 34.39          & 41.49          & 29.25          & 53.07          & 59.16          \\
Ours                    & \textbf{50.21} & \textbf{77.66} & \textbf{84.68} & \textbf{28.30} & \textbf{45.67} & \textbf{53.47} & \textbf{49.01} & \textbf{71.35} & \textbf{79.63} \\ \midrule
\multicolumn{10}{c}{Base   Retriever:   QAMiniLM}                                                                                                                                \\ \midrule
HyDE                    & 31.91          & 61.06          & 73.19          & 17.27          & 30.56          & 37.61          & 30.62          & 65.68          & 74.17          \\
Mill                    & 35.32          & 63.19          & 74.04          & 9.42           & 18.98          & 26.13          & 36.15          & 67.27          & 74.77          \\
Query2Doc               & 33.62          & 61.06          & 75.11          & 19.08          & 32.63          & 40.94          & 33.51          & 65.82          & 74.63          \\
SeCom                   & 41.06          & 71.06          & 81.28          & 15.56          & 28.80          & 36.60          & 23.82          & 55.29          & 63.57          \\
A-Mem                   & 32.40          & 61.00          & 71.60          & 20.49          & 37.51          & 45.17          & 30.83          & 62.68          & 70.75          \\
MemoryOS                & 37.37          & 59.71          & 69.31          & 22.81          & 37.66          & 44.91          & 42.07          & 65.12          & 72.23          \\
Ours                    & \textbf{49.79} & \textbf{76.17} & \textbf{84.47} & \textbf{27.14} & \textbf{44.86} & \textbf{52.87} & \textbf{50.39} & \textbf{72.83} & \textbf{80.23}  \\ \bottomrule
\end{tabular}}}
\end{table*}

\subsection{Retriever Ablation}
\label{app:retriever_ablation}

Our main experiments adopt Contriever as the default retriever. A natural question is whether the performance gains of MemoType are specific to one particular retriever or generalize across different embedding models. To investigate this, we conduct retriever ablation experiments by replacing Contriever with three alternative retrievers: MPNet, MiniLM, and QAMiniLM, covering both general-purpose and task-specific encoders.

As shown in Table~\ref{table:retriever_ablation}, we evaluate MemoType against six baselines across three datasets under each retriever. Across all retriever and dataset combinations, MemoType consistently achieves the highest recall scores at ranks 1, 3, and 5. For instance, with MPNet on LongMemEval-S, MemoType achieves a Recall@1 of 47.87, outperforming the strongest baseline SeCom by over 10 points. With MiniLM on PerLTQA, MemoType reaches 49.01 in Recall@1, surpassing A-Mem by nearly 18 points. Similar trends hold for QAMiniLM, where MemoType achieves 49.79 Recall@1 on LongMemEval-S, exceeding SeCom by 8.73 points.

These results reveal two important findings. First, the type-aware retrieval strategy provides consistent improvements regardless of the underlying embedding model, confirming that the gains stem from the diversity-aware routing mechanism rather than a favorable interaction with a specific encoder. Second, the improvement margins remain substantial even with weaker retrievers, suggesting that MemoType effectively compensates for encoder limitations by directing each query to the most relevant memory subset with a tailored strategy. This demonstrates the generalizability and robustness of our framework.

\begin{table*}[t]\fontsize{8}{8.5}\selectfont 
\centering
\caption{The ablation study of the proposed classification strategy on four datasets. ``w/ Classify'' represents adopting the classification strategy to filter memory, ``w/o Classify'' denotes the opposite.}
\label{table:router_ablation}
\resizebox{\linewidth}{!}{
\renewcommand\arraystretch{1}
\tabcolsep=0.2cm
\scalebox{1.02}{
\begin{tabular}{ccccccccc}
\toprule
Method       & Recall@1       & NDCG@1         & Recall@3       & NDCG@3         & Recall@5       & NDCG@5         & Recall@10      & NDCG@10        \\ \midrule
\multicolumn{9}{c}{LongMemEval-S}                                                                                                                    \\ \midrule
w/o Classify & 43.40          & 43.40          & 73.62          & 54.00          & 83.83          & 60.33          & \textbf{94.26} & 65.79          \\
w/ Classify  & \textbf{50.21} & \textbf{50.21} & \textbf{79.79} & \textbf{60.32} & \textbf{86.81} & \textbf{65.47} & 92.34          & \textbf{70.15} \\ \midrule
\multicolumn{9}{c}{LoCoMo}                                                                                                                           \\ \midrule
w/o Classify & 20.75          & 20.75          & 36.56          & 30.86          & 44.06          & 34.16          & 53.93          & 37.31          \\
w/ Classify  & \textbf{25.13} & \textbf{25.13} & \textbf{41.99} & \textbf{35.88} & \textbf{49.70} & \textbf{39.06} & \textbf{59.47} & \textbf{42.36} \\ \midrule
\multicolumn{9}{c}{LongMemEval-M}                                                                                                                    \\ \midrule
w/o Classify & 32.77          & 32.77          & 57.45          & 40.85          & 67.45          & 46.22          & 80.34          & 51.50          \\
w/ Classify  & \textbf{37.45} & \textbf{37.45} & \textbf{63.19} & \textbf{44.37} & \textbf{73.83} & \textbf{50.41} & \textbf{80.64} & \textbf{54.52} \\ \midrule
\multicolumn{9}{c}{PerLTQA}                                                                                                                          \\ \midrule
w/o Classify & 55.32          & 55.32          & 80.44          & 77.88          & 86.54          & 80.75          & 92.35          & 82.79          \\
w/ Classify  & \textbf{61.98} & \textbf{61.98} & \textbf{82.45} & \textbf{80.42} & \textbf{88.09} & \textbf{83.09} & \textbf{93.80} & \textbf{85.07}     \\ \bottomrule            
\end{tabular}}}
\end{table*}

\subsection{Router Performance Ablation} \label{app:router_ablation}

A core design of MemoType is the memory type router that classifies both memories and queries into distinct types, enabling type-aware retrieval within a filtered subset rather than the full corpus. To verify whether this routing mechanism genuinely contributes to retrieval quality, we compare two settings: ``w/ Classify'', which filters the memory corpus based on predicted query types before retrieval, and ``w/o Classify'', which retrieves from the entire unfiltered corpus. Importantly, no retrieval enhancement strategies are applied in either setting, isolating the effect of classification filtering from other components.

As shown in Table~\ref{table:router_ablation}, the classification strategy yields consistent improvements across all four datasets. On LongMemEval-S, filtering improves Recall@1 from 43.40 to 50.21 and NDCG@5 from 60.33 to 65.47, indicating that narrowing the search space helps the retriever focus on semantically relevant candidates. On LoCoMo, the gains are even more pronounced, with Recall@3 increasing from 36.56 to 41.99 and NDCG@10 rising by 5.05 points. On LongMemEval-M, the most challenging dataset with nearly 500 sessions per question, Recall@1 improves by 4.68 points, demonstrating that type-aware filtering becomes increasingly valuable as the corpus scale grows. On PerLTQA, Recall@1 increases from 55.32 to 61.98 with consistent gains across all metrics.

These results confirm that the router effectively reduces retrieval noise by eliminating irrelevant memory categories from the candidate pool. The improvement is achieved purely through corpus filtering without any query or memory augmentation, validating that accurate memory classification alone provides a strong foundation for downstream retrieval.

\begin{table*}[t]\fontsize{8}{8.5}\selectfont 
\centering
\caption{The ablation study of memory pruning strategy on three datasets. ``w/ Prun'' denotes adopting the proposed memory pruning strategy,  and``w/o Prun'' denotes the contrary.}
\label{table:prun}
\resizebox{\linewidth}{!}{
\renewcommand\arraystretch{1}
\tabcolsep=0.25cm
\scalebox{1.02}{ \begin{tabular}{ccccccccccccc} \toprule Strategy & GPT4Judge      & F1             & BLEU           & Rouge1         & Rouge2         & RougeL         & RougeLsum      & BERTScore      \\ \midrule
\multicolumn{9}{c}{LongMemEval-S}                                                                                                                \\ \midrule
w/o Prun & 55.60          & 19.55          & 4.48           & 21.32          & 10.17          & 19.54          & 19.70          & 85.03          \\
w/ Prun  & \textbf{56.60} & \textbf{20.31} & \textbf{4.91}  & \textbf{22.18} & \textbf{10.70} & \textbf{20.28} & \textbf{20.32} & \textbf{85.22} \\ \midrule
\multicolumn{9}{c}{LoCoMo}                                                                                                                       \\ \midrule
w/o Prun & 48.39          & 20.30          & 4.11           & 21.36          & 10.63          & 19.93          & 19.91          & 85.55          \\
w/ Prun  & \textbf{49.19} & \textbf{22.23} & \textbf{4.64}  & \textbf{23.28} & \textbf{11.83} & \textbf{21.83} & \textbf{21.86} & \textbf{85.80} \\ \midrule
\multicolumn{9}{c}{PerLTQA}                                                                                                                      \\ \midrule
w/o Prun & 62.58          & 45.69          & 19.27          & 49.61          & 31.84          & 43.23          & 43.24          & 91.10          \\
w/ Prun  & \textbf{62.83} & \textbf{47.26} & \textbf{20.65} & \textbf{51.20} & \textbf{33.02} & \textbf{44.78} & \textbf{44.79} & \textbf{91.61} \\ \midrule
\multicolumn{9}{c}{LongMemEval-M}                                                                                                                \\ \midrule
w/o Prun & 49.60          & 17.78          & 4.05           & 19.25          & 8.61           & 17.41          & 17.50          & 84.62          \\
w/ Prun  & \textbf{50.20} & \textbf{18.77} & \textbf{4.24}  & \textbf{20.36} & \textbf{9.19}  & \textbf{18.56} & \textbf{18.64} & \textbf{85.02} \\ \bottomrule \end{tabular}
}}
\end{table*}

\subsection{Memory Pruning Ablation} 
\label{app:pruning_ablation}
While the main paper presents the pruning ablation on F1 for two datasets, we provide comprehensive results across all four datasets and all generation metrics in this section. The motivation is to examine whether the query-conditioned pruning module consistently benefits generation quality, or whether its effect is limited to specific metrics or datasets.

As shown in Table~\ref{table:prun}, applying the pruning module yields consistent improvements across all datasets and nearly all metrics. On LoCoMo, the pruning strategy improves F1 from 20.30 to 22.23 and Rouge1 from 21.36 to 23.28, representing gains of approximately 1.93 and 1.92 points respectively. On PerLTQA, BLEU increases from 19.27 to 20.65 and RougeL improves from 43.23 to 44.78. On LongMemEval-M, pruning boosts F1 by nearly 1 point and RougeL by 1.15 points. Even on LongMemEval-S, where the baseline performance is already strong, pruning still provides noticeable gains across all metrics.

The consistent improvements across diverse metrics and datasets confirm that retrieved memory segments inevitably contain information irrelevant to the query, which introduces noise during response generation. The pruning module addresses this by adaptively retaining only query-relevant content before feeding memories into the LLM. Notably, the gains are more substantial on lexical overlap metrics such as BLEU and Rouge compared to BERTScore, suggesting that pruning primarily helps by removing distracting content that causes the model to generate off-topic tokens, while the overall semantic coherence is less affected.

\subsection{Per-Type Retrieval Strategy Contributions}
\label{app:per_type_strategy}
To investigate the independent contribution of each memory-type-specific retrieval strategy, we conduct fine-grained ablation experiments where only one strategy is activated at a time. Specifically, ``Pure'' denotes the baseline with classification filtering but no retrieval enhancement, ``w/ GM'' applies only the general semantic memory strategy, ``w/ EM'' applies only the episodic memory strategy, ``w/ PM'' applies only the personal semantic memory strategy, and ``w/ All'' activates all three strategies simultaneously.

As shown in Figure~\ref{fig:strategy_recall@1} and Figure~\ref{fig:strategy_recall@3}, each individual strategy contributes positively to retrieval performance across all three datasets. Among the three strategies, the episodic memory strategy consistently yields the largest gains. On LoCoMo, w/ EM improves Recall@1 from 25.13 to 28.25, a gain of 3.12 points, while on LongMemEval-S, it achieves 53.83 compared to the Pure baseline of 50.21. This aligns with our expectation, as episodic memory involves temporal reasoning and structured event elements that benefit most from element-decomposed matching. The personal semantic memory strategy also provides substantial improvements, particularly on PerLTQA where Recall@1 increases from 61.98 to 67.62, reflecting the prevalence of user-specific queries in this dataset. The general semantic memory strategy yields relatively modest gains, which is expected since standard embedding similarity already handles factual knowledge reasonably well.

Crucially, combining all three strategies consistently outperforms any individual one, confirming that the strategies are complementary rather than redundant. The full combination achieves the highest scores across all datasets and metrics, validating the design principle of assigning tailored retrieval strategies to different memory types.

\subsection{Router Architecture Ablation}
\label{app:router_architecture}

Our default memory type router is built on Qwen3-1.7B fine-tuned with LoRA, which naturally raises the question of whether the strong routing performance stems from the model's inherent reasoning capacity or from the quality of the TriMEM training data. To disentangle these two factors, we replace Qwen3-1.7B with three lightweight encoder-only models, BERT, DeBERTa, and RoBERTa, all fine-tuned on the same TriMEM dataset, and evaluate the end-to-end retrieval performance.

As shown in Table~\ref{table:bert_ablation}, all three encoder-based routers achieve competitive results that are remarkably close to the Qwen3-1.7B router. On LongMemEval-S, BERT achieves a Recall@1 of 55.74, only 3.41 points below Qwen3-1.7B, while RoBERTa narrows this gap further to just 1.06 points with a Recall@1 of 58.09. On LoCoMo, BERT reaches 26.54 in Recall@1, and RoBERTa achieves 27.49, both delivering substantial improvements over the baselines reported in the main experiments. A similar pattern is observed on LongMemEval-M, where BERT attains 46.17 in Recall@1 and RoBERTa reaches 49.15, even slightly surpassing Qwen3-1.7B on this dataset.

These results lead to two important conclusions. First, the effectiveness of MemoType's routing mechanism is primarily driven by the quality and annotation precision of the TriMEM dataset rather than the scale or reasoning ability of the router model. Even BERT, with only 110M parameters and no generative capability, can serve as a competent router when trained on well-annotated data. Second, since encoder-only models are significantly faster at inference than generative LLMs, practitioners can opt for lightweight routers such as RoBERTa to achieve near-optimal performance with substantially lower latency, making MemoType more practical for deployment in real-world applications where efficiency is a critical concern.
\begin{figure}
    \centering
    \includegraphics[width=\linewidth]{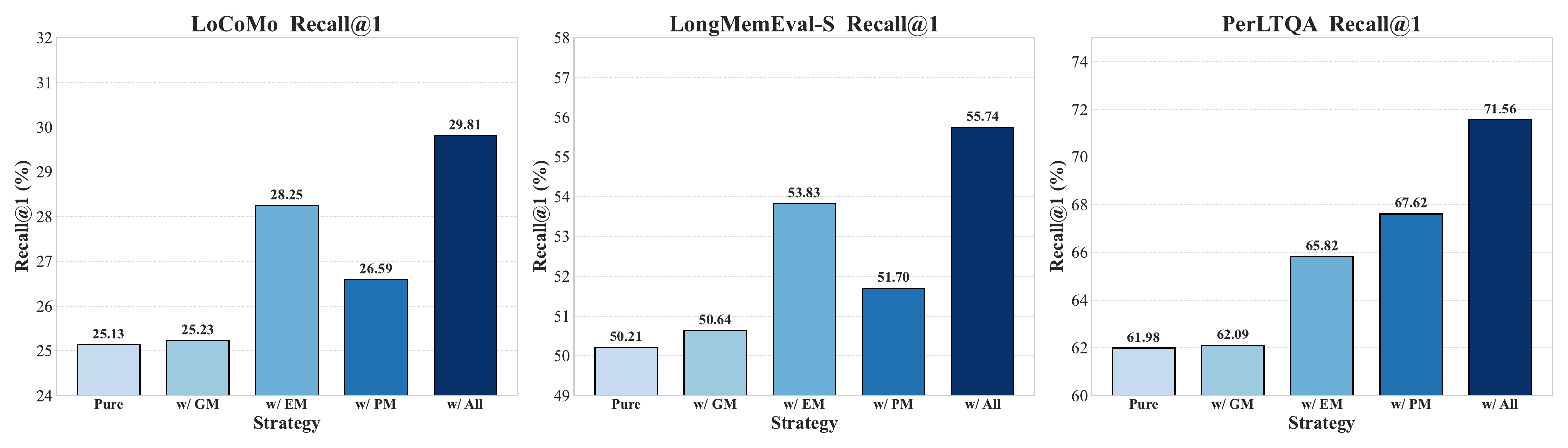}
    \caption{Strategy Ablation on Retrieval Recall@1 across Three Benchmarks.}
    \label{fig:strategy_recall@1}
\end{figure}

\begin{figure}
    \centering
    \includegraphics[width=\linewidth]{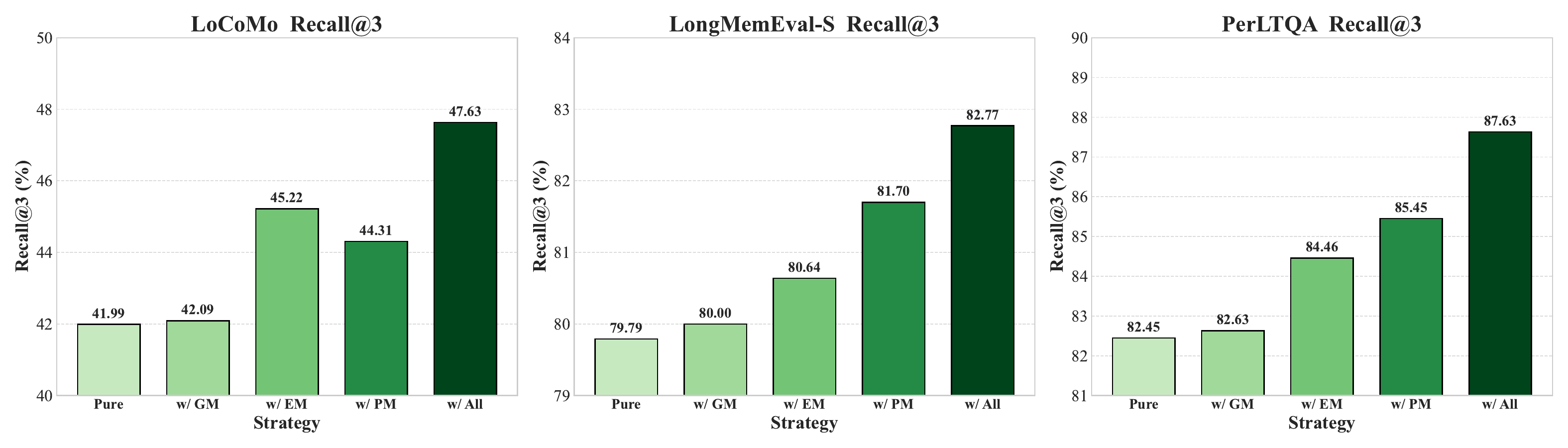}
    \caption{Strategy Ablation on Retrieval Recall@3 across Three Benchmarks.}
    \label{fig:strategy_recall@3}
\end{figure}

\begin{table*}[t]
\caption{{The Ablation Study of BERT Encoder.}}
\label{table:bert_ablation}
\resizebox{\linewidth}{!}{
\renewcommand\arraystretch{1}
\tabcolsep=0.1cm
\scalebox{1.02}{
\begin{tabular}{ccccccccc}
\toprule
{Method}  & {Recall@1} & {NDCG@1} & {Recall@3} & {NDCG@3} & {Recall@5} & {NDCG@5} & {Recall@10} & {NDCG@10}
                                                                     \\ \midrule
 \multicolumn{9}{c}{{LongMemEval-S}}       \\ \midrule
{BERT}    & {55.74}    & {55.74}  & {81.06}    & {63.40}  & {88.72}    & {68.64}  & {93.62}     & {72.54}   \\
{DeBERTa} & {55.32}    & {55.32}  & {81.91}    & {63.19}  & {89.36}    & {68.90}  & {95.74}     & {72.62}   \\
{RoBERTa} & {58.09}    & {58.09}  & {83.83}    & {65.27}  & {90.21}    & {70.70}  & {95.53}     & {74.41}   \\ Qwen3-1.7B & 59.15    & 59.15  & 83.19    & 66.74  & 90.21    & 71.60  & 92.77                                  & 74.85                                \\ \midrule
\multicolumn{9}{c}{{LoCoMo}}                                                                                                                                                                                                                                                        \\ \midrule
{BERT}    & {26.54}    & {26.54}  & {43.15}    & {36.77}  & {50.60}    & {39.79}  & {61.48}     & {43.56}   \\
{DeBERTa} & {26.74}    & {26.74}  & {44.16}    & {38.08}  & {51.71}    & {41.15}  & {62.08}     & {44.56}   \\
{RoBERTa} & {27.49}    & {27.49}  & {43.96}    & {37.46}  & {51.31}    & {40.52}  & {62.13}     & {44.27}   \\ Qwen3-1.7B & 31.07    & 31.07  & 47.73    & 40.97  & 54.08    & 43.66  & 65.01                                  & 47.12                                \\ \midrule
\multicolumn{9}{c}{{LongMemEval-M}}                                                                                                                                                                                                                                                 \\ \midrule
{BERT}    & {46.17}    & {46.17}  & {71.91}    & {52.95}  & {80.21}    & {58.35}  & {87.45}     & {62.29}   \\
{DeBERTa} & {46.38}    & {46.38}  & {68.94}    & {49.73}  & {77.66}    & {55.55}  & {86.81}     & {59.91}   \\
{RoBERTa} & {49.15}    & {49.15}  & {69.57}    & {51.96}  & {79.79}    & {57.52}  & {88.51}     & {62.12}   \\ Qwen3-1.7B & 48.09    & 48.09  & 70.43    & 52.20  & 78.09    & 57.09  & 84.89                                  & 61.27 \\ \bottomrule
\end{tabular}}}
\end{table*}

\subsection{Cost Analysis}
\label{sec:cost_indexing}
We compare the LLM cost of memory construction on four
benchmarks. A-Mem invokes the LLM twice per turn. HippoRAG2 invokes the LLM twice per chunk for
named-entity and triple extraction; since each turn becomes a chunk, the
cost grows linearly with the corpus. MemoType classifies every turn with
a local Qwen3-1.7B router and only invokes the paid LLM on turns routed
as Episodic Memory (EM) for event extraction, so its API cost scales with
$|\text{EM}|$ rather than with the full turn count.
Table~\ref{tab:cost_indexing} reports the result. The fraction of EM
turns is $29.9\%$ on LoCoMo, $8.1\%$ on LongMemEval-S, $2.2\%$ on
LongMemEval-M, and $80.9\%$ on PerLTQA. MemoType is consistently the
cheapest across all four datasets, and the gap widens as the corpus
grows: on LongMemEval-M it reduces the API bill by $146\times$ over
HippoRAG2 and $248\times$ over A-Mem. Even on PerLTQA, where most turns
are EM and the gap is smallest, MemoType still halves the cost of
HippoRAG2 and is more than $3\times$ cheaper than A-Mem. The local
Qwen3-1.7B router is not billed but listed for transparency.
\begin{table*}[t]
\centering
\caption{The cost analysis across four benchmarks. ``Calls'' is the number of paid LLM calls during memory ingestion. Due to the time limits,
$^{\dagger}$HippoRAG2 on LongMemEval-M  is linearly projected from $5/500$ samples.}
\label{tab:cost_indexing}
\resizebox{\linewidth}{!}{
\renewcommand\arraystretch{1}
\tabcolsep=0.1cm
\scalebox{1.02}{
\begin{tabular}{llrrrrrr}
\toprule
Dataset & Method & Calls & Calls/sample & Prompt (M) & Compl. (M) & Total (M) & USD \\
\midrule
\multirow{3}{*}{LoCoMo}
  & A-Mem            & 11{,}764  & 1{,}176.4 & 5.73   & 1.82   & 7.56   & \$1.95 \\
  & HippoRAG2        & 11{,}760  & 1{,}176.0 & 3.97   & 0.33   & 4.30   & \$0.79 \\
  & \textbf{MemoType}& 1{,}756   & 175.6     & 0.29   & 0.08   & 0.37   & \textbf{\$0.09} \\
\midrule
\multirow{3}{*}{LongMemEval-S}
  & A-Mem            & 493{,}574 & 987.1     & 326.67 & 76.50  & 403.18 & \$94.90 \\
  & HippoRAG2        & 492{,}146 & 984.3     & 254.34 & 29.21  & 283.56 & \$55.68 \\
  & \textbf{MemoType}& 19{,}897  & 39.8      & 3.78   & 1.43   & 5.21   & \textbf{\$1.42} \\
\midrule
\multirow{3}{*}{LongMemEval-M}
  & A-Mem                    & 4{,}875{,}958          & 9{,}751.9 & 3{,}222.93 & 755.77 & 3{,}978.70 & \$936.90 \\
  & HippoRAG2$^{\dagger}$    & $\approx$4{,}909{,}400 & 9{,}818.8 & 2{,}523.91 & 283.99 & 2{,}807.90 & \$548.98 \\
  & \textbf{MemoType}        & 52{,}434               & 104.9     & 10.12      & 3.75   & 13.87      & \textbf{\$3.77} \\
\midrule
\multirow{3}{*}{PerLTQA}
  & A-Mem            & 3{,}500   & 109.4     & 2.27   & 0.54   & 2.81   & \$0.67 \\
  & HippoRAG2        & 3{,}488   & 109.0     & 1.76   & 0.30   & 2.05   & \$0.44 \\
  & \textbf{MemoType}& 1{,}416   & 44.2      & 0.49   & 0.19   & 0.68   & \textbf{\$0.19} \\
\bottomrule
\end{tabular}}}
\vspace{2pt}
\footnotesize
\end{table*}

\subsection{Empirical Evidence for Semantic Clustering}
\label{sec:empirical_clustering}

A central premise of Theorem~\ref{thm:precision} is the angular separation assumption (Assumption~\ref{assump:separation}): different memory categories occupy distinct directions in the embedding space. To verify that this assumption is not merely a theoretical convenience but reflects a real property of dialogue memory corpora, we conduct a t-SNE visualization on two benchmarks. Specifically, we randomly sample 500 turns from each of the three memory categories (Episodic Memory, Personal Semantic Memory, and General Semantic Memory) on LongMemEval-S and PerLTQA, encode them with Contriever, and project the resulting embeddings into two dimensions via t-SNE~\citep{wxh_fcmvc_iv, wxh_omvcdr}.

As shown in Fig.~\ref{fig:semantic_clustering}, the three memory types form clearly identifiable clusters on both benchmarks, with each category concentrated in a distinct region of the embedding space. The separation is particularly pronounced between General Semantic Memory and the other two categories, while Episodic Memory and Personal Semantic Memory also exhibit visible boundaries despite their finer-grained semantic distinction. This empirical pattern directly corroborates Assumption~\ref{assump:separation}: the centroids of different categories point in non-identical directions, with non-trivial angular separation $\delta > 0$.

Consequently, the upper bound established in Theorem~\ref{thm:precision} is not vacuous on real corpora---a single shared encoder genuinely faces the geometric tension described in §3, motivating the type-aware retrieval design that processes each cluster through a tailored strategy.

\begin{figure}[t]
\centering
    \includegraphics[width=0.8\linewidth]{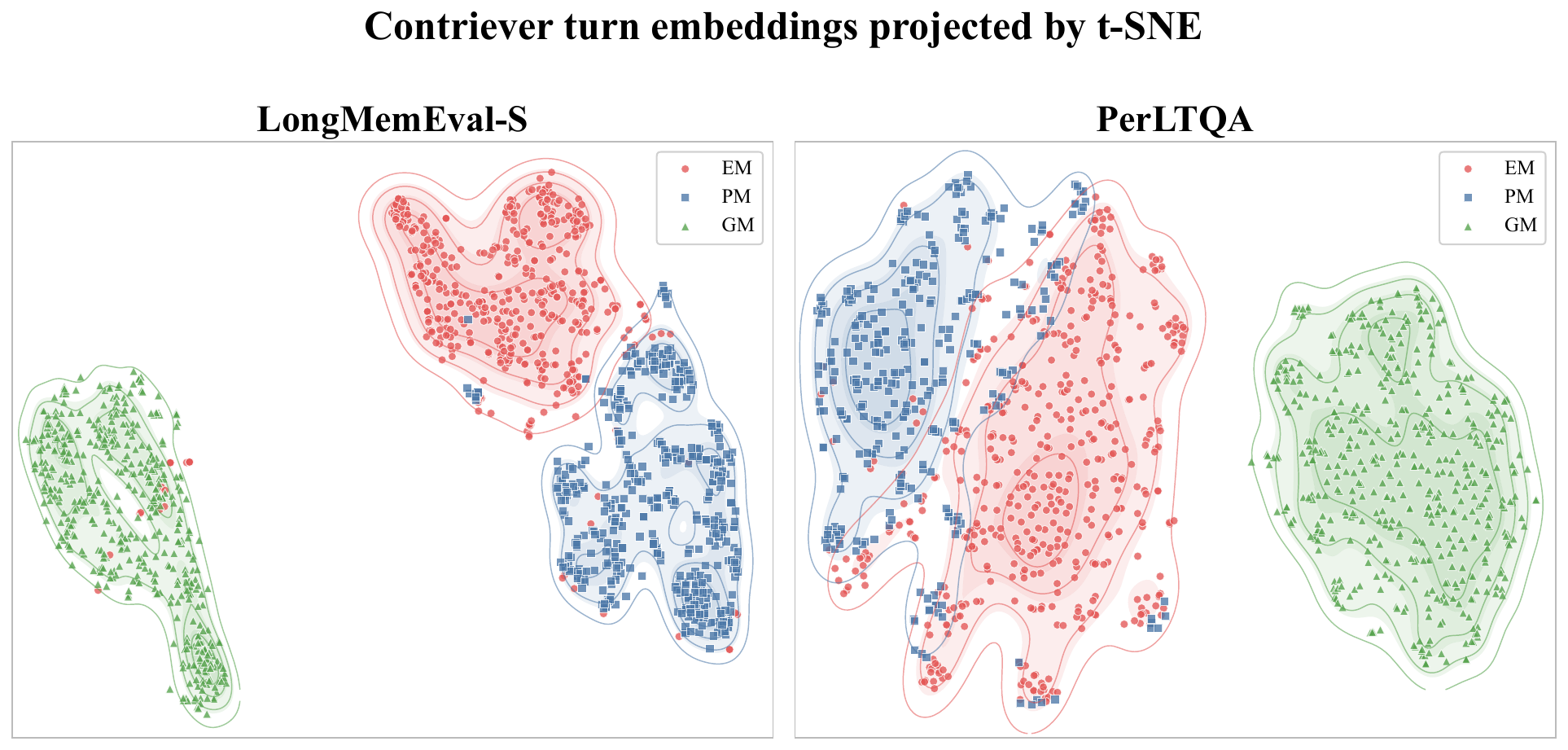}
    
    \caption
          {The Semantic Clustering Evidence with Contriever Embedding on two Benchmarks.}
    \label{fig:semantic_clustering}
    \vspace{-0.5 em}
\end{figure}

\section{Extended Related Work}
\subsection{RAG-based Memory Methods}
A growing body of work equips LLM agents~\citep{xdr_harnessing, jy_cocoa, jy_gainrag, zs_memsearch} with an external memory store
that is written, indexed, and retrieved at inference time, effectively
casting long-term memory as a retrieval-augmented generation (RAG)
problem. Early systems instantiate memory as a vector database of past
turns or summaries and recall semantically relevant entries to condition
generation, often with handcrafted forgetting or salience
heuristics~\citep{recomp, COMEDY, THEANINE, kim2024theanine, personamem, personamem2,memochat,longmem, zyy_rf_mem, memgas, lsc_perma}. To better organize
what is stored, knowledge-centric methods build structured notes or
graphs that link concepts and reasoning traces across sessions, ranging
from cached chains-of-thought~\citep{tim} and agentic note
networks~\citep{a-mem} to neurobiologically inspired hippocampal
indexes~\citep{hipporag}.
A second line treats memory as a tiered architecture inspired by virtual
memory or human cognition, paging information between fast working
context and slower long-term stores under explicit
read/write controls~\citep{memgpt, memos, memoryos,lightmem, bae2022keep, mem0}. Reflection-
and consolidation-based approaches instead periodically rewrite memory:
generative agents distill experience into higher-level
reflections~\citep{generative_agents}, while subsequent work introduces
prospective and retrospective consolidation, scalable extraction
pipelines, or learned writing
policies~\citep{rmm,mem0,nemori}. Across these families, the
core RAG recipe remains the same---encode interaction history into a
retrievable store and surface relevant items on demand---but they differ
substantially in granularity (turn, session, fact, graph), update
discipline (online, batched, sleep-time), and retrieval signal
(embedding similarity, keywords, structural traversal).

\subsection{RL-based Memory Methods}

While most retrieval-augmented memory systems rely on hand-crafted
write/update/retrieve heuristics, a recent line of work casts memory
management itself as a sequential decision problem and trains the
controller with reinforcement learning so that storage and recall are
optimized for downstream task reward rather than surface-level
similarity~\citep{memory-r1,mem-alpha,memagent, rememr1, memcoe}.
A first family learns explicit \emph{memory-operation policies}: an LLM
agent emits structured actions such as \textsc{Add}, \textsc{Update},
\textsc{Delete}, and \textsc{NoOp}, and is fine-tuned with PPO or GRPO
using outcome rewards from QA accuracy, often coupled with a separate
RL-trained answering or retrieval agent~\citep{memory-r1, agemem}.
A second family targets \emph{end-to-end memory construction}: Mem-$\alpha$
trains a multi-component (core/episodic/semantic) memory with tool-style
operations and generalizes from short training contexts to sequences an
order of magnitude longer~\citep{mem-alpha}, while MemAgent extends RL
to multi-conversation training with DAPO-style updates so the agent can
operate on million-token histories~\citep{memagent}. A third strand
focuses on \emph{reward shaping and credit assignment}, since
memory rewards are typically sparse and trajectory-level:
ReMemR1 mixes final-answer rewards with dense step-level signals over
revisitable memory~\citep{rememr1}. These memory-centric RL methods are
closely related to, but distinct from, RL-trained retrieval/search
agents that learn when and how to query an external
corpus~\citep{search-r1, r1-searcher,research}, in that the policy must
additionally decide \emph{what to remember} and \emph{when to forget}.
Across the literature, the common message is that learned memory
controllers consistently outperform fixed heuristics on long-horizon
benchmarks such as LoCoMo, LongMemEval, and MSC.

\section{Prompts}

\clearpage

%%%% Prompt

\definecolor{promptbg}{HTML}{F5F7FA}
\definecolor{promptframe}{HTML}{3B6FA0}
\definecolor{prompttitle}{HTML}{FFFFFF}
 
\begin{figure*}[h]
\begin{promptbox}{Transfer Prompt for User-Assistant Conversation}
 
\textbf{System Role:} You are an advanced text transformation AI. Your task is to process a given conversation between a user and an assistant.
 
\medskip
\textbf{Rules:}
\begin{promptrules}
    \item Replace specific nouns (e.g., occupations, professional fields, locations, dates, names, times, and objects) with alternative but relevant ones, ensuring the topic of the conversation remains unchanged.
    \item Summarize both the user's and assistant's messages.
    \item Limit user's responses to 40 words.
    \item Limit assistant's responses to 80 words.
    \item You can rearrange the order of the sentences to maintain better coherence between them.
\end{promptrules}
 
\medskip
\textbf{Output Format:}
\begin{promptcode}
\ttfamily
\phl{[user]:}~~[content]\\
\phl{[assistant]:}~~[content]
\end{promptcode}
 
\medskip
\textbf{Input:} \{$text$\}
 
\end{promptbox}
\caption{Transfer Prompt for User-Assistant Conversation}
\label{transfer1}
\end{figure*}

\begin{figure*}[h]
\begin{promptbox}{Transfer Prompt for User-User Conversation}

\textbf{System Role:} You are an advanced text transformation AI. Your task is to process a given conversation between users.

\medskip
\textbf{Rules:}
\begin{promptrules}
    \item Replace specific nouns (e.g., names, occupations, professional fields, locations, dates, times, and objects) with alternative but relevant ones, ensuring the topic of the conversation remains unchanged.
    \item You may try rearranging the order of the sentences to maintain better coherence between them.
\end{promptrules}

\medskip
\textbf{Output Format:}
\begin{promptcode}
\ttfamily
\phl{[user name 1]:}~~[content]\\
\phl{[user name 2]:}~~[content]
\end{promptcode}

\medskip
\textbf{Input:} \{$text$\}

\end{promptbox}
\caption{Transfer Prompt for User-User Conversation}
\label{transfer2}
\end{figure*}

\begin{figure*}[h]
\begin{promptbox}{Episodic Memory Classification Prompt}

\textbf{Goal:} You are an advanced AI tasked with determining whether a specific piece of dialogue contains Episodic Memory. Use the following definition, guidelines, and examples to identify whether the provided dialogue qualifies as Episodic Memory.

\medskip
\textbf{Definition:} Episodic Memory refers to specific episodes or events that occurred (are planned to occur, or the thoughts on future plans) at a particular place and time. For the dialogue to qualify as Episodic Memory, it often includes details:
\begin{description}[
    leftmargin=4.8em,
    labelwidth=4.0em,
    labelsep=0.6em,
    align=right,
    font=\normalfont
]
    \item[\phl{When:}] A time, date, time frame, or vague contextual time expressions.
    \item[\phl{Where:}] A location or context.
    \item[\phl{Who:}] Participants.
    \item[\phl{What:}] Actions, or objects involved in the event.
\end{description}

\medskip
\textbf{Additional Guidelines:}
\begin{itemize}[leftmargin=1.5em, itemsep=2pt, label=\phl{\textbullet}]
    \item The time and location information does not always need to be precise. Vague or contextual expressions such as ``when I was driving,'' ``during lunch'', or ``while I was in school,'' can also qualify as time-related or location-related details if they provide sufficient information for the event.
    \item Questions that reveal or imply specific episodes or events information should also be classified as Episodic Memory.
    \item Missing one type of detail does not disqualify the dialogue from being Episodic Memory if the other details are sufficiently explicit to describe an instance.
    \item If the memory only describes general habits, vague knowledge, or patterns, it does not qualify as Episodic Memory.
    \item If the dialogue contains at least one instance of Episodic Memory, the output must be ``Episodic Memory'', regardless of whether other parts of the dialogue do not contain Episodic Memory.
\end{itemize}

\medskip
\textbf{Output Requirements:}
\begin{promptcode}
\ttfamily
If contains Episodic Memory \(\rightarrow\) \phl{Episodic Memory}\\
If does not contain Episodic Memory \(\rightarrow\) \phl{None}\\
\normalfont\textit{Do not include additional context, comments, symbols, or explanations.}
\end{promptcode}

\medskip
\textbf{Input:} \{$text\_to\_be\_classified$\}

\medskip
\textbf{Output:} Now, based on the above instructions, determine whether the input data contains Episodic Memory.

\end{promptbox}
\caption{Episodic Memory Classification Prompt}
\label{em_prompt}
\end{figure*}

\begin{figure*}[h]
\begin{promptbox}{Personal Semantic Memory Classification Prompt}

\textbf{Goal:} You are an advanced AI tasked with determining whether a segment of dialogue contains Personal Semantic Memory. Use the following definition, guidelines, and examples to identify whether the provided dialogue qualifies as Personal Semantic Memory.

\medskip
\textbf{Definition:} Personal Semantic Memory is a form of long-term memory consisting of knowledge specifically related to the person (user or another individual mentioned in the dialogue), such as descriptions of their information, preferences, habits, thoughts, background, emotions, education or work career-related information, and any other knowledge that reflects the individual's identity.

\medskip
\textbf{Additional Guidelines:}
\begin{itemize}[leftmargin=1.5em, itemsep=2pt, label=\phl{\textbullet}]
    \item Content that reveals personal information, preferences, habits, plans, background information, emotions, personal history, education or work career-related information, or any other knowledge that related to the individual's identity must be classified as Personal Semantic Memory.
    \item Statements that express the individual's intentions, work arrangements, or preliminary plans (not include details), even if they are inferred, belong to the category of Personal Semantic Memory.
    \item General knowledge unrelated to personal information, preferences, or emotions do not qualify as Personal Semantic Memory.
    \item Questions that reveal or imply personal details, preferences, or habits (user or another individual), should also be classified as Personal Semantic Memory.
    \item If the dialogue contains at least one instance of Personal Semantic Memory, the output must be ``Personal Semantic Memory'', regardless of whether other parts of the dialogue do not contain Personal Semantic Memory. The unanswered questions or requests in the conversation should not affect the final classification result.
\end{itemize}

\medskip
\textbf{Output Requirements:}
\begin{promptcode}
\ttfamily
If contains Personal Semantic Memory \(\rightarrow\) \phl{Personal Semantic Memory}\\
If does not contain Personal Semantic Memory \(\rightarrow\) \phl{None}\\
\normalfont\textit{Do not include additional context, comments, symbols, or explanations.}
\end{promptcode}

\medskip
\textbf{Input:} \{$text\_to\_be\_classified$\}

\medskip
\textbf{Output:} Now, based on the above instructions, determine whether the input data contains Personal Semantic Memory.

\end{promptbox}
\caption{Personal Semantic Memory Classification Prompt}
\label{pm_prompt}
\end{figure*}

\begin{figure*}[h]
\begin{promptbox}{General Semantic Memory Classification Prompt}

\textbf{Goal:} You are an advanced AI tasked with determining whether a given text contains General Semantic Memory based on the following definition, criteria, and examples.

\medskip
\textbf{Definition:} General Semantic Memory is a form of long-term memory consisting of non-personalized knowledge about the world, including facts, concepts, language, universally applicable methods, and objective descriptions of locations or objects. This includes information that is not tied to any specific individual or event. For information to qualify as General Semantic Memory, it must meet at least one of the following criteria:
\begin{itemize}[leftmargin=1.5em, itemsep=2pt, label=\phl{\textbullet}]
    \item \phl{Non-Personalized Knowledge or Broadly Applicable Suggestions:} It could include general knowledge, factual information, or widely applicable suggestions and methods. Examples: Scientific facts, historical events, cultural concepts, general knowledges, or general advice on common activities.
    \item \phl{Objective Descriptions of Locations, Objects, or Concepts:} Information that provides objective, factual details about locations, objects, or entities. Examples: Descriptions of landmarks, places, or any items.
\end{itemize}

\medskip
\textbf{Additional Guidelines:}
\begin{itemize}[leftmargin=1.5em, itemsep=2pt, label=\phl{\textbullet}]
    \item Focus on analyzing sentences that provide valid information. Dialogue that consists solely of questions without answers should not be used as a basis for General Semantic Memory classification.
    \item The information about a specific event or personalized to the user (e.g., user preferences, personal habits, or individual-specific details) does not qualify as General Semantic Memory.
    \item If the dialogue contains at least one instance of General Semantic Memory, the output must be ``General Semantic Memory'', regardless of whether other parts of the dialogue do not contain General Semantic Memory.
\end{itemize}

\medskip
\textbf{Output Requirements:}
\begin{promptcode}
\ttfamily
If contains General Semantic Memory \(\rightarrow\) \phl{General Semantic Memory}\\
If does not contain General Semantic Memory \(\rightarrow\) \phl{None}\\
\normalfont\textit{Do not include additional context, comments, symbols, or explanations.}
\end{promptcode}

\medskip
\textbf{Input:} \{$text\_to\_be\_classified$\}

\medskip
\textbf{Output:} Now, based on the above instructions, determine whether the input data contains General Semantic Memory.

\end{promptbox}
\caption{General Semantic Memory Classification Prompt}
\label{gm_prompt}
\end{figure*}

\begin{figure*}[h]
\begin{promptbox}{Memory Multi-Classification Prompt}

\textbf{Goal:} You are an advanced AI tasked with classifying a dialogue between a user and a chatbot into one or more of the following memory types: ``Episodic Memory,'' ``Personal Semantic Memory,'' or ``General Semantic Memory.'' Use the following definitions to guide your classification:
\begin{itemize}[leftmargin=1.5em, itemsep=2pt, label=\phl{\textbullet}]
    \item \phl{Episodic Memory:} A form of long-term memory consisting of non-personalized knowledge about the world, including facts, concepts, language, universally applicable methods, and objective descriptions of locations or objects. This includes information that is not tied to any specific individual or event.
    \item \phl{Personal Semantic Memory:} A form of long-term memory consisting of knowledge specifically related to the person (user or another individual mentioned in the dialogue), such as descriptions of their information, preferences, habits, thoughts, background, emotions, education or work career-related information, and any other knowledge that reflects the individual's identity.
    \item \phl{General Semantic Memory:} A form of long-term memory consisting of general knowledge about the world, including facts, concepts, language, universally applicable methods, and objective description. This includes information that is broadly relevant, commonly recognized, or widely recommended.
\end{itemize}

\medskip
\textbf{Output Requirements:} Your output must strictly adhere to one or more of the following memory types, separated by commas if multiple apply:
\begin{promptcode}
\ttfamily
\phl{Episodic Memory}, \phl{Personal Semantic Memory}, \phl{General Semantic Memory}\\[2pt]
If none of the above types apply \(\rightarrow\) \phl{None}\\[2pt]
\normalfont\textit{Do not include any additional words or context beyond the specified output format.}
\end{promptcode}

\noindent Carefully analyze the dialogue and determine the appropriate memory type(s) based on the content and context provided.

\medskip
\textbf{Input:} \{$text\_to\_be\_classified$\}

\medskip
\textbf{Output:} Now, classify the memory type(s) of the input data based on the instructions above.

\end{promptbox}
\caption{Memory Multi-Classification Prompt}
\label{multi_class_prompt}
\end{figure*}

\begin{figure*}[h]
\begin{promptbox}{HyDE Prompt}

Please write a paragraph that answers the question.

\medskip
\textbf{Question:} \{$query$\}

\medskip
\textbf{Output:}

\end{promptbox}
\caption{HyDE Prompt}
\label{hyde_p}
\end{figure*}

\begin{figure*}[h]
\begin{promptbox}{MILL Prompt}

What sub-queries should be searched to answer the following query? Please generate 5 sub-queries with their related passages.

\medskip
\textbf{Question:} \{$query$\}

\medskip
\textit{Only present the subquestion, without any other words and explanation.}

\end{promptbox}
\caption{MILL Prompt}
\label{mill_p}
\end{figure*}

\begin{figure*}[h]
\begin{promptbox}{Query2Doc Prompt}

Write a passage that answers the given query:

\medskip
\textbf{Question:} \{$query$\}

\medskip
\textbf{Output:}

\end{promptbox}
\caption{Query2Doc Prompt}
\label{query2doc_p}
\end{figure*}

\begin{figure*}[h]
\begin{promptbox}{Answer Generation Prompt for User-Assistant Conversation}

\textbf{Goal:} Suppose a user has recent conversation records with assistant. Use your imagination to generate conversation records. The generated conversation records must contain information related to the given query. The conversation must be logically clear and structurally reasonable, representing a discussion on a specific topic rather than a direct recollection of the query. Conversation record should be generated from one of the following perspectives:
\begin{itemize}[leftmargin=1.5em, itemsep=2pt, label=\phl{\textbullet}]
    \item When the query is not an advice-seeking type of question, a sentence can be output that directly provides an answer matching the query, avoiding expressions other than stating the answer.
    \item Instead of providing a direct answer, it can describe the user's preferences, habits, events, or background related to the topic of the query. This is not a direct answer to the query, but should be an additional statement from the user, especially for advice-seeking type queries.
\end{itemize}

\medskip
\textbf{Input Data:} The input is a natural language query posed by the user, typically related to the previous conversations information.

\medskip
\textbf{Output Requirements:}
\begin{itemize}[leftmargin=1.5em, itemsep=2pt, label=\phl{\textbullet}]
    \item The output must not exceed one sentence. You should determine whether the query-related information is mentioned by the user or the assistant.
    \item Ensure that the response contains only the dialogue content of one speaker.
\end{itemize}

\medskip
\textbf{Input:} \{$query\_to\_be\_answered$\}

\medskip
\textbf{Output:} Generate a hypothetical dialogue record.

\end{promptbox}
\caption{Answer Generation Prompt for User-Assistant Conversation}
\label{fake_memory_p1}
\end{figure*}
\begin{figure*}[h]
\begin{promptbox}{Answer Generation Prompt for User-User Conversation}

\textbf{Goal:} Suppose there is a conversation between users. Imagine you are the user mentioned in the query and generate a dialogue in the first person. The generated dialogue record must contain relevant information that answers the given query. The memory must be logical, well-structured, and tailored to the type of question provided.

\medskip
\textbf{Input Data:} The input is a query in natural language that asks a specific question or seeks information.

\medskip
\textbf{Output Requirements:}
\begin{itemize}[leftmargin=1.5em, itemsep=2pt, label=\phl{\textbullet}]
    \item \phl{Completeness:} Ensure that each dialogue memory provides sufficient information to address the query comprehensively.
    \item \phl{Length:} The output must not exceed one to two sentences. Ensure brevity while maintaining clarity and relevance.
    \item \phl{Perspective:} The response should be written from the perspective of the user mentioned in the query, as a single speaker.
\end{itemize}

\medskip
\textbf{Output Format:}
\begin{promptcode}
\ttfamily
\phl{[user name in query]:}~~[response content]
\end{promptcode}

\medskip
\textbf{Input:} \{$query\_to\_be\_answered$\}

\medskip
\textbf{Output:} Generate a hypothetical dialogue record.

\end{promptbox}
\caption{Answer Generation Prompt for User-User Conversation}
\label{fake_memory_p2}
\end{figure*}

\begin{figure*}[h]
\begin{promptbox}{Query Event Extract Prompt}

\textbf{Instruction:} Identify and extract the elements of events mentioned in the text below. The text was recorded on \textbf{\{session\_date\}}. If the dataset does not record a session date this field is filled with the literal string ``unknown''; in that case skip any relative-time resolution.

The text may be either (a)~a turn from a chat between a user and an AI assistant, or (b)~a piece of structured personal-knowledge content (profile, social relationships, education, profession, \ldots). Many inputs do \textbf{not} describe a real event at all (chit-chat, opinions, task requests, stable attributes); for those cases follow the strict N/A rules below.

\medskip
\textbf{Output Format:} Use exactly the following structured format. Repeat the block once per distinct event, separated by a blank line. If nothing event-like is present, emit one block with all three fields = `N/A'.
\begin{promptcode}
\ttfamily
{[Event n]}\\
\phl{Time:}~~[Absolute date in YYYY-MM-DD, YYYY-MM, YYYY, or `N/A']\\
\phl{Person(s):}~~[Person(s) involved, or `N/A']\\
\phl{Location:}~~[Location, or `N/A']
\end{promptcode}

\medskip
\textbf{When to Output Everything `N/A':} Emit a single all-N/A block when the input is:
\begin{itemize}[leftmargin=1.5em, itemsep=2pt, label=\phl{\textbullet}]
    \item A question to the assistant (``How do I \ldots?'', ``What is \ldots?'')
    \item A request to the assistant (``Write me a poem about \ldots'')
    \item A generic opinion or feeling not tied to an event (``I think Python is great'')
    \item A stable preference or habit (``I usually drink coffee in the morning'')
    \item A hypothetical / future plan with no anchored time
\end{itemize}

\medskip
\textbf{Time --- Strict Rules:}
\begin{itemize}[leftmargin=1.5em, itemsep=2pt, label=\phl{\textbullet}]
    \item Resolve every relative time expression (``yesterday'', ``last week'', ``two days ago'', ``this morning'', ``next month'', \ldots) against \{session\_date\} and output the absolute date. Never keep the relative phrase in Time.
    \item If only year + month is clear, output \texttt{YYYY-MM}; year only, output \texttt{YYYY}; otherwise `N/A'.
\end{itemize}

\medskip
\textbf{Person(s) --- Strict Rules:}
\begin{itemize}[leftmargin=1.5em, itemsep=2pt, label=\phl{\textbullet}]
    \item Replace every literal ``User''/``user'' with ``I''.
    \item Never include ``Assistant'', ``AI'', ``Bot'', ``ChatGPT'', ``the model'', or any other non-human dialog role.
    \item Copy named third persons verbatim. Never invent names.
\end{itemize}

\medskip
\textbf{Location --- Strict Rules:} Fill only when an explicit place is mentioned in the text.

\medskip
\textbf{Conversation Date:} \{$session\_date$\}

\medskip
\textbf{Input Data:} \{$text\_to\_be\_processed$\}

\end{promptbox}
\caption{Query Event Extract Prompt}
\label{query_event_p}
\end{figure*}

\begin{figure*}[h]
\begin{promptbox}{Conversation Event Extract Prompt}

\textbf{Instruction:} Identify and extract the event- or attribute-related elements mentioned in a query. The query is asked on \textbf{\{question\_date\}}. If the dataset does not record a query date this field is filled with the literal string ``unknown''; in that case skip any relative-time resolution and treat Time accordingly.

A query may target either (a)~a concrete event referenced in the dialog (a trip, a graduation, an action the user took, a plan the user has), or (b)~a stable personal attribute about the speaker or a third person named in the query (occupation, hobby, residence, relationship, etc.). Both cases are handled with the same four output fields below.

\medskip
\textbf{Output Format} (exactly four lines, in this order):
\begin{promptcode}
\ttfamily
\phl{Time:}~~[Absolute date in YYYY-MM-DD, YYYY-MM, YYYY, or `N/A']\\
\phl{Person(s):}~~[Person(s) involved, or `N/A']\\
\phl{Location:}~~[Location, or `N/A']\\
\phl{Event:}~~[Concise event or attribute description]
\end{promptcode}

\medskip
\textbf{Time --- Strict Rules:}
\begin{itemize}[leftmargin=1.5em, itemsep=2pt, label=\phl{\textbullet}]
    \item Resolve every relative time expression (``yesterday'', ``last week'', ``two days ago'', ``this morning'', \ldots) against \{question\_date\} and output the absolute date. Never keep a relative phrase in Time.
    \item Output `N/A' if the query, combined with the question date, does not pin down a specific date / month / year.
    \item Never invent a date that is not stated or implied by the query.
    \item For pure attribute questions (e.g., ``What is X's occupation?''), Time is almost always `N/A'.
\end{itemize}

\medskip
\textbf{Query Date:} \{$question\_date$\}

\medskip
\textbf{Input Data:} \{$text\_to\_be\_processed$\}

\end{promptbox}
\caption{Conversation Event Extract Prompt}
\label{conversation_event_p}
\end{figure*}

\begin{figure*}[h]
\begin{promptbox}{Conversation Keywords Expansion Prompt}

\textbf{Instruction:} Please provide additional search keywords for each of the key aspects of the following queries that make it easier to find the relevant documents. Do not include irrelevant text and separate the search topics with commas.

\medskip
\textbf{Input:} Query is: \{$query$\}

\end{promptbox}
\caption{Conversation Keywords Expansion Prompt}
\label{keyword_p}
\end{figure*}

\begin{figure*}[h]
\begin{promptbox}{Memory Pruning Prompt}

\textbf{Role:} You are a dialog assistant given a numbered, dated list of memories and a user query. You will (a)~answer the query using only information that can be supported by the memories, and (b)~explicitly cite the memory indices that you actually used.

\medskip
\textbf{Inputs:}
\begin{itemize}[leftmargin=1.5em, itemsep=2pt, label=\phl{\textbullet}]
    \item \phl{Query date:} The date on which the user asks the question. May be the literal string ``unknown'' if the dataset does not record a query date; in that case skip any temporal reasoning.
    \item \phl{Question:} The user query.
    \item \phl{Memory:} A numbered list. Each item starts with the header \texttt{[Memory <i>] (session\_date=<date>)} followed by the original dialog text. \texttt{session\_date} may also be ``unknown''.
\end{itemize}

\medskip
\textbf{Procedure:}
\begin{promptrules}
    \item Read every memory carefully. When the question is time-sensitive (e.g., ``last week'', ``yesterday'', ``since May''), use the per-memory \texttt{session\_date} together with the Query date to ground relative temporal expressions before deciding which memories support the answer.
    \item Produce a single concise answer using ONLY information that is supported by the memories. If no memory supports the question, answer ``I don't know'' rather than fabricating.
    \item List the 1-based memory indices that you actually relied on, taken verbatim from the \texttt{[Memory <i>]} headers.
\end{promptrules}

\medskip
\textbf{Output Format} (exactly two lines, nothing else):
\begin{promptcode}
\ttfamily
\phl{Answer:}~~<your concise answer>\\
\phl{References:}~~[i, j, k]
\end{promptcode}

\medskip
\textbf{Rules for References:}
\begin{itemize}[leftmargin=1.5em, itemsep=2pt, label=\phl{\textbullet}]
    \item Use a JSON-style list of integers, e.g., \texttt{References: [1, 4]} or \texttt{References: []} when no memory supports the answer.
    \item Never invent an index that is not present in the Memory list.
    \item Do not include duplicate indices and do not wrap them in quotes.
\end{itemize}

\medskip
\textbf{Query date:} \{$query\_date$\}

\textbf{Question:} \{$question$\}

\textbf{Memory:} \{$context$\}

\end{promptbox}
\caption{Memory Pruning Prompt}
\label{pruning_p}
\end{figure*}

\begin{figure*}[h]
\begin{promptbox}{Response Prompt}

\textbf{Role:} You are an intelligent dialog bot. You are shown a small set of Related Evidences (each annotated with a \texttt{session\_date}) and the user's query (annotated with a Query date). The evidences have already been filtered down by an earlier reference-selection step, so every item below is expected to be relevant.

\medskip
Read, understand, and synthesize the evidences, then output ONE concise, coherent and helpful answer. Provide the answer text directly, without prefixes, prologues, or any extra commentary.

\medskip
\textbf{Query date:} \{$query\_date$\}

\textbf{Context:} \{$context$\}

\textbf{Question:} \{$question$\}

\end{promptbox}
\caption{Response Prompt, which follows \citep{memochat,secom}}
\label{response_p}
\end{figure*}

\begin{figure*}[h]
\begin{promptbox}{GPT Judge Prompt}

I will give you a question, a reference answer, and a response from a model. Please answer [[yes]] if the response contains the reference answer. Otherwise, answer [[no]].

If the response is equivalent to the correct answer or contains all the intermediate steps to get the reference answer, you should also answer [[yes]]. If the response only contains a subset of the information required by the answer, answer [[no]].

\medskip
\begin{promptcode}
\ttfamily
\phl{[User Question]}\\
\{$question$\}\\[4pt]
\phl{[The Start of Reference Answer]}\\
\{$answer$\}\\
\phl{[The End of Reference Answer]}\\[4pt]
\phl{[The Start of Model's Response]}\\
\{$response$\}\\
\phl{[The End of Model's Response]}
\end{promptcode}

\medskip
Is the model response correct? Answer \texttt{[[yes]]} or \texttt{[[no]]} only.

\end{promptbox}
\caption{GPT Judge Prompt, which follows \citep{longmemeval,zheng2023judging}}
\label{g4j_p}
\end{figure*}

\clearpage

%%%%%%%%%%%%%%%%%%%%%%%%%%%%%%%%%%%%%%%%%%%%%%%%%%%%%%%%%%%%

\newpage
\section*{NeurIPS Paper Checklist}

\begin{enumerate}

\item {\bf Claims}
    \item[] Question: Do the main claims made in the abstract and introduction accurately reflect the paper's contributions and scope?
    \item[] Answer: \answerYes{} % Replace by \answerYes{}, \answerNo{}, or \answerNA{}.
    \item[] Justification: The main claims are stated in the abstract and introduction
    \item[] Guidelines:
    \begin{itemize}
        \item The answer \answerNA{} means that the abstract and introduction do not include the claims made in the paper.
        \item The abstract and/or introduction should clearly state the claims made, including the contributions made in the paper and important assumptions and limitations. A \answerNo{} or \answerNA{} answer to this question will not be perceived well by the reviewers. 
        \item The claims made should match theoretical and experimental results, and reflect how much the results can be expected to generalize to other settings. 
        \item It is fine to include aspirational goals as motivation as long as it is clear that these goals are not attained by the paper. 
    \end{itemize}

\item {\bf Limitations}
    \item[] Question: Does the paper discuss the limitations of the work performed by the authors?
    \item[] Answer: \answerYes{} % Replace by \answerYes{}, \answerNo{}, or \answerNA{}.
    \item[] Justification: Please refer to Section 6.
    \item[] Guidelines:
    \begin{itemize}
        \item The answer \answerNA{} means that the paper has no limitation while the answer \answerNo{} means that the paper has limitations, but those are not discussed in the paper. 
        \item The authors are encouraged to create a separate ``Limitations'' section in their paper.
        \item The paper should point out any strong assumptions and how robust the results are to violations of these assumptions (e.g., independence assumptions, noiseless settings, model well-specification, asymptotic approximations only holding locally). The authors should reflect on how these assumptions might be violated in practice and what the implications would be.
        \item The authors should reflect on the scope of the claims made, e.g., if the approach was only tested on a few datasets or with a few runs. In general, empirical results often depend on implicit assumptions, which should be articulated.
        \item The authors should reflect on the factors that influence the performance of the approach. For example, a facial recognition algorithm may perform poorly when image resolution is low or images are taken in low lighting. Or a speech-to-text system might not be used reliably to provide closed captions for online lectures because it fails to handle technical jargon.
        \item The authors should discuss the computational efficiency of the proposed algorithms and how they scale with dataset size.
        \item If applicable, the authors should discuss possible limitations of their approach to address problems of privacy and fairness.
        \item While the authors might fear that complete honesty about limitations might be used by reviewers as grounds for rejection, a worse outcome might be that reviewers discover limitations that aren't acknowledged in the paper. The authors should use their best judgment and recognize that individual actions in favor of transparency play an important role in developing norms that preserve the integrity of the community. Reviewers will be specifically instructed to not penalize honesty concerning limitations.
    \end{itemize}

\item {\bf Theory assumptions and proofs}
    \item[] Question: For each theoretical result, does the paper provide the full set of assumptions and a complete (and correct) proof?
    \item[] Answer:  \answerYes{} % Replace by \answerYes{}, \answerNo{}, or \answerNA{}.
    \item[] Justification: We have given the full assumptions and complete proofs.
    \item[] Guidelines:
    \begin{itemize}
        \item The answer \answerNA{} means that the paper does not include theoretical results. 
        \item All the theorems, formulas, and proofs in the paper should be numbered and cross-referenced.
        \item All assumptions should be clearly stated or referenced in the statement of any theorems.
        \item The proofs can either appear in the main paper or the supplemental material, but if they appear in the supplemental material, the authors are encouraged to provide a short proof sketch to provide intuition. 
        \item Inversely, any informal proof provided in the core of the paper should be complemented by formal proofs provided in appendix or supplemental material.
        \item Theorems and Lemmas that the proof relies upon should be properly referenced. 
    \end{itemize}

    \item {\bf Experimental result reproducibility}
    \item[] Question: Does the paper fully disclose all the information needed to reproduce the main experimental results of the paper to the extent that it affects the main claims and/or conclusions of the paper (regardless of whether the code and data are provided or not)?
    \item[] Answer: \answerYes{} % Replace by \answerYes{}, \answerNo{}, or \answerNA{}.
    \item[] Justification: We have included all the information needed to reproduce the results.
    \item[] Guidelines:
    \begin{itemize}
        \item The answer \answerNA{} means that the paper does not include experiments.
        \item If the paper includes experiments, a \answerNo{} answer to this question will not be perceived well by the reviewers: Making the paper reproducible is important, regardless of whether the code and data are provided or not.
        \item If the contribution is a dataset and\slash or model, the authors should describe the steps taken to make their results reproducible or verifiable. 
        \item Depending on the contribution, reproducibility can be accomplished in various ways. For example, if the contribution is a novel architecture, describing the architecture fully might suffice, or if the contribution is a specific model and empirical evaluation, it may be necessary to either make it possible for others to replicate the model with the same dataset, or provide access to the model. In general. releasing code and data is often one good way to accomplish this, but reproducibility can also be provided via detailed instructions for how to replicate the results, access to a hosted model (e.g., in the case of a large language model), releasing of a model checkpoint, or other means that are appropriate to the research performed.
        \item While NeurIPS does not require releasing code, the conference does require all submissions to provide some reasonable avenue for reproducibility, which may depend on the nature of the contribution. For example
        \begin{enumerate}
            \item If the contribution is primarily a new algorithm, the paper should make it clear how to reproduce that algorithm.
            \item If the contribution is primarily a new model architecture, the paper should describe the architecture clearly and fully.
            \item If the contribution is a new model (e.g., a large language model), then there should either be a way to access this model for reproducing the results or a way to reproduce the model (e.g., with an open-source dataset or instructions for how to construct the dataset).
            \item We recognize that reproducibility may be tricky in some cases, in which case authors are welcome to describe the particular way they provide for reproducibility. In the case of closed-source models, it may be that access to the model is limited in some way (e.g., to registered users), but it should be possible for other researchers to have some path to reproducing or verifying the results.
        \end{enumerate}
    \end{itemize}

\item {\bf Open access to data and code}
    \item[] Question: Does the paper provide open access to the data and code, with sufficient instructions to faithfully reproduce the main experimental results, as described in supplemental material?
    \item[] Answer: \answerYes{} % Replace by \answerYes{}, \answerNo{}, or \answerNA{}.
    \item[] Justification: We have released the code. In addition, the datasets used in our work are all public datasets.
    \item[] Guidelines:
    \begin{itemize}
        \item The answer \answerNA{} means that paper does not include experiments requiring code.
        \item Please see the NeurIPS code and data submission guidelines (\url{https://neurips.cc/public/guides/CodeSubmissionPolicy}) for more details.
        \item While we encourage the release of code and data, we understand that this might not be possible, so \answerNo{} is an acceptable answer. Papers cannot be rejected simply for not including code, unless this is central to the contribution (e.g., for a new open-source benchmark).
        \item The instructions should contain the exact command and environment needed to run to reproduce the results. See the NeurIPS code and data submission guidelines (\url{https://neurips.cc/public/guides/CodeSubmissionPolicy}) for more details.
        \item The authors should provide instructions on data access and preparation, including how to access the raw data, preprocessed data, intermediate data, and generated data, etc.
        \item The authors should provide scripts to reproduce all experimental results for the new proposed method and baselines. If only a subset of experiments are reproducible, they should state which ones are omitted from the script and why.
        \item At submission time, to preserve anonymity, the authors should release anonymized versions (if applicable).
        \item Providing as much information as possible in supplemental material (appended to the paper) is recommended, but including URLs to data and code is permitted.
    \end{itemize}

\item {\bf Experimental setting/details}
    \item[] Question: Does the paper specify all the training and test details (e.g., data splits, hyperparameters, how they were chosen, type of optimizer) necessary to understand the results?
    \item[] Answer: \answerYes{} % Replace by \answerYes{}, \answerNo{}, or \answerNA{}.
    \item[] Justification:  Please refer to the Implementation Details Section.
    \item[] Guidelines: 
    \begin{itemize}
        \item The answer \answerNA{} means that the paper does not include experiments.
        \item The experimental setting should be presented in the core of the paper to a level of detail that is necessary to appreciate the results and make sense of them.
        \item The full details can be provided either with the code, in appendix, or as supplemental material.
    \end{itemize}

\item {\bf Experiment statistical significance}
    \item[] Question: Does the paper report error bars suitably and correctly defined or other appropriate information about the statistical significance of the experiments?
    \item[] Answer: \answerYes{} % Replace by \answerYes{}, \answerNo{}, or \answerNA{}.
    \item[] Justification: We conduct the two-sided t-test on our results. 
    \item[] Guidelines:
    \begin{itemize}
        \item The answer \answerNA{} means that the paper does not include experiments.
        \item The authors should answer \answerYes{} if the results are accompanied by error bars, confidence intervals, or statistical significance tests, at least for the experiments that support the main claims of the paper.
        \item The factors of variability that the error bars are capturing should be clearly stated (for example, train/test split, initialization, random drawing of some parameter, or overall run with given experimental conditions).
        \item The method for calculating the error bars should be explained (closed form formula, call to a library function, bootstrap, etc.)
        \item The assumptions made should be given (e.g., Normally distributed errors).
        \item It should be clear whether the error bar is the standard deviation or the standard error of the mean.
        \item It is OK to report 1-sigma error bars, but one should state it. The authors should preferably report a 2-sigma error bar than state that they have a 96\% CI, if the hypothesis of Normality of errors is not verified.
        \item For asymmetric distributions, the authors should be careful not to show in tables or figures symmetric error bars that would yield results that are out of range (e.g., negative error rates).
        \item If error bars are reported in tables or plots, the authors should explain in the text how they were calculated and reference the corresponding figures or tables in the text.
    \end{itemize}

\item {\bf Experiments compute resources}
    \item[] Question: For each experiment, does the paper provide sufficient information on the computer resources (type of compute workers, memory, time of execution) needed to reproduce the experiments?
    \item[] Answer: \answerYes{} % Replace by \answerYes{}, \answerNo{}, or \answerNA{}.
    \item[] Justification: Please refer to Implementation Details.
    \item[] Guidelines:
    \begin{itemize}
        \item The answer \answerNA{} means that the paper does not include experiments.
        \item The paper should indicate the type of compute workers CPU or GPU, internal cluster, or cloud provider, including relevant memory and storage.
        \item The paper should provide the amount of compute required for each of the individual experimental runs as well as estimate the total compute. 
        \item The paper should disclose whether the full research project required more compute than the experiments reported in the paper (e.g., preliminary or failed experiments that didn't make it into the paper). 
    \end{itemize}
    
\item {\bf Code of ethics}
    \item[] Question: Does the research conducted in the paper conform, in every respect, with the NeurIPS Code of Ethics \url{https://neurips.cc/public/EthicsGuidelines}?
    \item[] Answer: \answerYes{} % Replace by \answerYes{}, \answerNo{}, or \answerNA{}.
    \item[] Justification: We conform to the NeurIPS Code of Ethics.
    \item[] Guidelines:
    \begin{itemize}
        \item The answer \answerNA{} means that the authors have not reviewed the NeurIPS Code of Ethics.
        \item If the authors answer \answerNo, they should explain the special circumstances that require a deviation from the Code of Ethics.
        \item The authors should make sure to preserve anonymity (e.g., if there is a special consideration due to laws or regulations in their jurisdiction).
    \end{itemize}

\item {\bf Broader impacts}
    \item[] Question: Does the paper discuss both potential positive societal impacts and negative societal impacts of the work performed?
    \item[] Answer: \answerYes{} % Replace by \answerYes{}, \answerNo{}, or \answerNA{}.
    \item[] Justification: We discuss the impacts in the Introduction section.
    \item[] Guidelines:
    \begin{itemize}
        \item The answer \answerNA{} means that there is no societal impact of the work performed.
        \item If the authors answer \answerNA{} or \answerNo, they should explain why their work has no societal impact or why the paper does not address societal impact.
        \item Examples of negative societal impacts include potential malicious or unintended uses (e.g., disinformation, generating fake profiles, surveillance), fairness considerations (e.g., deployment of technologies that could make decisions that unfairly impact specific groups), privacy considerations, and security considerations.
        \item The conference expects that many papers will be foundational research and not tied to particular applications, let alone deployments. However, if there is a direct path to any negative applications, the authors should point it out. For example, it is legitimate to point out that an improvement in the quality of generative models could be used to generate Deepfakes for disinformation. On the other hand, it is not needed to point out that a generic algorithm for optimizing neural networks could enable people to train models that generate Deepfakes faster.
        \item The authors should consider possible harms that could arise when the technology is being used as intended and functioning correctly, harms that could arise when the technology is being used as intended but gives incorrect results, and harms following from (intentional or unintentional) misuse of the technology.
        \item If there are negative societal impacts, the authors could also discuss possible mitigation strategies (e.g., gated release of models, providing defenses in addition to attacks, mechanisms for monitoring misuse, mechanisms to monitor how a system learns from feedback over time, improving the efficiency and accessibility of ML).
    \end{itemize}
    
\item {\bf Safeguards}
    \item[] Question: Does the paper describe safeguards that have been put in place for responsible release of data or models that have a high risk for misuse (e.g., pre-trained language models, image generators, or scraped datasets)?
    \item[] Answer: \answerNA{} % Replace by \answerYes{}, \answerNo{}, or \answerNA{}.
    \item[] Justification: All datasets used in our work are public datasets.
    \item[] Guidelines:
    \begin{itemize}
        \item The answer \answerNA{} means that the paper poses no such risks.
        \item Released models that have a high risk for misuse or dual-use should be released with necessary safeguards to allow for controlled use of the model, for example by requiring that users adhere to usage guidelines or restrictions to access the model or implementing safety filters. 
        \item Datasets that have been scraped from the Internet could pose safety risks. The authors should describe how they avoided releasing unsafe images.
        \item We recognize that providing effective safeguards is challenging, and many papers do not require this, but we encourage authors to take this into account and make a best faith effort.
    \end{itemize}

\item {\bf Licenses for existing assets}
    \item[] Question: Are the creators or original owners of assets (e.g., code, data, models), used in the paper, properly credited and are the license and terms of use explicitly mentioned and properly respected?
    \item[] Answer: \answerYes{} % Replace by \answerYes{}, \answerNo{}, or \answerNA{}.
    \item[] Justification: We properly cite the dataset and models used in our work.
    \item[] Guidelines:
    \begin{itemize}
        \item The answer \answerNA{} means that the paper does not use existing assets.
        \item The authors should cite the original paper that produced the code package or dataset.
        \item The authors should state which version of the asset is used and, if possible, include a URL.
        \item The name of the license (e.g., CC-BY 4.0) should be included for each asset.
        \item For scraped data from a particular source (e.g., website), the copyright and terms of service of that source should be provided.
        \item If assets are released, the license, copyright information, and terms of use in the package should be provided. For popular datasets, \url{paperswithcode.com/datasets} has curated licenses for some datasets. Their licensing guide can help determine the license of a dataset.
        \item For existing datasets that are re-packaged, both the original license and the license of the derived asset (if it has changed) should be provided.
        \item If this information is not available online, the authors are encouraged to reach out to the asset's creators.
    \end{itemize}

\item {\bf New assets}
    \item[] Question: Are new assets introduced in the paper well documented and is the documentation provided alongside the assets?
    \item[] Answer: \answerYes{} % Replace by \answerYes{}, \answerNo{}, or \answerNA{}.
    \item[] Justification: We have given the descriptions in our paper.
    \item[] Guidelines:
    \begin{itemize}
        \item The answer \answerNA{} means that the paper does not release new assets.
        \item Researchers should communicate the details of the dataset\slash code\slash model as part of their submissions via structured templates. This includes details about training, license, limitations, etc. 
        \item The paper should discuss whether and how consent was obtained from people whose asset is used.
        \item At submission time, remember to anonymize your assets (if applicable). You can either create an anonymized URL or include an anonymized zip file.
    \end{itemize}

\item {\bf Crowdsourcing and research with human subjects}
    \item[] Question: For crowdsourcing experiments and research with human subjects, does the paper include the full text of instructions given to participants and screenshots, if applicable, as well as details about compensation (if any)? 
    \item[] Answer: \answerNA{} % Replace by \answerYes{}, \answerNo{}, or \answerNA{}.
    \item[] Justification: No crowdsourcing experiments are conducted in this work.
    \item[] Guidelines:
    \begin{itemize}
        \item The answer \answerNA{} means that the paper does not involve crowdsourcing nor research with human subjects.
        \item Including this information in the supplemental material is fine, but if the main contribution of the paper involves human subjects, then as much detail as possible should be included in the main paper. 
        \item According to the NeurIPS Code of Ethics, workers involved in data collection, curation, or other labor should be paid at least the minimum wage in the country of the data collector. 
    \end{itemize}

\item {\bf Institutional review board (IRB) approvals or equivalent for research with human subjects}
    \item[] Question: Does the paper describe potential risks incurred by study participants, whether such risks were disclosed to the subjects, and whether Institutional Review Board (IRB) approvals (or an equivalent approval/review based on the requirements of your country or institution) were obtained?
    \item[] Answer: \answerNA{} % Replace by \answerYes{}, \answerNo{}, or \answerNA{}.
    \item[] Justification: No potential risk is found in this work.
    \item[] Guidelines:
    \begin{itemize}
        \item The answer \answerNA{} means that the paper does not involve crowdsourcing nor research with human subjects.
        \item Depending on the country in which research is conducted, IRB approval (or equivalent) may be required for any human subjects research. If you obtained IRB approval, you should clearly state this in the paper. 
        \item We recognize that the procedures for this may vary significantly between institutions and locations, and we expect authors to adhere to the NeurIPS Code of Ethics and the guidelines for their institution. 
        \item For initial submissions, do not include any information that would break anonymity (if applicable), such as the institution conducting the review.
    \end{itemize}

\item {\bf Declaration of LLM usage}
    \item[] Question: Does the paper describe the usage of LLMs if it is an important, original, or non-standard component of the core methods in this research? Note that if the LLM is used only for writing, editing, or formatting purposes and does \emph{not} impact the core methodology, scientific rigor, or originality of the research, declaration is not required.
    %this research? 
    \item[] Answer: \answerNA{} % Replace by \answerYes{}, \answerNo{}, or \answerNA{}.
    \item[] Justification: LLM usage does not impact the core methodology, scientific rigorousness, or originality of the research.
    \item[] Guidelines:
    \begin{itemize}
        \item The answer \answerNA{} means that the core method development in this research does not involve LLMs as any important, original, or non-standard components.
        \item Please refer to our LLM policy in the NeurIPS handbook for what should or should not be described.
    \end{itemize}

\end{enumerate}

\end{document}